\def\Ff{{\mathcal F}}
\def\RR{{\mathbb R}}
\def\PP{{\mathbb P}}
\def\EE{{\mathbb E}}
\def\XX{{\mathbb{X}}}
\def\YY{{\mathbb{Y}}}
\def\Um{{U_*}}
\spnewtheorem{assumption}[theorem]{Assumption}{\bf}{}
\newcommand{\JB}[1]{#1}
\newcommand{\GB}[1]{#1}
\begin{document}
\mainmatter              

\title{Nonasymptotic \GB{one-} and two-sample tests in high dimension \GB{with unknown covariance structure}}

\author{Gilles Blanchard \and Jean-Baptiste Fermanian}

\titlerunning{Nonasymptotic two-sample tests}  
\author{Gilles Blanchard\inst{1,2} \and Jean-Baptiste Fermanian\inst{1,3}}
\authorrunning{G. Blanchard and J-B. Fermanian} 

\institute{Institut de Mathématiques d'Orsay, CNRS, Université Paris-Saclay
\and Inria \and
École Normale Supérieure de Rennes}

\maketitle              

\begin{abstract}
    Let $\XX = (X_i)_{1\leq i \leq n}$ be an i.i.d. sample of square-integrable variables in $\mathbb{R}^d$, \GB{with common expectation $\mu$
      and covariance matrix $\Sigma$, both unknown.} We consider the problem of testing if $\mu$ is $\eta$-close to zero, i.e. $\|\mu\| \leq \eta $ against $\|\mu\| \geq (\eta + \delta)$; we also tackle the more general two-sample mean closeness
    (also known as {\em relevant difference}) testing problem. The aim of this paper is to obtain nonasymptotic upper and lower bounds on the minimal separation distance
  $\delta$ such that we can control both the Type~I and Type~II errors
  at a given level. The main technical tools are concentration inequalities, first for
  a suitable estimator of $\norm{\mu}^2$ used a test statistic,
  and secondly for estimating the operator and Frobenius norms of
  $\Sigma$ coming into the quantiles of said
  test statistic. These properties are obtained for Gaussian and bounded distributions. A particular attention is given to the dependence in the pseudo-dimension $d_*$ of the distribution,
  defined as $d_* := \norm{\Sigma}_2^2/\norm{\Sigma}_\infty^2$. In particular, for $\eta=0$, the minimum
  separation distance is ${\Theta}( d_*^{\nicefrac{1}{4}}\sqrt{\norm{\Sigma}_\infty/n})$, in
  contrast with the minimax estimation distance for $\mu$, which is
  ${\Theta}(d_e^{\nicefrac{1}{2}}\sqrt{\norm{\Sigma}_\infty/n})$ (where $d_e:=\norm{\Sigma}_1/\norm{\Sigma}_\infty$). This generalizes
  a phenomenon spelled out in particular by \citet{Bar02}.
  \keywords{\GB{Signal detection, Two-sample test, Relevant hypotheses, Minmax testing separation distance, Effective dimensionality}}
\end{abstract}

\centerline{\em Contribution to a Festschrift volume in the honor of V. Spokoiny's 60th birthday}

\section{Introduction}

We consider the following fundamental signal detection problem: given an i.i.d. sample $\XX=(X_i)_{1 \leq i \leq n}$ from a \GB{square} integrable
distribution \GB{$\mbp_X$} on $\mbr^d$ (or possibly a separable Hilbert space, under some conditions which will be discussed later)
with $\mu=\e{X_1}$, test the hypothesis of ``$\eta$-closeness to zero'' of the mean:
\begin{equation}
  \label{eq:mmodel}
  (H_0(\eta)) : \| \mu \| \leq \eta, \text{ against }  (H_1(\eta,\delta)) :  \| \mu \| > \eta+\delta.
\end{equation}
In fact, we  consider the following more general
two-sample mean closeness testing problem:
for $\XX=(X_i)_{1 \leq i \leq n}$ and $\YY = (Y_i)_{1 \leq i \leq m}$ two independent samples of i.i.d. variables
with distributions \GB{$\mbp_X,\mbp_Y$} on $\mbr^d$ with respective means $\mu$ and $\nu$, test the hypothesis
of $\eta$-closeness (or similarity) of the two means,
\begin{equation}\label{eq:mmodel_2sample}
  (H_0(\eta)) : \| \mu - \nu \| \leq \eta, \text{ against }  (H_1(\eta,\delta)) :  \| \mu - \nu \| > \eta+\delta.
\end{equation}
Observe that we can always formally subsume setting~\eqref{eq:mmodel} into setting~\eqref{eq:mmodel_2sample}, by letting $m$ go to infinity \GB{and/or}
assuming (if needed) that the covariance of $Y_1$ is zero. Therefore, in the
contribution section
we will concentrate mainly on setting~\eqref{eq:mmodel_2sample}.

The problem~\eqref{eq:mmodel} (and numerous extensions thereof) has been \JB{
a long-time subject of attention in mathematical statistics. For the zero mean test problem, i.e. $\eta = 0$,
the celebrated works of \citet{Ing82,Ing93} in the Gaussian white noise model are seminal. In the case $\eta >0$, the problems~\eqref{eq:mmodel}-\eqref{eq:mmodel_2sample} are known as testing for
{\em precise hypotheses}, {\em relevant hypotheses} or {\em relevant differences} \citep{BerDel87};
this setting has found applications in particular in biostatistics for bioequivalence testing (see e.g. \citealp{wellek2002}).
(See next sections for a more detailed discussion of related literature.) }
In this work, we will consider the situation where the involved distributions are either Gaussian
or of bounded norm (and hence sub-Gaussian), \GB{but with unknown covariance matrix acting
  as a nuisance parameter.}

We are interested in finding bounds on the separation distance $\delta$, i.e.
a bound on the minimum value of $\delta$ such that there exists a test with both Type I and Type II error
rates bounded by a ``small'' prescribed quantity. Our interest here is more on the constructive side, so that
we will concentrate on feasible procedures that are in particular adaptive to
\GB{the covariances of the involved distributions.}
A matching lower bound (for \GB{any} fixed covariance structure) will be provided in the Gaussian setting.
We emphasize that our focus is on {\em finite sample}
(i.e. nonasymptotic) results, as will be discussed below.

\subsection{Relation to white noise model in nonparametric statistics}
\label{se:relnpstat}

In the isotropic Gaussian case (white noise) \GB{with known variance} , and for $\eta=0$, the
signal detection problem~\eqref{eq:mmodel} has been studied in much generality,
in particular in the infinite-dimensional setting where $\mbr^d$ is replaced by a separable Hilbert space.
In this situation, due to the fact that the white noise model on an infinite-dimensional Hilbert space
cannot be represented by a random variable taking values in that space, the canonical model which is
considered instead is the
Gaussian sequence model for the coordinates of each of the observations in an orthonormal basis
(in fact the Gaussian sequence model \GB{with known variance}
is usually considered with a single observation of the sequence):
\begin{equation}
  \label{eq:gaussseq}
  X^{(i)} = \mu^{(i)} + \sigma \eps^{(i)}, \qquad i \in \mbn_{>0},
\end{equation}
where $(\eps^{(i)})_{i\geq 1}$ is an i.i.d. standard normal sequence.
This fundamental model in nonparametric statistics  allows to represent in
a clean way many functional spaces of interest for the signal $\mu$ through
geometrical properties of its expansion coefficients $(\mu^{(i)})_{i\geq 1}$ in a suitable
basis. Since in that infinite-dimensional setting the alternative
$\norm{\mu}^2>\delta^2$ is ``too big'' and gives rise to trivial separation rates,
the usual focus is on considering restricted alternatives of the form $\set{\mu \in \cF; \norm{\mu} \geq \delta^2}$,
for a given nonparametric set $\cF$. Classical alternatives of interest include in particular $\ell_2$ ellipsoids
(corresponding to Hilbert norms of different strengths), $\ell_p$ bodies, and Besov bodies.
\GB{Interpreted in functional spaces,} these alternatives correspond respectively to balls in Sobolev spaces (typically when considering Fourier basis
coefficient expansions) or in Besov spaces (for suitable wavelet basis coefficient expansions).

The literature on these topics is profound and extensive, see e.g. \citet{IngSus12} for a comprehensive overview.
The case of certain classes of $\ell_2$-ellipsoids appears to have
been studied first by \citet{Ing82} and \citet{Erm91}, then a remarkable series of works of \citet{Ing93,IngSus98}
established minimax testing rates for general $\ell_2$ ellipsoids as well as other alternatives.
V.~Spokoiny's contribution is prominent in this body of literature, in particular for dealing with the case of Besov bodies
\citep{LepSpo99} as well as considering the problem of statistical adaptivity over a family of alternatives
\citep{Spo96}.

This very limited overview of the topic of testing in the white noise model is meant
to contrast with the setting considered here. On the one hand, we will not consider a particular form of alternative;
on the other hand, we assume that the observations can truly be represented as elements in a possibly infinite-dimensional
separable Hilbert space. Under the Gaussian assumption, this means that the covariance operator $\Sigma$ of the noise
process is assumed to have a finite trace, which also prevents the triviality problem mentioned above for the
white Gaussian noise setting. If we represent the observation coordinates in a diagonalizing basis of $\Sigma$,
our setting in the Gaussian setting amounts to the Gaussian sequence model~\eqref{eq:gaussseq} wherein the
constant parameter $\sigma$ is replaced by a square integrable sequence $(\sigma^{(i)})_{i\geq 1}$.
Note that formally normalizing the $i$-th observation coordinate by $\sigma^{(i)}$ would give rise again to model~\eqref{eq:gaussseq},
however the separation distance would then be measured in the weak norm $\norm[1]{\Sigma^{\nicefrac{1}{2}} \mu}$.

\subsection{Relation to \GB{``modern''} and high-dimensional statistics}
Since we only consider test separation distance without a specific alternative, the setup we consider can be considered as less elaborate,
at least in the sense of asymptotic theory, than the settings with various non-parametric alternatives discussed above. On the other hand, our focus is
specifically on the following points:
\begin{enumerate}
\item Finite-sample analysis;
\item Non-Gaussian data (we will only consider bounded data here);
\item Robustness to misspecification (here under the form of the relaxed composite null $\norm{\mu}^2 \leq \eta^2$,
  also called {\em relevant hypothesis testing}).
\end{enumerate}
These features have been rightly identified by V.~Spokoiny as the defining features of ``modern'' approach to statistics \citep{Spo12,SpoDic15}.
The problem of testing a null hypothesis defined as a neighborhood rather than an exact match
  has been tackled under different settings in the statistics literature, especially for
  the two-sample testing case. For example, motivated by bioequivalence testing between
  populations, \citet{MunCza98} consider the problem of testing
  closeness of two real distributions as measured in Mallows distance, \citet{DetMun98}
  that of closeness in $L^2$ distance of two nonparametric (H\"older regular) regression functions;
  \citet{DetKokAue20}, the closeness in supremum norm distance of two mean functions in a Banach functional data setting;
  \citet{DetKokVol20}, the closeness in $L^2$ distance of the functional mean of time series.
  In all cases, the underlying principle is to estimate the target distance --- as will be also case in the present paper --- and the data is not always assumed to be Gaussian,
  but the corresponding analysis based on Gaussian asymptotic theory. 
  \JB{To estimate the quantiles of the test statistic, \citet{DetMun98} choose to estimate the variance, \citet{DetKokVol20}
    use a self-normalized procedure and give asymptotic bounds; \citet{DetKokAue20}
    propose a bootstrap approach and obtain an asymptotic convergence of the test statistic. In the present
  paper our approach is a direct estimation of the variance with nonasymptotic guarantees.}

\GB{Taking the above aspects into account in the theory, in particular non-asymptotic analysis,} is motivated by a large number of high-dimensional applications, where it appears that relying
on traditional asymptotic of Gaussian parametric or non-parametric theory can possibly be problematic if done without care. Finite sample theory
allows to delineate more precisely in which situations traditional approximations still can be relied upon, and to study non-standard asymptotics,
in particular when key parameters, such as dimensionality, can themselves depend on the sample size~$n$. It is also of use when considering
multiple testing scenarios, where multiplicity has to be taken into account
precisely.

Another fruitful modern insight is that high-dimensional statistical
models tend to blur the line between parametric and non-parametric
point of views.  Precise non-asymptotic results in a
finite-dimensional setting, but where the role of key model parameters
(in particular, dimensionality or effective dimensionality) is
precisely analyzed, can provide key theoretical components for
analyzing non-parametric settings. In the signal testing framework
considered in the present paper, this way of thinking has in particular been pioneered by
\citet{Bar02}, who obtained sharp non-asymptotic results for the problem~\eqref{eq:mmodel} in the case $\eta=0$, and for the finite-dimensional
counterpart of the white noise model~\eqref{eq:gaussseq}, i.e. the isotropic setting $\Sigma= \JB{\sigma^2} I_d$
in dimension $d$.
Baraud further demonstrated that this result provided a valuable and versatile tool to analyze models
of typical interest in high-dimensional statistics (such as sparse alternatives) as well as non-parametric
alternatives (such as those mentioned in the previous section). A key insight from Baraud's
work is that the minimum  separation distance in that setting  is
$\mtc{O}( d^{\nicefrac{1}{4}}\sigma/\sqrt{n})$, in
  contrast with minimax estimation distance for $\mu$, which is
  ${\Theta}(d^{\nicefrac{1}{2}}\sigma /\sqrt{n})$: the testing separation distance is smaller than the minimax estimation
  error by a factor $d^{\nicefrac{1}{4}}$.

  \GB{Analyzing precisely the role of
    dimensionality (ambient or effective) in minimax testing separation rates and
    the difference with minimax estimation rates has been a subject of interest in recent
    literature in various settings, highlighting similar related phenomena.
    For instance, \citet{Lam21} consider the problem of testing equality of two
    high-dimensional multinomial distributions and study the minimum $\ell_1$ separation
    distance in a vicinity of a reference distribution $\pi$ (which implicitly determines a notion
    of local effective dimensionality). Since this model has bounded data, our analysis could
    be applied in that setting, however it concerns separation in $\ell_2$ distance (the separation in
    $\ell_1$ distance exhibits considerably more involved behavior).
    \citet{Ost20} consider a different type of two-sample testing problem, in a regression
    context, where the
    goal is to determine which one of the two distributions has a given (known to the user) regression vector.
    They give a sharp bound on the minimum separation distance between the two regression
    vectors including the role of the dimension, also exhibiting a difference with estimation rates.
  }

\GB{Coming back to our model,  the results of \citet{Bar02} provide a sharp answer}, but
only in the case $\eta=0$ and for isotropic Gaussian (white noise) data \GB{with known variance}.
Still in the Gaussian isotropic case, the minimum
separation rates for any value of $\eta\geq 0$ were precisely
characterized by \citet{BlaCarGut18}. We also
consider the Gaussian setting in the present work, but analyze the generalized situation
where the covariance matrix $\Sigma$ can be arbitrary \GB{(and unknown)}.
In this situation, the role of the dimensionality $d$ is played by proxy quantities
depending on $\Sigma$, sometimes called effective dimensionality or effective rank.
For the signal testing problem however, it turns out that the proxy dimensionalities
for testing and estimation differ.
Namely, for $\eta=0$, we find that the minimax
separation distance is $\mtc{O}( d_*^{\nicefrac{1}{4}}\sqrt{\norm{\Sigma}_\infty/n})$, where
$d_* := \norm{\Sigma}_2^2/\norm{\Sigma}_\infty^2$, while
  the minimax estimation distance for $\mu$ is
  ${\Theta}(d_e^{\nicefrac{1}{2}}  \sqrt{\norm{\Sigma}_\infty/n})$, where $d_e:=\norm{\Sigma}_1/\norm{\Sigma}_\infty$.
  (Notice that $d_* \leq d_e \leq d$ in general, \GB{while these quantities} are all equal in the isotropic setting.)
 Furthermore, we also study the estimation of key quantities
$\norm{\Sigma}^{\nicefrac{1}{2}}_{\infty}$ and $\norm{\Sigma}_2$ determining the proxy dimensionality and the
testing threshold\footnote{With the notation $\norm{\Sigma}_p$ we mean $p$-Schatten norm. We will
  freely use in the paper the equivalent notation $\norm{\Sigma}_\infty = \norm{\Sigma}_{\mathrm{op}}$,
$\norm{\Sigma}_1 = \tr(\Sigma)$, $\norm{\Sigma}_2^2 = \tr(\Sigma^2)$.}.

A crucial mathematical tool in high-dimensional statistics is to obtain sharp concentration
inequalities for quadratic forms of random vectors.
These are closely related to technical tools used in the present work.
An important point \JB{in} such inequalities is to quantify as precisely as possible
up to which point quadratic forms of non-Gaussian
vectors can mimic \JB{the} Gaussian behavior (i.e. that of central and non-central weighted chi-squared
statistics). This topic has received a good deal of attention in the recent years and V.~Spokoiny
also made substantial contributions to that area \citep{SpoZhi13,SpoDic15}. In the present work,
we derive from scratch the needed concentration inequalities; we discuss in more detail
the relation to V.~Spokoiny's own work and to related literature in Section~\ref{sse:discsubg}.

\subsection{Relation to machine learning and kernel mean embeddings
of distributions}

\label{se:kme}

An application setting which motivated us to consider in detail the case of bounded data is
that of testing of the data distribution via kernel mean embedding (KME) methods, a principle
which has garnered a lot of attention in the machine literature since the seminal paper of
\citet{SmoGreSonSch07}. It has been advocated in particular for two-sample \citep{Greetal12} and goodness-of-fit
\citep{ChwStrGre16} testing; see \citet{MuaFukSriSch17} for a recent overview.

We describe the KME principle briefly.
Assume $Z$ is a random variable with distribution $\mbp_Z$ taking values in the
measurable space $\cZ$, and that one has at hand a fixed mapping $\Phi: \cZ \rightarrow \cH$, where
$\cH$ is a separable Hilbert space. To this mapping is associated a reproducing kernel Hilbert space (rkHs)
$\cH'$ with kernel $k(z,z') := \inner{\Phi(z),\Phi(z')}$.

Assuming the variable $X=\Phi(Z)$ is Bochner integrable\footnote{\GB{that is, the real random variable $\norm{\Phi(Z)}$ is
  integrable, which guarantees that the integral of $\Phi(Z)$ is well-defined in a strong
sense as an element of the Hilbert space; see e.g. \citet{CohDon80}.}}
\JB{(which is the case in particular when the mapping $\Phi$ is bounded)}, the kernel mean embedding of $\mbp_Z$ is defined as $\Phi(\mbp_Z) := \e{\Phi(Z)} \in \cH$
(using a rather natural overload of notation for $\Phi$). The {\em maximum mean discrepancy} (MMD)
between distributions $\mbp,\mbq$ in the domain of definition of $\Phi$ is defined as the semimetric
\[
  \mathrm{MMD}_k(\mbp,\mbq) := \norm{\Phi(\mbp)-\Phi(\mbq)}.
\]
Since $\mathrm{MMD}_k(\mbp,\mbq)>0$ implies $\mbp\neq \mbq$, this principle can be used for
simple goodness-of-fit testing (testing for $\mbp_{Z} = \mbp_0$ for some known distribution
$\mbp_0$, given an i.i.d. sample from $\mbp_Z$) and two-sample testing (testing for $\mbp_{Z}=\mbp_{Z'}$, given two independent i.i.d. samples from $\mbp_{Z}$ and $\mbp_{Z'}$); in each case,
the test statistic is a suitable estimator of  $\mathrm{MMD}_k(\mbp_Z,\mbp_0)$, resp. $\mathrm{MMD}_k(\mbp_Z,\mbp_{Z'})$ from the observed data. More generally one may want to
test the relaxed null hypothesis $\mathrm{MMD}_k(\mbp,\mbq) \leq \eta$ and analyze the power
of the test in terms of the MMD separation itself.
This is indeed a particular case of~\eqref{eq:mmodel}-\eqref{eq:mmodel_2sample}, when considering
the Hilbert-valued variable $X=\Phi(Z)$ and, for two-sample testing, $Y=\Phi(Z')$.

A common situation is when $\Phi$ is bounded in norm by some constant $L$, or
equivalently in terms of the kernel, $\sup_{z\in \cZ} k(z,z) \leq L^2$. This ensures in
particular that $\Phi$ is defined on all distributions.
Analyzing our original setting with norm-bounded but potentially infinite-dimensional data
is therefore  suited to this case.

\citet{Greetal12} derive the asymptotic distribution of the (suitably renormalized) MMD test statistic,
which is identical to the one we use below (once interpreted in the KME setting).
Unsurprisingly, a Gaussian limiting behavior is
identified. Our study analyzes this behavior from a non-asymptotic point of view; this can be
particularly of interest for situation where the mapping $\Phi$ (or equivalently the associated kernel)
is to depend  on the sample size,
or when performing a large number of such tests in parallel: in this case
uniformly valid nonasymptotic bounds are a a valuable tool for further analysis. See \citet{MarBlaFer20} for
such a multiple test scenario in the context of so-called multiple task averaging.
\GB{Multiple tests can also be aggregated to test a global hypothesis, see \citet{Fro12}
in the context of two-sample testing based on the KME approach.}

In our study, the power of the test is investigated for alternatives of the
form~\eqref{eq:mmodel}-\eqref{eq:mmodel_2sample}, which, interpreted in the KME setting,
correspond to $\mathrm{MMD}_k(\mbp,\mbq) \geq \eta + \delta$.
The power of KME-based tests (in the goodness-of-fit case) was also investigated by~\citet{BalYua21}, but for alternatives measured in a $\chi^2$ distance separation,
more precisely,
of the form $\set{\mbq \in \cF; \chi^2(\mbp_0,\mbq) \geq \delta}$, where $\cF$ is a
nonparametric set of distributions whose density with respect to $\mbp_0$ is approximated
at a given rate by functions in the rkHs $\cH'$ associated to $k$, in the sense of
interpolation with $L^2(\mbp_0)$. This is close in spirit to nonparametric points of view
discussed in Section~\ref{se:relnpstat}, in the sense that $\chi^2$-separation alone is
too weak to get nontrivial separation rates and one has to additionally consider
intersection with nonparametric sets of interest. Again, because we choose
to analyze alternatives measured in $\mathrm{MMD}_k$-separation itself, the results we obtain
in this setting have a different nature.

\JB{
\subsection{Overview of contributions}
The main contribution of this paper is to give upper bounds on the optimal (minmax) testing
separation distance for problems~\eqref{eq:mmodel} and~\eqref{eq:mmodel_2sample}
over classes of probability distributions with fixed covariance matrix $\Sigma$ for sample $\XX$, as well as $S$
for sample $\YY$ in the two-sample case. The covariance structures are considered as nuisance parameters
and we investigate precisely how they influence the testing separation distance.
Let $\cP$ be a family of distributions for the two samples (we consider the Gaussian setting and the bounded setting), and
$\cP_{\Sigma,S}$ the subsets of distributions of $\cP$ with $\cov{X_1}=\Sigma$, and $\cov{Y_1}=S$ (in the two-sample case).
Consider the sets of distributions 
\begin{align*}\label{eq:def_A_C}
  \cH_0( \eta , \Sigma,S) & := \{ \mbp \in \cP_{\Sigma,S} | \mbp \text{ satisfies } H_0(\eta) \}\,,\\
  \cA_\delta(\eta , \Sigma,S)& :=  \{ \mbp \in \cP_{\Sigma,S} | \mbp \text{ satisfies } H_1(\eta,\delta) \}\,,
\end{align*}
then the optimal separation distance is, for $\alpha \in (0,1)$:
\begin{equation}\label{eq:def_delta*}
  \delta^*(\alpha, \Sigma,S,\eta ) = \inf\set{ \delta \geq 0 \Big| \exists\, \text{test $T$ }: \sup_{\mbp \in \cH_0} \mbp\paren{ T =1 } + \sup_{\mbp \in \cA_\delta }  \mbp\paren{ T =0 } \leq \alpha }\,.
\end{equation}
In the  Gaussian setting, we establish that $\delta^*$ is upper bounded up to a constant factor via
\begin{equation}
  \delta^*(\alpha,\Sigma,S , \eta) \lesssim \sigma \kappa_\alpha \max\paren{ 1 , \min \paren[2]{ d_*^{\frac{1}{4}}, d_*^{\frac{1}{2}} \frac{\sigma \kappa_\alpha}{\eta} }}\,, \label{eq:result}
\end{equation}
(Theorem~\ref{thm:sigdetgauss}) where $\kappa_\alpha:=\sqrt{-\log(\alpha)}$, and, in the one-sample case, $\sigma^2:=\norm{\Sigma}_{\mathrm{op}}/n$ is a scalar variance factor and $d_*:= \tr \Sigma^2/\norm{\Sigma}_{\mathrm{op}}^2$ a notion of effective dimension.
\GB{In the two-sample case, we obtain also~\eqref{eq:result}, with  $\sigma^2:=\norm{M_{m,n}}_{\mathrm{op}}$, and $d_* := \tr M_{m,n}^2 / \sigma^4$,
  where $M_{m,n}:=(\Sigma/n+S/m)$ (Theorem~\ref{thm:sigdetgauss_2sample}).}
\GB{In the one-sample case, this result can be formulated equivalently in terms
  of {\em sample complexity} $n^*$ needed to detect at given error level $\alpha$
  and separation distance $\delta$
  for problem~\eqref{eq:mmodel}:
\begin{equation}
  n^*(\alpha,\Sigma,S , \eta) \lesssim {\norm{\Sigma}_{\mathrm{op}} \kappa_\alpha}{\delta^{-1}}
  \max\paren{ \delta^{-1} , d_*^{\frac{1}{2}} \paren{\max( \delta, \eta)}^{-1}}. \label{eq:resultsc}
\end{equation}}

This result is established first when assuming that $\Sigma,S$ are known, then we show that it holds as well
when they are unknown (under some mild assumptions on the sample size, see Corollary~\ref{cor:applquant} for
an explicit statement
in the one-sample case and condition \eqref{eq:cond_n} there).
Matching minimax lower bounds are given for one and two-sample problems in the Gaussian setting.
In the bounded setting, we derive upper bounds only, which take the same flavor as~\eqref{eq:result}
under some mild assumptions on the sample
sizes.

}

\subsection{Organization of the paper}

\GB{We present in Section~\ref{se:main_res} our main results. In order to cover both the Gaussian and bounded settings
under the same umbrella, we start in
Section~\ref{sse:general} by a generic result: assuming some suitable concentration for
an estimate $U$ of the squared signal norm $\norm{\mu}^2$ holds (Assumption~\ref{ass:ass1}),
as well as for estimators of its quantiles (Assumption~\ref{ass:ass2}),
for the problems \eqref{eq:mmodel} and \eqref{eq:mmodel_2sample} we propose in Theorem~\ref{thm:main} sufficient conditions on $\delta$ such that we can control the Type I and Type II errors of a test $T$
based on $U$. In the following sections, the Gaussian setting and the bounded setting are considered separately.}
In Section~\ref{sse:conc}, we give  concentration results for $U$ to fulfill Assumption~\ref{ass:ass1}. In Section~\ref{sse:quantest}
we give results to fulfill Assumption~\ref{ass:ass2}, which are related to the estimation
of $\norm{\Sigma}_\infty^{\nicefrac{1}{2}}$ and $\norm{\Sigma}_2$.
The proofs of the corresponding results are found in Sections~\ref{se:prmain} to~\ref{se:tr_sigma2},
respectively.

\section{Main results}\label{se:main_res}

We will build a test for the model \eqref{eq:mmodel_2sample} based on an estimator $U$ of the distance $\|\mu - \nu\|^2$, typically a
modified U-statistic as defined below.
We will first consider a general point of view to deduce bounds on the separation
rate when $U$ satisfies certain concentration properties; this will then apply both to the
Gaussian and bounded settings.

\subsection{A general result to upper bound separation rates}
\label{sse:general}

\GB{As mentioned earlier, from now on we concentrate primarily
  on the two-sample setting, being understood that
    upper bounds for the one-sample setting can be deduced readily.
In order to define a general framework
\GB{encompassing as particular cases the more specific settings considered below, in
  this section} we will assume
a generic statistical model $\cP$
for the distribution of the
samples $\XX$ and $\YY$, which we recall we always
assume to be independent and i.i.d. with respective \GB{squared integrable} marginal distributions $\mbp_X,\mbp_Y$. We will thus use without comment the fact that
a distribution $\mbp \in \cP$ equivalently specifies the marginal distributions $\mbp_X$ and $\mbp_Y$ of the samples. We will consider the covariance matrices $\Sigma,S$ of $\mbp_X,\mbp_Y$ as nuisance
parameters influencing the optimal separation distance, and define the sub-models
\[
  \cP_{\Sigma,S} =\set{\mbp \in \cP: \cov{\mbp_X}=\Sigma, \cov{\mbp_Y} = S};
\]
$\cP_{\Sigma}$ is defined in an analogous way for the one-sample setting.}

The first property we require is a form of 2-sided concentration of $U$ around
the target quantity:
\begin{assumption} \label{ass:ass1}
  For any $(\Sigma,S)$ and distribution $\mbp \in \cP_{\Sigma,S}$;
  for any given $\alpha \in (0,1)$ there exist $q_1=q_1(\Sigma,S,\alpha), q_2=q_2(\Sigma,S,\alpha)$ in $\RR_+$ such that:
  \begin{equation}\label{eq:hyp_conc_U} 
    \prob{ \abs{ U - \|\mu - \nu\|^2 } \geq \|\mu - \nu \| q_1 + q_2 } \leq \alpha\,.
  \end{equation}
\end{assumption}
\GB{Additionally, we will consider the situation where the quantities $q_1$, $q_2$
(which are necessary to find a suitable testing threshold) are not known but
must also be estimated from the data; this is the case if the
covariance matrices $(\Sigma,S)$ are unknown.}
This leads us to our second assumption:
\begin{assumption} \label{ass:ass2}
  Suppose Assumption~\ref{ass:ass1} holds, with the
  notation introduced therein.
  For any $\alpha \in (0,1)$ there exist two estimators $\widehat{Q}_1 = \widehat{Q}_1(\alpha)$ and $\widehat{Q}_2 = \widehat{Q}_2(\alpha)$ \GB{ in $\mbr_+$} such that,
  \GB{for any  $(\Sigma,S)$ and distribution $\mbp \in \cP_{\Sigma,S}$:
    \begin{gather}\label{eq:hyp_conc_q}
      \prob{\abs{ q_1(\Sigma,S,\alpha) - \widehat{Q}_1(\alpha) } \geq \frac{1}{2} q_1(\Sigma,S,\alpha)} \leq \alpha \,, \\
       \prob{ \abs{ q_2(\Sigma,S,\alpha) - \widehat{Q}_2(\alpha) } \geq \frac{1}{2} q_2(\Sigma,S,\alpha) } \leq \alpha\,.
    \end{gather}}
\end{assumption}
\GB{(In the ``oracle'' case where the covariances $\Sigma,S$ are assumed to be known,
  of course Assumption~\ref{ass:ass2} is trivially satisfied taking $\wh{Q}_1=q_1$,
  $\wh{Q}_2=q_2$.)}
The following generic result transforms the above assumptions into an
estimate of the separation distance for setting~\eqref{eq:mmodel_2sample}.
\begin{theorem}\label{thm:main}
\GB{  Let $\cP$ be a statistical model \GB{for setting~\eqref{eq:mmodel_2sample}},
  and $U$ be a statistic. Let Assumptions~\ref{ass:ass1} and~\ref{ass:ass2} be granted.
  Given $\eta\geq 0$ and $\alpha \in (0,1)$, let $T$ be the test defined by
      \begin{equation}\label{eq:def_T}
    T = 1\set{ U - \eta^2 >  2\eta\widehat{Q}_1(\alpha) + 2\widehat{Q}_2(\alpha) }\,.
  \end{equation}
  Then for any $(\Sigma,S)$, provided
     \begin{equation}\label{eq:hyp_separation_rate}
       \delta > 2q_1 + \min\left( 2\sqrt{q_2} ,
       2\eta^{-1}q_2 \right)\,,
    \end{equation}
    it holds, for any distribution $\mbp \in \cP_{\Sigma,S}$:
    \begin{align*}\label{eq:level_power}
      \prob{T=1} & \leq 3 \alpha\,, \text{ if } \mbp \text{ satisfies } (H_0(\eta));\\
      \prob{T=0} & \leq 3 \alpha\,, \text{ if } \mbp \text{ satisfies } (H_1(\eta,\delta)).
    \end{align*}}
  \end{theorem}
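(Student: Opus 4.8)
The plan is to condition on the favorable events furnished by Assumptions~\ref{ass:ass1} and~\ref{ass:ass2}, and then to do a case analysis on whether $\mbp$ satisfies $(H_0(\eta))$ or $(H_1(\eta,\delta))$. Concretely, I would introduce the three events $E_0 := \set{ \abs{U - \norm{\mu-\nu}^2} < \norm{\mu-\nu} q_1 + q_2 }$, $E_1 := \set{ \abs{q_1 - \wh{Q}_1} < \tfrac12 q_1 }$, $E_2 := \set{ \abs{q_2 - \wh{Q}_2} < \tfrac12 q_2 }$, each of probability at least $1-\alpha$ by the two assumptions, so that their intersection has probability at least $1 - 3\alpha$ by a union bound. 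On $E_1 \cap E_2$ we have the two-sided control $\tfrac12 q_i < \wh{Q}_i < \tfrac32 q_i$ for $i=1,2$; this is the bridge that lets the data-driven threshold in~\eqref{eq:def_T} be sandwiched between multiples of the true $q_1,q_2$. It then suffices to show that, on $E_0 \cap E_1 \cap E_2$, the test makes no error under either hypothesis, since the complement has probability at most $3\alpha$.

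For the Type~I bound, suppose $\mbp$ satisfies $(H_0(\eta))$, i.e.\ $\norm{\mu-\nu} \leq \eta$. On $E_0$ we have $U < \norm{\mu-\nu}^2 + \norm{\mu-\nu} q_1 + q_2 \leq \eta^2 + \eta q_1 + q_2$, using monotonicity in $\norm{\mu-\nu}\le\eta$ (and $q_1\ge0$). On $E_1 \cap E_2$ the rejection threshold satisfies $2\eta\wh{Q}_1 + 2\wh{Q}_2 > 2\eta \cdot \tfrac12 q_1 + 2\cdot\tfrac12 q_2 = \eta q_1 + q_2$. Hence $U < \eta^2 + \eta q_1 + q_2 < \eta^2 + 2\eta\wh{Q}_1 + 2\wh{Q}_2$, so $T = 0$: no Type~I error occurs on the good event.

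For the Type~II bound, suppose $\mbp$ satisfies $(H_1(\eta,\delta))$, i.e.\ $\norm{\mu-\nu} > \eta + \delta$. On $E_0$, $U > \norm{\mu-\nu}^2 - \norm{\mu-\nu}q_1 - q_2 = \norm{\mu-\nu}(\norm{\mu-\nu} - q_1) - q_2$. On $E_1\cap E_2$ the threshold is bounded above: $2\eta\wh{Q}_1 + 2\wh{Q}_2 < 3\eta q_1 + 3 q_2$. So it is enough to verify that $\norm{\mu-\nu}(\norm{\mu-\nu}-q_1) - q_2 \geq \eta^2 + 3\eta q_1 + 3 q_2$ whenever $\norm{\mu-\nu} > \eta+\delta$ and $\delta$ satisfies~\eqref{eq:hyp_separation_rate}; by monotonicity it suffices to check this at $\norm{\mu-\nu} = \eta + \delta$. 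Writing $r = \eta+\delta$, the inequality becomes $r^2 - r q_1 - q_2 \geq \eta^2 + 3\eta q_1 + 3q_2$, i.e.\ $(r^2 - \eta^2) \geq r q_1 + 3\eta q_1 + 4 q_2$, i.e.\ $\delta(2\eta+\delta) \geq (\delta + 4\eta) q_1 + 4 q_2$. The condition~\eqref{eq:hyp_separation_rate} gives $\delta > 2q_1$ and $\delta > \min(2\sqrt{q_2}, 2\eta^{-1}q_2)$; one splits into the regime $\eta \lesssim \sqrt{q_2}$ (use the $2\sqrt{q_2}$ branch, so $\delta^2 > 4q_2$ absorbs the $q_2$ terms, and $\delta > 2q_1$ absorbs the $q_1$ terms against the $\delta\cdot 2\eta + \delta^2$ on the left) and the regime $\eta \gtrsim \sqrt{q_2}$ (use the $2\eta^{-1}q_2$ branch, so $\delta\cdot 2\eta > 4q_2$). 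I expect this final arithmetic verification — matching the two-branch minimum in~\eqref{eq:hyp_separation_rate} against the quadratic-in-$\delta$ requirement with the correct constants — to be the one genuinely fiddly step; everything else is a union bound plus monotonicity. The constant $2$ in the definition of $T$ and the factor $\tfrac12$ in Assumption~\ref{ass:ass2} are evidently tuned precisely so that this matching goes through with the stated "$2q_1 + \min(2\sqrt{q_2}, 2\eta^{-1}q_2)$", possibly after slightly loosening some constants, so I would keep an eye on whether the clean statement forces, e.g., a strict-versus-non-strict inequality or a harmless rescaling of $\alpha$.
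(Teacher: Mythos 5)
Your proposal is correct and follows essentially the same route as the paper: a union bound over the three concentration events, the observation that on the good event $\tfrac12 q_i < \wh{Q}_i < \tfrac32 q_i$, and then a deterministic verification that the separation condition forces $D^2 - Dq_1 - q_2$ above the (inflated) threshold $\eta^2 + 3\eta q_1 + 3q_2$. The only difference is cosmetic: the paper solves the quadratic in $D$ explicitly and bounds its root via Lemma~\ref{lem:sqrt_inequality}, whereas you check the inequality at $D=\eta+\delta$ directly with a two-case split on the minimum — and your "fiddly" arithmetic does go through in both regimes exactly as you sketched.
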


  \subsection{Concentration properties of the test statistic}
  \label{sse:conc}
  The rest of the paper is dedicated to establishing the validity of Assumptions~\ref{ass:ass1} and~\ref{ass:ass2} for the following statistic $U(\XX,\YY)$:
     \begin{equation}\label{eq:def_U}
       U(\XX, \YY) := \frac{1}{n(n-1)} \sum_{\substack{i, j=1 \\ i \neq j}}^{n} \inner{ X_i , X_j } + \frac{1}{m(m-1)} \sum_{
         \substack{i,j=1 \\ i \neq j}}^{m} \inner{ Y_i , Y_j } - \frac{2}{nm } \sum_{i=1}^{n} \sum_{j=1}^{m} \inner{ X_i , Y_j }\,.
  \end{equation}
  Observe that provided expectations $\mu,\nu$ exist, $U(\XX,\YY)$ is an unbiased estimator of
  $\norm{\mu-\nu}^2$. In the KME setting as described in Section~\ref{se:kme}, inner products are replaced by
  kernel evaluations and the above statistic is the standard unbiased estimate of the squared MMD
  between $\mbp_X$ and $\mbp_Y$. As announced previously, we will concentrate on the following
  two \JB{settings}:

  \begin{definition}[Gaussian \JB{setting}] The samples $\XX$ and $\YY$ are i.i.d. Gaussian in $\mbr^d$ of marginal distributions
    $\mbp_X=\cN(\mu,\Sigma)$ and $\mbp_Y=\cN(\nu,S)$, respectively.
  \end{definition}
  In the Gaussian setting, we will assume a finite ambient dimension $d$ for technical reasons:
  our proofs rely on the Gauss-Lipschitz concentration inequality, which applies
  in finite dimension. As will appear clearly however, all our results to come are dimension-free
  in the sense that $d$ never enters the picture, instead only norms of $\Sigma,S$
  come into play. We surmise that our results would apply as well in the same form
  in the Hilbert-valued setting provided $\tr(\Sigma)$ and $\tr(S)$ are finite, but did not
  try to write down a precise approximation argument to this end.

  \begin{definition}[Bounded \JB{setting}] The samples $\XX$ and $\YY$ are i.i.d. in a separable Hilbert space $\cH$ with norm bounded by $L>0$.
    The covariance operators for the marginal sample distributions are denoted $\Sigma$ and $S$, respectively;
    observe that they have finite trace by the boundedness assumption.
  \end{definition}

   Propositions \ref{prop:conc_U_Gaussian_case} and  \ref{prop:conc_U_bounded_case} give concentration bounds for the statistic $U$, \GB{ensuring Assumption~\ref{ass:ass1}} in the two above \JB{settings}.

\begin{proposition}\label{prop:conc_U_Gaussian_case}
  Assume the Gaussian \JB{setting} holds and $n,m\geq 2$. Then
 with probability at least \JB{$1-\alpha$,
\begin{equation}\label{eq:conc_U_Gaussian_case}
  \abs{ U - \|\mu- \nu \|^2 } \leq \|\mu- \nu\|q_1 + q_2\,,
\end{equation}
where $U$ is defined in \eqref{eq:def_U} and
\begin{align}
  q_1(\Sigma,S,\alpha) &= \sqrt{2\paren{\frac{\|\Sigma\|_{\mathrm{op}}}{n} + \frac{\|S\|_{\mathrm{op}}}{m}}u(\alpha)}\,, \label{eq:defq1} \\
  q_2(\Sigma,S,\alpha) &= 32\paren{ \frac{\sqrt{\tr \Sigma^2}}{n} + \frac{\sqrt{\tr S^2}}{m} } u(\alpha) \,. \label{eq:defq2}
\end{align}
where $u(\alpha) := -\log\alpha + \log 8$.
}
\end{proposition}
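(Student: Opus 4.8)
The plan is to center both samples and isolate the part of $U-\norm{\mu-\nu}^2$ that is linear in the noise (hence proportional to $\norm{\mu-\nu}$) from a centered quadratic part that does not involve $\mu,\nu$. Writing $X_i=\mu+\xi_i$ with $\xi_i\sim\cN(0,\Sigma)$ i.i.d.\ and $Y_j=\nu+\zeta_j$ with $\zeta_j\sim\cN(0,S)$ i.i.d., and expanding the three double sums in~\eqref{eq:def_U}, all purely ``mean'' contributions collapse to $\norm{\mu-\nu}^2$ and one is left with
\[
  U-\norm{\mu-\nu}^2 = \underbrace{2\inner{\mu-\nu,\bar\xi-\bar\zeta}}_{=:L}\;+\;\underbrace{\tfrac{1}{n(n-1)}\sum_{i\neq j}\inner{\xi_i,\xi_j}+\tfrac{1}{m(m-1)}\sum_{i\neq j}\inner{\zeta_i,\zeta_j}-2\inner{\bar\xi,\bar\zeta}}_{=:Q},
\]
where $\bar\xi=n^{-1}\sum_i\xi_i$, $\bar\zeta=m^{-1}\sum_j\zeta_j$. (Equivalently $U=\norm{\bar X-\bar Y}^2-\tfrac{1}{n(n-1)}\sum_i\norm{X_i-\bar X}^2-\tfrac{1}{m(m-1)}\sum_j\norm{Y_j-\bar Y}^2$, so $U$ is a bias-corrected plug-in estimator.) Note $\mbe Q=0$; it remains to bound $\abs{L}$ and $\abs{Q}$ separately with high probability and conclude by a union bound.

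For $L$: since $\bar\xi-\bar\zeta\sim\cN(0,M)$ with $M:=\Sigma/n+S/m$, $L$ is a centered Gaussian of variance $4(\mu-\nu)^\top M(\mu-\nu)\le 4\norm{\mu-\nu}^2\norm{M}_{\mathrm{op}}$, and $\norm{M}_{\mathrm{op}}\le\norm{\Sigma}_{\mathrm{op}}/n+\norm{S}_{\mathrm{op}}/m$. The standard sub-Gaussian tail bound then gives $\prob{\abs{L}\ge\norm{\mu-\nu}q_1}\le\alpha/2$ with $q_1$ of the announced order, i.e.\ a constant times $\sqrt{(\norm{\Sigma}_{\mathrm{op}}/n+\norm{S}_{\mathrm{op}}/m)\,u(\alpha)}$.

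For $Q$: whiten by writing $\xi_i=\Sigma^{1/2}g_i$, $\zeta_j=S^{1/2}h_j$ with $g_i,h_j$ i.i.d.\ $\cN(0,I_d)$ and stacking $G:=(g_1,\dots,g_n,h_1,\dots,h_m)$; then $Q=G^\top\mathbf{A}G$ for an explicit symmetric block matrix $\mathbf{A}$ all of whose diagonal blocks vanish, so $\tr\mathbf{A}=0$ (consistent with $\mbe Q=0$). Diagonalizing, $Q\stackrel{d}{=}\sum_k\lambda_k(G_k^2-1)$, and a Laurent--Massart / Bernstein bound for Gaussian quadratic forms (itself read off the explicit moment generating function) gives $\prob{\abs{Q}\ge 2\norm{\mathbf{A}}_2\sqrt{u}+2\norm{\mathbf{A}}_{\mathrm{op}}u}\le 2e^{-u}$, where $\norm{\cdot}_2$ is the Frobenius norm. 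The two norm estimates follow from the block structure: the $\xi\xi$- and $\zeta\zeta$-blocks contribute $\sqrt{\tr\Sigma^2}/\sqrt{n(n-1)}$ and $\sqrt{\tr S^2}/\sqrt{m(m-1)}$ to $\norm{\mathbf{A}}_2$, while the cross block $\mathbf{A}_3$ satisfies $\norm{\mathbf{A}_3}_2^2=\tfrac{2}{nm}\tr(\Sigma S)\le\tfrac{2}{nm}\sqrt{\tr\Sigma^2}\sqrt{\tr S^2}$ by Cauchy--Schwarz for the Frobenius inner product (both matrices being positive semidefinite), which by AM--GM is at most $(\sqrt{\tr\Sigma^2}/n+\sqrt{\tr S^2}/m)^2$; using $n,m\ge 2$ (so $\sqrt{n(n-1)}\ge n/\sqrt 2$) one gets $\norm{\mathbf{A}}_2\lesssim\sqrt{\tr\Sigma^2}/n+\sqrt{\tr S^2}/m$, and an analogous block/Cauchy--Schwarz/AM--GM computation yields $\norm{\mathbf{A}}_{\mathrm{op}}\lesssim\norm{\Sigma}_{\mathrm{op}}/n+\norm{S}_{\mathrm{op}}/m$. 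Taking $u=u(\alpha)=\log(8/\alpha)>1$, one has $\norm{\mathbf{A}}_{\mathrm{op}}\le\norm{\mathbf{A}}_2$ and $\sqrt{u}\le u$, so the deviation collapses to a constant multiple of $\norm{\mathbf{A}}_2\,u(\alpha)$; a generous accounting of the constants (Laurent--Massart, the $\sqrt{n(n-1)}$ conversions, the $\sqrt u\le u$ step) fits under the stated factor $32$, giving $\prob{\abs{Q}\ge q_2}\le\alpha/2$. A union bound then proves the proposition.

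I expect the quadratic part to be the crux: $Q$ must be controlled through the \emph{Frobenius} norm of $\mathbf{A}$ --- equivalently, $\sum_k\lambda_k(G_k^2-1)$ must be treated sharply in the spirit of Laurent--Massart, rather than by bounding a Euclidean norm by its Lipschitz constant, which would only produce the weaker $\sqrt{\norm{\Sigma}_{\mathrm{op}}\tr\Sigma}$ in place of $\sqrt{\tr\Sigma^2}$ --- together with the bookkeeping for the bilinear cross-sample term $\inner{\bar\xi,\bar\zeta}$ and for ensuring $\tr\mathbf{A}=0$ so that no spurious mean term survives.
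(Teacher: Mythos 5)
Your proof is correct, and while the decomposition into a linear term $L$ and a degenerate quadratic term $Q$ is exactly the paper's (the paper's $U_*$, $U_\XX$, $U_\YY$, $U_{\XX,\YY}$), your treatment of $Q$ takes a genuinely different route. The paper handles $U_\XX$, $U_\YY$ and $U_{\XX,\YY}$ separately: it decouples $\sum_{i\neq j}\inner{\xi_i,\xi_j}$ via Vershynin's Theorem~6.1.1 into $4\inner{\sum_i\xi_i,\sum_i\xi_i'}$ with an independent copy, then controls the Laplace transform of $\inner{X,Y}$ for independent Gaussians (variance proxy $\tr(\Sigma S)$, scale $\sqrt{\smash[b]{\norm{\Sigma}_{\mathrm{op}}\norm{S}_{\mathrm{op}}}}$) and converts via Birg\'e's lemma, finishing with a union bound over four events (hence the $8e^{-u}$ and the $\log 8$ in $u(\alpha)$). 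You instead assemble all three quadratic pieces into a single Gaussian chaos $G^\top\mathbf{A}G$ with $\tr\mathbf{A}=0$, diagonalize, and invoke the Laurent--Massart bound with $\norm{\mathbf{A}}_2$ and $\norm{\mathbf{A}}_{\mathrm{op}}$ computed blockwise; your block computations (including the $\tfrac{2}{nm}\tr(\Sigma S)$ cross term handled by Cauchy--Schwarz and AM--GM, exactly mirroring the paper's $(2\tr(\Sigma S))^{1/2}\le(\tr\Sigma^2)^{1/2}+(\tr S^2)^{1/2}$ step) are right, and the final constants fit under $32$. Your approach avoids the factor-$4$ cost of decoupling and unifies the bookkeeping, at the price of requiring the signed-eigenvalue version of Laurent--Massart; the paper's approach keeps each piece elementary and reuses its Proposition~\ref{prop:lapl_trans_gauss_vector_inner} directly for the cross term. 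One constant-level caveat: your linear term is $L=2\inner{\mu-\nu,\bar\xi-\bar\zeta}$, so the standard Gaussian tail gives a deviation $2\norm{\mu-\nu}\sqrt{2\norm{M}_{\mathrm{op}}u}$, i.e.\ $2q_1$ rather than $q_1$ as stated in~\eqref{eq:defq1}; you only claim ``the announced order,'' which is honest, and the paper's own proof silently drops the same factor $2$ when passing from $2\abs{U_*}$ to the displayed bound, so this is a shared constant discrepancy rather than a gap in your argument.
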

Let us simplify somewhat the above expression when plugged into Theorem~\ref{thm:main} in the case of signal
detection~\eqref{eq:mmodel}. We also give a matching lower bound (up to constant factor)
for the optimal separation
distance.
\begin{theorem} \label{thm:sigdetgauss}
  \JB{ Consider the signal detection problem~\eqref{eq:mmodel} and assume the Gaussian \JB{setting}
    with covariance matrix $\Sigma$.
    Then the minimum separation distance $\delta^*$ given by~\eqref{eq:def_delta*} so that the type I and II errors for problem~\eqref{eq:mmodel} are less that $\alpha\in (0,1)$  for all $\mbp \in \cP_\Sigma$ is upper bounded by
  \begin{equation}
    \label{eq:signalsep}
    \delta^*(\alpha, \Sigma ,\eta)
       \lesssim   \sigma_n \sqrt{u } \max\paren{1,\min
      \paren{d_*^{\frac{1}{4}},\sqrt{d_*u} \cdot{\frac{\sigma_n}{\eta}}}},
  \end{equation}
  where $u(\alpha) := -\log \alpha + \log 60$. If $d_* \geq 3$, then it is lower bounded by
  \begin{equation}\label{eq:signalsep_low}
   \delta^*(\alpha,\Sigma, \eta ) \geq \sigma_n \sqrt{\frac{1-\alpha}{12}}\max\paren{1 ,\min \paren{d_*^{\frac{1}{4}}, \sqrt{d_*(1-\alpha)} \cdot \frac{\sigma_n}{\eta} }}\,,
  \end{equation}
  where $\sigma^2_n:=\norm{\Sigma}_{\mathrm{op}}/n$, and $d_* := \tr \Sigma^2 / \norm{\Sigma}_{\mathrm{op}}^2$. (The symbol $\lesssim$ indicates inequality up to a numerical factor).
  }
\end{theorem}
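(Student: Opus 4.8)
The upper bound is essentially a matter of plugging Proposition~\ref{prop:conc_U_Gaussian_case} into Theorem~\ref{thm:main} and simplifying, after specializing to the one-sample case (formally $m\to\infty$, $S=0$). First I would record that in this case $q_1 = \sqrt{2(\norm{\Sigma}_{\mathrm{op}}/n)\,u(\alpha)} = \sigma_n\sqrt{2u}$ and $q_2 = 32\,u(\alpha)\sqrt{\tr\Sigma^2}/n = 32\,u\,\sigma_n^2\sqrt{d_*}$, using $\sqrt{\tr\Sigma^2} = \norm{\Sigma}_{\mathrm{op}}\sqrt{d_*}$ and $\norm{\Sigma}_{\mathrm{op}}/n = \sigma_n^2$. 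Since the covariances here are known, Assumption~\ref{ass:ass2} holds trivially with $\wh Q_i=q_i$, so Theorem~\ref{thm:main} applies (with $\alpha$ replaced by $\alpha/3$, i.e. at the cost of adjusting the logarithmic constant, which is why the constant inside $u$ becomes $\log 60$ rather than $\log 8$). The sufficient condition~\eqref{eq:hyp_separation_rate} then reads
\[
  \delta > 2\sigma_n\sqrt{2u} + \min\paren{2\sqrt{32 u\,\sigma_n^2\sqrt{d_*}},\ 2\eta^{-1}\cdot 32 u\,\sigma_n^2\sqrt{d_*}}.
\]
The first term is $\Theta(\sigma_n\sqrt u)$; in the first branch of the min the second term is $\Theta(\sigma_n\sqrt u\, d_*^{1/4})$, and in the second branch it is $\Theta(\sigma_n\sqrt u\cdot \sqrt{d_* u}\,\sigma_n/\eta)$. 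Taking the max of the leading term $\sigma_n\sqrt u$ with the (minimum of the) two second-term expressions yields exactly the right-hand side of~\eqref{eq:signalsep}, up to the numerical constant absorbed in $\lesssim$. This direction is routine; the only mild care needed is in handling the error-level bookkeeping (the factor $3$ and the replacement of $u(\alpha)=-\log\alpha+\log 8$ by $-\log\alpha+\log 60$).

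The lower bound is the substantive part. The strategy is the standard minimax testing reduction: exhibit a prior on the alternative $\cA_\delta$ and bound the chi-square (or total variation) distance between the resulting mixture and a point (or mixture) in the null $\cH_0(\eta)$, showing that if $\delta$ is below the claimed threshold no test can separate them at level $\alpha$. Concretely, one should work in a diagonalizing basis of $\Sigma$; the difficulty is that $\Sigma$ is fixed and anisotropic, so the construction must be tailored to its spectrum. For the ``$d_*^{1/4}$'' regime (relevant when $\eta$ is small or zero), the classical Baraud/Ingster-type construction is to put the prior mass on signals of the form $\mu = \gamma\sum_{i} \xi_i \lambda_i^{1/2} e_i$ with $\xi_i$ i.i.d. Rademacher and $\lambda_i$ the eigenvalues of $\Sigma$, choosing the amplitude $\gamma$ so that $\norm{\mu}=\eta+\delta$ on average and the chi-square affinity between the mixture $\int \cN(\mu,\Sigma)\,d\pi(\mu)$ (over $n$ i.i.d. copies) and $\cN(0,\Sigma)^{\otimes n}$ stays bounded; the affinity computation for Gaussian location mixtures is explicit and produces a term $\exp(\gamma^4 n^2 \sum_i \lambda_i^2/(\text{const}))-1 = \exp(c\,\delta^4 n^2/\norm{\Sigma}_{\mathrm{op}}^2 d_*)-1$, whence $\delta \gtrsim \sigma_n d_*^{1/4}$ unless the affinity blows up. For the ``$\eta$-dependent'' regime one instead takes a deterministic shift of the null mean, or mixes the null over a sphere of radius $\eta$ and the alternative over a slightly larger sphere, and controls the two-point likelihood ratio; this is where the condition $d_*\geq 3$ enters, ensuring the spectrum is ``spread enough'' for the second-moment argument to go through.

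The main obstacle I anticipate is making the anisotropic chi-square/affinity bound clean and getting it to yield precisely the $\min(d_*^{1/4}, \sqrt{d_*}\,\sigma_n/\eta)$ shape with explicit constants such as $\sqrt{(1-\alpha)/12}$ — in particular, correctly interpolating between the two regimes with a single prior (or handling them by two separate priors and taking the worse one), and checking that the constraint $\norm{\mu}>\eta+\delta$ holds with high probability under the prior (not just in expectation), which typically requires a concentration bound on $\norm{\mu}^2 = \gamma^2\sum_i \lambda_i\xi_i^2$ and is exactly the place where $d_*$ (the ratio $\tr\Sigma^2/\norm{\Sigma}_{\mathrm{op}}^2$) must be large. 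Everything else — the Gaussian affinity identity, the reduction from ``no test separates the mixtures'' to a lower bound on $\delta^*$ — is standard and I would cite or reproduce it briefly.
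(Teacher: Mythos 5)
Your upper-bound argument is exactly the paper's: Proposition~\ref{prop:conc_U_Gaussian_case} grants Assumption~\ref{ass:ass1}, Assumption~\ref{ass:ass2} is trivially satisfied for known $\Sigma$ by taking $\wh Q_i=q_i$, and condition~\eqref{eq:hyp_separation_rate} with $q_1=\sigma_n\sqrt{2u}$ and $q_2=32\,u\,\sigma_n^2\sqrt{d_*}$ simplifies to~\eqref{eq:signalsep}. The lower bound, however, contains a genuine flaw in the prior you propose for the $d_*^{1/4}$ regime. With $\mu=\gamma\sum_i\xi_i\lambda_i^{1/2}e_i$, the per-coordinate ratio of shift to standard deviation of the sufficient statistic (the empirical mean, with variance $\lambda_i/n$) is $\sqrt{n}\,\gamma$ for \emph{every} $i$, so the Ingster chi-square affinity is $\prod_i\cosh(n\gamma^2)=\cosh(n\gamma^2)^{d-1}\approx\exp(d\,n^2\gamma^4/2)$: the exponent involves the ambient dimension $d$, not $\sum_i\lambda_i^2$ as you wrote. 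Since $\norm{\mu}^2=\gamma^2\tr\Sigma$, keeping that affinity bounded only forces $\delta^2\lesssim\tr\Sigma/(n\sqrt{d})$, which by Cauchy--Schwarz is at most $\sqrt{\tr\Sigma^2}/n=\sigma_n^2\sqrt{d_*}$ with a possibly unbounded gap for anisotropic $\Sigma$ (and degenerates when $d$ is large with many tiny eigenvalues). The correct scaling, used in the paper, is $\mu_i\propto\lambda_i$ (the prior means are $(\lambda_1v_1h,\dots,\lambda_{d-1}v_{d-1}h,\eta)$), so the affinity is $\prod_i\cosh(h^2\lambda_i)$ with exponent of order $h^4\sum_i\lambda_i^2$ while $\norm{\mu}^2-\eta^2=h^2\sum_{i<d}\lambda_i^2$; this is the choice consistent with your own displayed affinity formula and it is what yields $\delta\gtrsim\sigma_n d_*^{1/4}$.

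Two of the obstacles you anticipate are non-issues once the construction is set up this way. With Rademacher signs, $\norm{\mu}^2$ is deterministic ($\xi_i^2=1$), so the prior sits exactly on the sphere of radius $\eta+\delta$ and no concentration step is needed. And the two branches of the $\min$ are not obtained from separate priors: the paper reserves the coordinate of the smallest eigenvalue to carry the deterministic component $\eta$ (with null point $\nu=\eta e_d$), sets $h^2=((\eta+\delta)^2-\eta^2)/(\tr\Sigma^2-\lambda_d^2)$, and the resulting single bound $\delta\leq\sqrt{\norm{\Sigma}_{\mathrm{op}}\sqrt{d_*-1}\,s+\eta^2}-\eta$ interpolates both regimes via Lemma~\ref{lem:sqrt_inequality}; the hypothesis $d_*\geq3$ merely guarantees $d_*-1\geq 2d_*/3$. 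The $\max(1,\cdot)$ term does come from a separate two-point argument along the top eigenvector, as you suggest, and the reduction from $n$ observations to one (via sufficiency of the empirical mean) should be stated explicitly.
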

Observe that it holds $d_* \leq d_e$, where $d_e=\tr\Sigma/\sigma^2$ is the ``effective dimensionality'' coming into play for signal estimation rates
(namely $\e[1]{\norm[0]{\ol{X}-\mu}^2}^{\nicefrac{1}{2}} = \sigma \sqrt{d_e/n}$, where $\ol{X}$ is the empirical mean).
In the finite $d$-dimensional case with $\Sigma=I_d$, it holds
$d=d_e=d_*$, and the separation~\eqref{eq:signalsep} has been shown to be
optimal in the Gaussian \JB{setting} for $\eta=0$ by \citet{Bar02} and
for any $\eta\geq 0$ by \citet{BlaCarGut18}. It exhibits a
continuous transition between the signal detection setting
($\eta=0$, $\delta^*\simeq d^{\nicefrac{1}{4}}\sigma/\sqrt{n}$)
and the hyperplane testing setting (which is equivalent to the 1-dimensional
setting by rotational invariance; $\eta \rightarrow \infty$, $\delta^* \simeq
\sigma/\sqrt{n}$).
In that particular situation, we observe that the signal separation distance
is smaller by a factor $d^{\nicefrac{1}{4}}$ than the signal estimation error, a
phenomenon typical of high-dimensional statistics. In the more general setting
studied here where $\Sigma$ can be arbitrary, this difference between
rates can be all the more marked, since in addition $d_*$ can be
much smaller than~$d_e$.

We obtain a similar result for the two-sample problem:
\begin{theorem} \label{thm:sigdetgauss_2sample}
  \GB{Consider the two-sample mean problem~\eqref{eq:mmodel_2sample} and assume the Gaussian \JB{setting}
  with covariance matrices $\Sigma,S$. 
  Then the minimum separation distance $\delta^*$ so that
  the type I and II errors for problem~\eqref{eq:mmodel_2sample} is less than
  $\alpha\in (0,1)$ for all $\mbp \in \cP_{\Sigma,S}$ is upper bounded by
  \begin{equation}
    \label{eq:signalsep_2sample}
    \delta^*(\alpha,\Sigma,S,\eta)
 \lesssim   \sigma_{n,m} \sqrt{u} \max\paren{1,\min
      \paren{d_*^{\frac{1}{4}},\sqrt{d_*u} \cdot{\frac{\sigma_{n,m}}{\eta}}}},
  \end{equation}
  where $u:=-\log \alpha + \log 60$. If $d_* \geq 3$, then it is lower bounded by
  \begin{equation}\label{eq:signalsep_low_2sample}
   \delta^*(\alpha,\Sigma,S,\eta ) \geq \sigma_{n,m} \sqrt{\frac{1-\alpha}{48}}\max\paren{1 ,\min \paren{ d_*^{\frac{1}{4}}, \sqrt{d_*(1-\alpha)} \cdot \frac{\sigma_{n,m}}{\eta} }}\,,
  \end{equation}
  where $\sigma_{n,m}^2:=\norm{M_{n,m}}_{\mathrm{op}}$, and $d_* := \tr M_{n,m}^2 / \sigma_{n,m}^4$,
  for $M_{n,m} := \Sigma/n+S/m$. (The symbol $\lesssim$ indicates inequality up to a numerical factor).}
\end{theorem}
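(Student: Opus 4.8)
The plan is to mirror the structure of the one-sample result (Theorem~\ref{thm:sigdetgauss}), using the two-sample machinery already set up. For the \emph{upper bound}, I would start from Proposition~\ref{prop:conc_U_Gaussian_case}, which gives Assumption~\ref{ass:ass1} for the statistic $U$ of~\eqref{eq:def_U} with the explicit $q_1,q_2$ in~\eqref{eq:defq1}--\eqref{eq:defq2}. The key algebraic observation is that both $q_1$ and $q_2$ can be rewritten in terms of the single matrix $M_{n,m}=\Sigma/n+S/m$: one has $\norm{\Sigma}_{\mathrm{op}}/n+\norm{S}_{\mathrm{op}}/m \leq 2\norm{M_{n,m}}_{\mathrm{op}} = 2\sigma_{n,m}^2$ (and $\geq \sigma_{n,m}^2$, so these are equivalent up to a factor $2$), while $\tr\Sigma^2/n^2 + \tr S^2/m^2 \leq \tr M_{n,m}^2 = \sigma_{n,m}^4 d_*$ by expanding $\tr M_{n,m}^2 = \tr(\Sigma/n)^2 + \tr(S/m)^2 + 2\tr(\Sigma S)/(nm)$ and dropping the nonnegative cross term. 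Hence up to numerical constants $q_1 \lesssim \sigma_{n,m}\sqrt{u}$ and $q_2 \lesssim \sigma_{n,m}^2\sqrt{d_*}\,u$, with $u = -\log\alpha + c$. Plugging these into the separation condition~\eqref{eq:hyp_separation_rate} of Theorem~\ref{thm:main} (applied with ``oracle'' estimators $\wh Q_i = q_i$, since here $\Sigma,S$ are treated as known), the requirement $\delta > 2q_1 + \min(2\sqrt{q_2}, 2\eta^{-1}q_2)$ becomes exactly the bound~\eqref{eq:signalsep_2sample} after bounding $\sqrt{q_2}\lesssim \sigma_{n,m}^{1/2}(\sigma_{n,m}\sqrt{d_*}u)^{1/2} = \sigma_{n,m}d_*^{1/4}u^{1/2}$ and $\eta^{-1}q_2 \lesssim \eta^{-1}\sigma_{n,m}^2\sqrt{d_*}u = \sigma_{n,m}\sqrt{d_*u}\cdot(\sigma_{n,m}/\eta)\sqrt{u}$; the $\max(1,\cdot)$ comes from the $2q_1$ term always being present. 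Adjusting the constant inside the logarithm (from $\log 8$ to $\log 60$) absorbs the factor-of-three loss in the error level in Theorem~\ref{thm:main}.

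For the \emph{lower bound}, the natural approach is reduction to the one-sample Gaussian lower bound via a change of variables. Given the two independent Gaussian samples $\XX \sim \cN(\mu,\Sigma)^{\otimes n}$ and $\YY\sim\cN(\nu,S)^{\otimes m}$, the empirical-mean difference $\ol X - \ol Y$ is $\cN(\mu-\nu, M_{n,m})$, and more importantly the whole testing problem on $\theta := \mu-\nu$ with observation model of ``effective covariance'' $M_{n,m}$ reduces to a one-sample problem with a single observation of covariance $M_{n,m}$. More precisely, I would construct the prior used in the one-sample lower bound of Theorem~\ref{thm:sigdetgauss} — a suitable distribution on $\theta$ supported where $\norm\theta \in \{\eta, \eta+\delta\}$ — for the effective model $\cN(\theta, M_{n,m})$, and then realize it by e.g. taking $\mu = \theta$, $\nu = 0$ with appropriate scaling, or by a sufficiency/data-processing argument showing that any two-sample test induces a test on the reduced model with no worse error. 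The factor $48$ versus $12$ in~\eqref{eq:signalsep_low_2sample} vs.~\eqref{eq:signalsep_low} reflects a constant loss incurred in this reduction (likely a factor $2$ in each of $\delta$ and the variance, or in the Ingster-type $\chi^2$ computation).

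The main obstacle I anticipate is the lower bound reduction rather than the upper bound, which is essentially bookkeeping. Specifically, one must be careful that the reduction to a single effective observation $\cN(\theta, M_{n,m})$ is legitimate: $(\ol X, \ol Y)$ is a sufficient statistic for $(\mu,\nu)$ within the Gaussian model, but the \emph{nuisance structure} — that $\Sigma, S$ are fixed and known — must be preserved, and one needs to check that restricting the two-sample problem to the sub-family where, say, only $\mu$ varies and $\nu$ is fixed (so that $\mu - \nu$ ranges over the desired alternative) still lands inside $\cP_{\Sigma,S}$ and still forces $\norm{\mu-\nu}$ into the required regime. There is also a minor subtlety that $d_*$ is now defined via $M_{n,m}$, so the condition $d_* \geq 3$ and the resulting rate must be expressed purely in terms of $M_{n,m}$; but since the one-sample lower bound in Theorem~\ref{thm:sigdetgauss} is already stated in a covariance-agnostic way (depending only on $\norm{\cdot}_{\mathrm{op}}$ and $\tr(\cdot)^2$ of the relevant covariance), applying it verbatim to $M_{n,m}$ should go through once the reduction is justified.
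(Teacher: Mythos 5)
Your upper bound is correct and is essentially the paper's argument: the paper likewise feeds Proposition~\ref{prop:conc_U_Gaussian_case} into Theorem~\ref{thm:main} with oracle $\wh{Q}_i=q_i$ and uses exactly the two inequalities $\norm{\Sigma}_{\mathrm{op}}/n+\norm{S}_{\mathrm{op}}/m\leq 2\norm{M_{n,m}}_{\mathrm{op}}$ and $\tr\Sigma^2/n^2+\tr S^2/m^2\leq \tr M_{n,m}^2$ (dropping the nonnegative cross term $2\tr(\Sigma S)/(nm)$) to express $q_1,q_2$ through $\sigma_{n,m}$ and $d_*$.

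The lower bound, however, has a genuine gap, and it is precisely the obstacle you flag without resolving. A reduction to a single effective observation $\cN(\theta,M_{n,m})$ with $\theta=\mu-\nu$ is not legitimate for a \emph{lower} bound: the map $(\XX,\YY)\mapsto \ol{X}-\ol{Y}$ discards information (the pair $(\ol{X},\ol{Y})$ is sufficient for $(\mu,\nu)$, but the difference alone is not), so the reduced problem is \emph{harder} and its separation lower bound does not transfer down to the full two-sample problem. Nor can you realize the effective covariance $M_{n,m}$ by a prior of the form $\mu=a\theta$, $\nu=-b\theta$: the resulting chi-square computation involves $a^2(\Sigma/n)^{-1}+b^2(S/m)^{-1}$, which cannot be made equal to $M_{n,m}^{-1}$ with scalar weights unless $\Sigma/n$ and $S/m$ commute suitably. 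Your fallback ``take $\mu=\theta$, $\nu=0$'' is the right move (it is a sub-model restriction, which only makes the problem easier and hence gives a valid lower bound), but it yields the one-sample bound with covariance $\Sigma/n$ only, i.e.\ a bound in terms of $\tr(\Sigma/n)^2$, which can be far smaller than $\tr M_{n,m}^2$ when $S/m$ dominates. The missing step — and what the paper actually does in Section~\ref{se:proofth3} — is to perform \emph{both} restrictions (fix $\nu=0$ and vary $\mu$; then fix $\mu$ and vary $\nu$), obtaining the two bounds \eqref{eq:delta_sig} and \eqref{eq:delta_S}, and to combine them via
\[
\max\paren{n^{-2}(\tr\Sigma^2-\lambda_d^2),\,m^{-2}(\tr S^2-\ell_d^2)}\;\geq\;\tfrac{1}{6}\,\tr M_{n,m}^2
\]
(using $d_*\geq 3$ and $\tr M_{n,m}^2\leq 2(\tr\Sigma^2/n^2+\tr S^2/m^2)$), which is where the constant $48$ in place of $12$ comes from. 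Without this two-sided restriction and max-combination, the claimed bound \eqref{eq:signalsep_low_2sample} in terms of $M_{n,m}$ does not follow.
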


\JB{Here the effective dimension $d^*$ depends on the two covariance matrices $\Sigma$ and $S$, weighted by the size of the samples. }

\GB{{\bf Remark.} As mentioned in the introduction, by letting $m$ go to infinity in the two-sample case, we recover the bounds of the one sample case (up to a constant factor). It is worth examining if the converse holds, i.e. if there is an argument to reduce the
  two-sample problem to the simpler one-sample case (this would simplify some technical aspects of the proofs, somewhat). For the {\em upper} bounds on the minimum separation distance,
  this is the case in some specific situations: for equal sample sizes $n=m$,
  the two-sample case can be reduced to the one-sample problem setting by pairing the samples and
  considering the single sample $(X_i - Y_i )_{1\leq i \leq n}$,
  and one can recover this way in essence the two-sample result. If $\Sigma= S$, and for general sample sizes,
  we can also reduce to the single sample with size $\min(m,n)$,
  $(X_i-Y_i)_{1\leq i \leq \min(m,n)}$, and recover again the two-sample result up to a numerical factor.
  However a reduction argument in the general case has eluded us. Concerning the {\em lower} bound,
  the argument for the two-sample case indeed hinges on a reduction the one-sample case,
  by considering the sub-models where one of the two sample means is known, see Section~\ref{se:proofth3}.}

We now turn to the bounded setting.

\begin{proposition}\label{prop:conc_U_bounded_case}
  Assume the bounded \JB{setting} holds and $n,m\geq 2$. Then with probability at least $1-\alpha$,
\begin{equation}\label{eq:conc_U_bounded_case}
  \abs{ U - \|\mu- \nu \|^2 } \leq \|\mu- \nu\|q_1 + q_2\,,
\end{equation}
where $U$ is defined in \eqref{eq:def_U} and
\begin{gather*}
  q_1(\Sigma,S,\alpha)  =  2\sqrt{2 \paren{\frac{\|\Sigma\|_{\mathrm{op}}}{n} + \frac{\|S\|_{\mathrm{op}}}{m} } u } + \frac{4Lu}{3(n \wedge m)}\,, \\
  q_2(\Sigma,S,\alpha)  =   614\paren{ \frac{\sqrt{\tr \Sigma^2 }}{n} + \frac{\sqrt{\tr S^2}}{m}}  u + 3708\frac{L^2 u^2}{(n\wedge m )^2}\,,
\end{gather*}
with $u(\alpha) = -\log \alpha + \log 2$.

\end{proposition}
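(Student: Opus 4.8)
The plan is to mirror the structure of the Gaussian proof (Proposition~\ref{prop:conc_U_Gaussian_case}), but replacing the Gauss--Lipschitz concentration inequality by tools adapted to bounded random variables: Bernstein-type inequalities for sums of independent bounded terms and a decoupling/Bernstein inequality for degenerate U-statistics. First I would center the data: writing $\wt X_i = X_i - \mu$ and $\wt Y_i = Y_i - \nu$, the statistic $U$ decomposes (exactly, by the unbiasedness and bilinearity of the inner product) as $U - \|\mu-\nu\|^2 = A + B$, where $A$ is a \emph{linear} term of the form $\frac{2}{n}\sum_i \inner{\wt X_i, \mu-\nu} - \frac{2}{m}\sum_j \inner{\wt Y_j, \mu-\nu}$ (plus the cross terms $-\frac{2}{nm}\sum_{i,j}(\inner{\wt X_i,\nu} + \inner{\mu,\wt Y_j})$ which recombine into the same shape), contributing the $\|\mu-\nu\| q_1$ part, and $B$ is a sum of three \emph{degenerate} (completely centered) U-statistics in the $\wt X$'s, in the $\wt Y$'s, and the $\wt X$--$\wt Y$ cross term, contributing the $q_2$ part.

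For the linear term $A$: it is a sum of independent centered random variables $\frac{2}{n}\inner{\wt X_i, \mu-\nu}$, each bounded in absolute value by $\frac{2}{n}\cdot 2L\|\mu-\nu\|$ (using $\|\wt X_i\| \le 2L$ by the triangle inequality) with variance $\frac{4}{n^2}\inner{\Sigma(\mu-\nu),\mu-\nu} \le \frac{4}{n^2}\|\Sigma\|_{\mathrm{op}}\|\mu-\nu\|^2$, and similarly for the $Y$ part. Bernstein's inequality then yields a bound of the form $\|\mu-\nu\|\brac[1]{2\sqrt{2(\|\Sigma\|_{\mathrm{op}}/n + \|S\|_{\mathrm{op}}/m)u} + \tfrac{CLu}{n\wedge m}}$ with $u = -\log\alpha + \log 2$ (the $\log 2$ absorbing a union bound over the two sides), matching the claimed $q_1$ up to the constant $C = 4/3$ which comes from the standard Bernstein constant. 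For the degenerate U-statistic term $B$: each of the three pieces is a completely degenerate U-statistic of order $2$ with kernel $h(x,x') = \inner{x,x'}$ (or $\inner{x,y}$ for the cross term), bounded by $\sup\|h\| \le (2L)^2$, whose variance is governed by $\mbe\brac{\inner{\wt X_1,\wt X_2}^2} = \tr \Sigma^2$ (resp. $\tr\Sigma^2$, $\tr S^2$, and products thereof for the cross term). Here I would invoke a Bernstein-type concentration inequality for degenerate U-statistics --- either derived from scratch via Hanson--Wright-type arguments for bounded vectors, or cited from the literature (Arcones--Giné, Houdré--Reynaud-Bouret, etc.) --- giving a tail of the form $\exp(-c\min(t^2/v^2, t/b))$ with $v^2 \asymp \tr\Sigma^2/n^2$ and $b \asymp L^2/n^2$; inverting this produces the two-term bound $C_1(\sqrt{\tr\Sigma^2}/n + \sqrt{\tr S^2}/m)u + C_2 L^2 u^2/(n\wedge m)^2$ claimed for $q_2$, with $u = -\log\alpha + \log 2$ after union-bounding over the (three, then two-sided) pieces.

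The main obstacle I expect is getting a \emph{clean, self-contained} Bernstein inequality for the degenerate U-statistics with explicit, reasonable constants --- the cross term in particular is not a symmetric U-statistic in a single sample, so one needs a bilinear-form version (a "rectangular" Hanson--Wright, or a decoupling argument reducing it to two independent copies), and tracking the interplay between the operator-norm-type variance proxy $\sqrt{\tr\Sigma^2\tr S^2}$ versus $\sqrt{\tr\Sigma^2}/n + \sqrt{\tr S^2}/m$ requires care. The large numerical constants ($614$, $3708$) in the statement are a telltale sign that the argument proceeds through several such concentration steps with union bounds and crude constant bookkeeping; I would not attempt to optimize these. Everything else (the algebraic decomposition $U - \|\mu-\nu\|^2 = A+B$, the variance computations, the final union bound combining the $q_1$ and $q_2$ events at level $\alpha$ each, then rescaling $\alpha \mapsto \alpha/2$ to get the stated total) is routine.
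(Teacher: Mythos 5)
Your proposal matches the paper's proof: the same decomposition into a linear term (handled by Bernstein's inequality) and degenerate U-statistic terms (handled by the Houdr\'e--Reynaud-Bouret inequality, which you correctly name as the candidate tool), with the same variance computations and union-bound bookkeeping. The cross-term obstacle you flag is resolved in the paper exactly as the Houdr\'e--Reynaud-Bouret theorem permits: the two samples are pooled into a single family $T_1,\dots,T_{n+m}$ of centered variables with pairwise kernels $g_{i,j}$ defined piecewise ($\pm$ normalized inner products depending on which blocks $i,j$ fall in), so no separate bilinear or decoupling argument is needed.
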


Thus, in the bounded \JB{setting} we can guarantee that the behavior of the test
is qualitatively the same as in the Gaussian \JB{setting} (see e.g. Theorem~\ref{thm:sigdetgauss}) --- and this from a
non-asymptotic point view, provided $n\wedge m \geq u L^2/\sigma^2$,
where $\sigma^2=\norm{\Sigma}_{\mathrm{op}}$.

A special case of interest is when the data lies on
the sphere of radius $L$, i.e. $\norm{X_i}=\norm{Y_j}=L$ a.s.
In this case $L^2 = \tr \Sigma$ and the above condition
can be rewritten $n \wedge m \geq u d_e$.
This situation is met in particular in the KME setting, see Section~\ref{se:kme},
when using a translation-invariant kernel $k(z,z') = k_\circ(z-z')$, in which
case $L^2 = k_\circ(0)$.

\subsection{Quantile estimation}
\label{sse:quantest}

Since we are considering the case where $\Sigma,S$ can be arbitrary in this
work, it is natural to assume that these are not known in advance.
We study next the estimation of the quantities $q_1$ and $q_2$, in both settings (bounded and Gaussian), in order to check Assumption~\ref{ass:ass2} for our generic
theorem. If we can grant that assumption, Theorem~\ref{thm:main} guarantees that
the separation distance remains qualitatively the same as in the ``oracle''
situation where they are known. \GB{To simplify the exposition, in this section
  we will present results for the one-sample problem only; similar results, although slightly
  more technical, can be obtained
  for the two-sample problem.}
Thus, we need to have estimators of $\|\Sigma\|_{\mathrm{op}}$ and $\tr \Sigma^2$ ---
more precisely, of their square root.

For $q_1$,
we will use the empirical covariance operator $\widehat{\Sigma} := \widehat{\Sigma}(\XX)$:
\begin{equation}\label{eq:def_estimator_sigma_op}
  \widehat{\Sigma}(\XX) = \frac{1}{n} \sum_{i=1}^{n} (X_i - \widehat{\mu} ) (X_i - \widehat{\mu} )^T\,,
\end{equation}
where $\widehat{\mu} := \widehat{\mu}(\XX)$ is the empirical mean of the sample $\XX$.

\begin{proposition}[Gaussian setting] \label{prop:conc_sigma_hat_gauss}
  Assume $\XX = (X_i)_{1\leq i\leq n}$ are i.i.d. Gaussian vectors of covariance $\Sigma$.
  For $ u \geq 0$, with probability at least $1-3e^{-u}$:
\begin{equation}\label{eq:conc_sigma_hat}
  \abs{ {\norm[1]{\widehat{\Sigma}}^{\frac{1}{2}}_{\mathrm{op}}} - {\norm{\Sigma}^{\frac{1}{2}}_{\mathrm{op}}} } \leq 3\sqrt{2}\norm{\Sigma}^{\frac{1}{2}}_{\mathrm{op}} \paren{\sqrt{\frac{d_e}{n}} +  \sqrt{ \frac{u}{n} }}\,,
\end{equation}
where $\widehat{\Sigma}$ is defined in \eqref{eq:def_estimator_sigma_op} and $d_e = \tr \Sigma/\|\Sigma\|_{\mathrm{op}}$.
\end{proposition}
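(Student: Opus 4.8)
\medskip
\noindent\emph{Proof strategy.} The plan is to treat $\norm{\wh{\Sigma}}_{\mathrm{op}}^{\nicefrac{1}{2}}$ \emph{directly} as a Lipschitz functional of the Gaussian sample and apply the Gauss--Lipschitz inequality, rather than first controlling $\norm{\wh{\Sigma}-\Sigma}_{\mathrm{op}}$ and taking a square root at the end --- the latter would only yield the weaker rate $(d_e/n)^{\nicefrac{1}{4}}$. Set $Z_i := X_i - \mu \sim \cN(0,\Sigma)$, stack the rows $Z_i^\top$ into a matrix $\cZ \in \mbr^{n\times d}$, and let $P := I_n - \tfrac1n\mathbf{1}\mathbf{1}^\top$ be the centering projection, so that $\wh{\Sigma} = \tfrac1n\,\cZ^\top P\,\cZ$ and hence $\norm{\wh{\Sigma}}_{\mathrm{op}}^{\nicefrac{1}{2}} = \tfrac1{\sqrt n}\norm{P\cZ}_{\mathrm{op}}$. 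The map $A \mapsto \norm{PA}_{\mathrm{op}}$ is $1$-Lipschitz for the Frobenius norm, since $\abs{\norm{PA}_{\mathrm{op}} - \norm{PB}_{\mathrm{op}}} \le \norm{P(A-B)}_{\mathrm{op}} \le \norm{A-B}_F$; as the vectorization of $\cZ$ has law $\cN(0, I_n\otimes\Sigma)$, the functional $\norm{\wh{\Sigma}}_{\mathrm{op}}^{\nicefrac{1}{2}}$ is $\paren{\norm{\Sigma}_{\mathrm{op}}^{\nicefrac{1}{2}}/\sqrt n}$-Lipschitz as a function of a standard Gaussian vector. The Gauss--Lipschitz inequality then gives, for each sign, $\prob{\pm\paren{\norm{\wh{\Sigma}}_{\mathrm{op}}^{\nicefrac{1}{2}} - \e{\norm{\wh{\Sigma}}_{\mathrm{op}}^{\nicefrac{1}{2}}}} \ge \norm{\Sigma}_{\mathrm{op}}^{\nicefrac{1}{2}}\sqrt{2u/n}} \le e^{-u}$.

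It remains to sandwich $\e{\norm{\wh{\Sigma}}_{\mathrm{op}}^{\nicefrac{1}{2}}}$ around $\norm{\Sigma}_{\mathrm{op}}^{\nicefrac{1}{2}}$ up to a $\norm{\Sigma}_{\mathrm{op}}^{\nicefrac{1}{2}}\sqrt{d_e/n}$ error. For the upper bound I would drop $P$ (which only decreases the norm), write $\cZ \overset{d}{=} G\Sigma^{\nicefrac{1}{2}}$ with $G$ a standard $n\times d$ Gaussian matrix, and apply Chevet's inequality (equivalently, Gordon's comparison) to $\norm{G\Sigma^{\nicefrac{1}{2}}}_{\mathrm{op}} = \sup\set{w^\top G\Sigma^{\nicefrac{1}{2}}u : \norm{u}\le 1,\ \norm{w}\le 1}$: this yields $\e{\norm{\cZ}_{\mathrm{op}}} \le \norm{\Sigma^{\nicefrac{1}{2}}}_F + \sqrt n\,\norm{\Sigma^{\nicefrac{1}{2}}}_{\mathrm{op}} = \sqrt{\tr\Sigma} + \sqrt n\,\norm{\Sigma}_{\mathrm{op}}^{\nicefrac{1}{2}}$, hence $\e{\norm{\wh{\Sigma}}_{\mathrm{op}}^{\nicefrac{1}{2}}} \le \norm{\Sigma}_{\mathrm{op}}^{\nicefrac{1}{2}}\paren{1 + \sqrt{d_e/n}}$. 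For the lower bound I would restrict the Rayleigh quotient to the leading eigenvector $v_*$ of $\Sigma$: since the $\inner{Z_i, v_*}$ are i.i.d.\ $\cN(0,\norm{\Sigma}_{\mathrm{op}})$, one has $n\,v_*^\top\wh{\Sigma}v_*/\norm{\Sigma}_{\mathrm{op}} \sim \chi^2_{n-1}$, so $\norm{\wh{\Sigma}}_{\mathrm{op}}^{\nicefrac{1}{2}} \ge \sqrt{\norm{\Sigma}_{\mathrm{op}}/n}\,\norm{g}$ with $g\sim\cN(0,I_{n-1})$; since $\var{\norm{g}}\le 1$ (Gaussian Poincar\'e) we get $\e{\norm{g}} \ge \sqrt{n-2}$ and therefore $\e{\norm{\wh{\Sigma}}_{\mathrm{op}}^{\nicefrac{1}{2}}} \ge \norm{\Sigma}_{\mathrm{op}}^{\nicefrac{1}{2}}\paren{1 - 2/n}$, with the residual $2/n \le 2\sqrt{d_e/n}$ since $d_e \ge 1$.

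Combining the deviation events by a union bound (at most three events of probability $\le e^{-u}$ each --- the third only needed if one prefers to control the lower $\chi^2_{n-1}$ fluctuation by concentration rather than in expectation) gives, with probability at least $1 - 3e^{-u}$,
\[
  \abs{\norm{\wh{\Sigma}}_{\mathrm{op}}^{\nicefrac{1}{2}} - \norm{\Sigma}_{\mathrm{op}}^{\nicefrac{1}{2}}} \le \norm{\Sigma}_{\mathrm{op}}^{\nicefrac{1}{2}}\paren{2\sqrt{d_e/n} + \sqrt 2\,\sqrt{u/n}},
\]
which sits comfortably within the claimed constant $3\sqrt 2$. The main obstacle is securing the \emph{sharp} $\sqrt{d_e/n}$ dependence: this is precisely why one must (i) concentrate $\norm{\wh{\Sigma}}_{\mathrm{op}}^{\nicefrac{1}{2}}$ itself --- which fluctuates at scale $\sqrt{u/n}$ --- rather than $\norm{\wh{\Sigma}-\Sigma}_{\mathrm{op}}$, and (ii) bound its expectation through a Gaussian comparison inequality instead of through $\e{\norm{\wh{\Sigma}-\Sigma}_{\mathrm{op}}}$, which carries a $d_e/n$ term whose square root would be the larger $(d_e/n)^{\nicefrac{1}{4}}$.
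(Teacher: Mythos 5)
Your proof is correct, and it reaches the stated bound (indeed with a slightly better constant, $2\sqrt{d_e/n}+\sqrt{2u/n}$) by a route that genuinely differs from the paper's in two of its three steps. The shared ingredient is the Gauss--Lipschitz concentration of the square-rooted operator norm with Lipschitz constant $\norm{\Sigma}_{\mathrm{op}}^{\nicefrac{1}{2}}/\sqrt{n}$, which the paper establishes (for the version $\wt{\Sigma}$ centered at the true mean) via the variational representation of $\norm[0]{\wt{\Sigma}}_{\mathrm{op}}^{\nicefrac{1}{2}}$ as a supremum of linear functionals. Where you diverge: (a) the paper splits $\wh{\Sigma}=\wt{\Sigma}-(\mu-\wh{\mu})(\mu-\wh{\mu})^T$ and controls $\norm{\mu-\wh{\mu}}$ by a separate Gauss--Lipschitz argument --- this is what costs the third $e^{-u}$ in its union bound --- whereas you absorb the empirical centering into the projection $P$ and treat $n^{-\nicefrac{1}{2}}\norm{P\cZ}_{\mathrm{op}}$ as a single Lipschitz functional, which is cleaner and saves one event; (b) for the expectation, the paper invokes the Koltchinskii--Lounici/van Handel bound on $\e{\norm[0]{\wt{\Sigma}-\Sigma}_{\mathrm{op}}}$ together with the two-branch square-root inequality of Lemma~\ref{lem:sqrt_inequality} (upper direction) and a Jensen/integration trick (lower direction), while you use Chevet's inequality, $\e{\norm{\cZ}_{\mathrm{op}}}\leq\sqrt{n}\,\norm{\Sigma}_{\mathrm{op}}^{\nicefrac{1}{2}}+\sqrt{\tr\Sigma}$, for the upper direction and a $\chi^2_{n-1}$ computation along the top eigenvector plus Gaussian Poincar\'e for the lower. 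The two external inputs are Gaussian-comparison results of comparable standing; yours has the advantage of producing the square-rooted quantity directly. One small correction to your closing discussion: passing through $\e{\norm[0]{\wt{\Sigma}-\Sigma}_{\mathrm{op}}}$ does \emph{not} force the degraded rate $(d_e/n)^{\nicefrac{1}{4}}$ --- the paper avoids this by using the branch $\sqrt{a^2+b}-a\leq b/(2a)$ of Lemma~\ref{lem:sqrt_inequality} in the regime $d_e/n\leq 1$, which restores the $\sqrt{d_e/n}$ scaling; only the naive $\sqrt{b}$ branch alone would be lossy.
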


\begin{proposition}[Bounded setting] \label{prop:conc_sigma_hat_bounded}
  Assume that $\XX = (X_i)_{1\leq i\leq n}$ are i.i.d. bounded in norm by $L$ and with covariance $\Sigma$.
  For $ u \geq 0$, with probability at least $1-2e^{-u}$:
\begin{equation}\label{eq:conc_sigma_hat_bounded}
  \abs{ {\norm[1]{\widehat{\Sigma}}^{\frac{1}{2}}_{\mathrm{op}}} - {\norm{\Sigma}^{\frac{1}{2}}_{\mathrm{op}}} } \leq 4L\paren{2 \sqrt{\frac{d_e}{n}} + \sqrt{\frac{2u}{n}} + \frac{u}{3n} }
\end{equation}
where $\widehat{\Sigma}$ is defined in \eqref{eq:def_estimator_sigma_op} and $d_e = \tr \Sigma/\|\Sigma\|_{\mathrm{op}}$.
\end{proposition}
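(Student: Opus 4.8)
The plan is to split the claim into a control of the centering error $\|\widehat{\mu}-\mu\|$ and a control of the operator norm of the \emph{un-centered} empirical covariance $\Sigma_0:=\frac1n\sum_{i=1}^n(X_i-\mu)(X_i-\mu)^{T}$ around $\Sigma$. Since $\widehat{\Sigma}=\Sigma_0-(\widehat{\mu}-\mu)(\widehat{\mu}-\mu)^{T}$, mean-centering can only decrease the estimator: $\widehat{\Sigma}\preceq\Sigma_0\preceq\widehat{\Sigma}+\|\widehat{\mu}-\mu\|^2 I$. Combined with Weyl's inequality and the elementary facts $0\le\sqrt a-\sqrt b\le\sqrt{a-b}$ and $|\sqrt a-\sqrt b|\le|a-b|/\sqrt b$ (for $0\le b\le a$), this yields
\[
  \Bigl|\,\|\widehat{\Sigma}\|_{\mathrm{op}}^{1/2}-\|\Sigma\|_{\mathrm{op}}^{1/2}\,\Bigr|\;\le\;\|\widehat{\mu}-\mu\|\;+\;\bigl|\,\|\Sigma_0\|_{\mathrm{op}}^{1/2}-\|\Sigma\|_{\mathrm{op}}^{1/2}\,\bigr|,
\]
so it suffices to bound the two terms on the right.

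For the first term, I would apply a Bernstein inequality for i.i.d.\ bounded vectors in a Hilbert space (e.g.\ Pinelis's inequality, or the bounded-difference inequality together with $\mbe\|\widehat{\mu}-\mu\|\le(\tr\Sigma/n)^{1/2}$), using $\|X_i-\mu\|\le 2L$ and $\sup_{\|v\|\le1}\mathrm{Var}(\langle X_1-\mu,v\rangle)=\|\Sigma\|_{\mathrm{op}}$: with probability at least $1-e^{-u}$,
\[
  \|\widehat{\mu}-\mu\|\;\le\;\sqrt{\tr\Sigma/n}+\sqrt{2\|\Sigma\|_{\mathrm{op}}u/n}+\tfrac{4Lu}{3n}\;\le\;\|\Sigma\|_{\mathrm{op}}^{1/2}\bigl(\sqrt{d_e/n}+\sqrt{2u/n}\bigr)+\tfrac{4Lu}{3n},
\]
using $\tr\Sigma=d_e\|\Sigma\|_{\mathrm{op}}$ and $\|\Sigma\|_{\mathrm{op}}^{1/2}\le L$ (from $\tr\Sigma=\mbe\|X_1\|^2-\|\mu\|^2\le L^2$). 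Note this already produces the $\tfrac{4Lu}{3n}$ term of the statement with exactly the right constant.

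For the second term, the goal is a \emph{dimension-free} estimate of the form $\|\Sigma_0-\Sigma\|_{\mathrm{op}}\lesssim L\|\Sigma\|_{\mathrm{op}}^{1/2}(\sqrt{d_e/n}+\sqrt{u/n})+L^2u/n$ with probability at least $1-e^{-u}$. One route: symmetrization bounds $\mbe\|\Sigma_0-\Sigma\|_{\mathrm{op}}$ by $\tfrac2n\,\mbe\bigl\|\sum_i\varepsilon_i(X_i-\mu)(X_i-\mu)^{T}\bigr\|_{\mathrm{op}}=\tfrac2n\,\mbe\sup_{\|v\|\le1}\bigl|\sum_i\varepsilon_i\langle X_i-\mu,v\rangle^2\bigr|$; as $t\mapsto t^2$ is $4L$-Lipschitz on $[-2L,2L]$, the Ledoux--Talagrand contraction principle bounds this by $\tfrac{16L}{n}\,\mbe\bigl\|\sum_i\varepsilon_i(X_i-\mu)\bigr\|\le 16L\sqrt{\tr\Sigma/n}=16L\|\Sigma\|_{\mathrm{op}}^{1/2}\sqrt{d_e/n}$ (by Jensen's inequality and $\mbe\|\sum_i\varepsilon_i\xi_i\|^2=\sum_i\|\xi_i\|^2$); then Talagrand's concentration inequality for suprema of bounded empirical processes, with weak variance $\sup_{\|v\|\le1}\mathrm{Var}(\langle X_1-\mu,v\rangle^2)\le 4L^2\|\Sigma\|_{\mathrm{op}}$ and envelope $\lesssim L^2$, upgrades it to the announced high-probability bound. (Alternatively, one may invoke an intrinsic-dimension matrix Bernstein inequality for the i.i.d.\ self-adjoint mean-zero summands $W_i:=(X_i-\mu)(X_i-\mu)^{T}-\Sigma$, with $\|W_i\|_{\mathrm{op}}\le\|X_i-\mu\|^2\le 4L^2$ and matrix variance $\sum_i\mbe[W_i^2]\preceq 4L^2n\,\Sigma$.)

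To conclude, write the bound on $\|\Sigma_0-\Sigma\|_{\mathrm{op}}$ as $A+B$ with $A=cL\|\Sigma\|_{\mathrm{op}}^{1/2}(\sqrt{d_e/n}+\sqrt{u/n})$ and $B=cL^2u/n$; by Weyl's inequality, $\sqrt a+\sqrt b\ge\sqrt b$ and $\sqrt{a+b}\le\sqrt a+\sqrt b$, the contribution of $A$ to $|\,\|\Sigma_0\|_{\mathrm{op}}^{1/2}-\|\Sigma\|_{\mathrm{op}}^{1/2}\,|$ is $\le A/\|\Sigma\|_{\mathrm{op}}^{1/2}=cL(\sqrt{d_e/n}+\sqrt{u/n})$ while that of $B$ is $\le\sqrt B=\sqrt c\,L\sqrt{u/n}$ (the degenerate case $\|\Sigma\|_{\mathrm{op}}\lesssim A+B$, where $\|\Sigma\|_{\mathrm{op}}^{1/2}$ is itself already of the target order, is treated directly). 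Inserting this and the bound on $\|\widehat{\mu}-\mu\|$ into the first display, union-bounding over the two events (turning $u$ into $u+O(1)$), and collecting numerical constants gives the claimed inequality. \emph{The main obstacle} is the covariance estimate: obtaining the dimension-free bound on $\|\Sigma_0-\Sigma\|_{\mathrm{op}}$ for data that is merely norm-bounded (not of product form $\Sigma^{1/2}\times$isotropic), and then recombining its ``square-root'' and ``linear'' parts — choosing for each error contribution between $|\sqrt a-\sqrt b|\le|a-b|/\sqrt b$ and $|\sqrt a-\sqrt b|\le\sqrt{|a-b|}$ — so that the final estimate takes exactly the advertised shape, with every term proportional to $L$ and at rates $\sqrt{d_e/n}$, $\sqrt{u/n}$ and $u/n$.
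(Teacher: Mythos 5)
Your proposal is correct and follows essentially the same route as the paper: the same decomposition $\abs[0]{\norm[0]{\widehat{\Sigma}}_{\mathrm{op}}^{\nicefrac{1}{2}}-\norm[0]{\Sigma}_{\mathrm{op}}^{\nicefrac{1}{2}}}\leq\abs[0]{\norm[0]{\wt{\Sigma}}_{\mathrm{op}}^{\nicefrac{1}{2}}-\norm[0]{\Sigma}_{\mathrm{op}}^{\nicefrac{1}{2}}}+\norm[0]{\mu-\widehat{\mu}}$, the same Talagrand--Bousquet concentration both for $\norm[0]{\widehat{\mu}-\mu}$ and for the empirical process $\sup_{u,v}$ defining $\norm[0]{\wt{\Sigma}-\Sigma}_{\mathrm{op}}$, and the same square-root conversion (Lemma~\ref{lem:sqrt_inequality}) applied termwise. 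The only divergence is in bounding $\e{\norm[0]{\wt{\Sigma}-\Sigma}_{\mathrm{op}}}$, where you use symmetrization plus Ledoux--Talagrand contraction (constant $16L\sqrt{\tr\Sigma/n}$) whereas the paper uses the more elementary Frobenius-norm variance computation of Lemma~\ref{lem:upper_bound_expectation_sigma_tilde} (constant $2L\sqrt{\tr\Sigma/n}$); both give the same order and either suffices.
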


These concentration bounds are not sharp in an asymptotic sense, where the main term for the
scaling of the deviations is expected to follow that of asymptotic normality
for eigenvalues of the empirical covariance
operators, as in the classical results of \citet{And03}, but they are largely sufficient
for our purposes (see Corollary~\ref{cor:applquant} below).
Some refined related nonasymptotic bounds can be found in the recent literature.
In particular, \citet{Kol17} derive nonasymptotic results for controlling
$\norm[1]{\wh{\Sigma}-\Sigma}$ in the Gaussian \JB{setting}, and
in the centered case where $\mu=0$ is known. In fact, in essence the result of our technical
Proposition~\ref{prop:conc_sigma_tilde_gauss} in the proof section (which is like Proposition~\ref{prop:conc_sigma_hat_bounded}
but in the centered case) can be deduced from the results of~\citet{Kol17}
by elementary arguments. We decided to include a standalone proof here; while we do rely
on the estimates of~\citet{Kol17} (or rather on the improved version of
\citealp{Han17}) for the expectation of the difference, we derive an upper bound on
the deviation by a rather direct application of the Gauss-Lipschitz concentration.
While~\citet{Kol17} also rely on such arguments, their proofs are much more involved,
for the reason that they study the norm or the difference while we only are interested
in the difference of the (root) norms here. Finally, we also mention very
recent results of~\citet{JirWah18} for sharp nonasymptotic control of spectral
quantities related to $\Sigma$, which could also potentially be applied here, though
it seems at first glance that a logarithmic dependence in the dimension could enter into play.

For the bounded \JB{setting} (Proposition~\ref{prop:conc_sigma_hat_bounded}), the bound
\eqref{eq:conc_sigma_hat_bounded} could presumably be improved to have $\sqrt{\|\Sigma\|_{\mathrm{op}}}$ instead of $L$ for the main terms. The results of Theorem~9 of \citet{Kol17} under a sub-Gaussian assumption do not seem to be able to imply Proposition~\ref{prop:conc_sigma_hat_bounded}, see the more detailed discussion below in Section~\ref{sse:discsubg}.

Turning now to $q_2$, we will estimate $\sqrt{\tr \Sigma^2}$ using the following
statistic $\widehat{T} := \widehat{T}(\XX)$, which is an unbiased estimator of $\tr \Sigma^2$:
\GB{\begin{equation} \label{eq:def_t_trsigma2}
\widehat{T}(\XX) :=  \frac{1}{4n(n-1)(n-2)(n-3)} \sum_{i \neq j \neq k \neq l } \inner{ X_i-X_k ,X_j- X_l}^2\,.
\end{equation}}
\begin{proposition}[Gaussian setting] \label{prop:conc_sqrt_t_trsigma2}
    Assume $\XX = (X_i)_{1\leq i\leq n}$ are i.i.d. Gaussian vectors of covariance $\Sigma$ and $n\geq 4$.
    Then for all $u \geq 0 $:
\begin{equation}\label{eq:conc_sqrt_t_trsigma2}
      \prob{ \abs{\sqrt{\widehat{T}} - \sqrt{\tr \Sigma^2} } \geq  \GB{30} \sqrt{\frac{\tr \Sigma^2}{n}}u^2 }   \leq e^4e^{-u}\,,
\end{equation}
where $\widehat{T}$ is defined in \eqref{eq:def_t_trsigma2}.
\end{proposition}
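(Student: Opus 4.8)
The proof naturally splits into (i) a reduction to two-sided concentration of $\widehat{T}$ itself around $\tr\Sigma^2$, and (ii) a concentration bound for $\widehat{T}$ obtained by viewing it as a low-degree polynomial in i.i.d. Gaussians and applying hypercontractivity, the quantitative heart being a variance estimate.

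\emph{Step 1: reduction to concentration of $\widehat T$.} The statistic $\widehat T$ is unchanged under a common translation of all the $X_i$ (each increment $X_i-X_k$ is translation-invariant), so we may assume $\mu=0$; if $\Sigma=0$ the claim is trivial, so assume $\tr\Sigma^2>0$. Being a sum of squares, $\widehat T\geq 0$, so $\sqrt{\widehat T}$ is well defined, and the elementary inequality $\abs{\sqrt a-\sqrt b}\leq\abs{a-b}/\sqrt b$ (valid for $a\geq 0$, $b>0$) gives $\abs{\sqrt{\widehat T}-\sqrt{\tr\Sigma^2}}\leq\abs{\widehat T-\tr\Sigma^2}/\sqrt{\tr\Sigma^2}$. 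Hence it suffices to show $\prob{\abs{\widehat T-\tr\Sigma^2}\geq 30\,(\tr\Sigma^2/\sqrt n)\,u^2}\leq e^4e^{-u}$.

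\emph{Step 2: polynomial structure and moment bounds.} Writing $X_i=\Sigma^{1/2}Z_i$ with $(Z_i)$ i.i.d.\ standard Gaussian, one has $\inner{X_i-X_k,X_j-X_l}=(Z_i-Z_k)^\top\Sigma(Z_j-Z_l)$, so $\widehat T$ is a polynomial of degree $4$ in the centered Gaussian vector $(Z_1,\dots,Z_n)$, with mean $\tr\Sigma^2$ (unbiasedness, which can also be checked directly from $\e{(X_i-X_k)(X_i-X_k)^\top}=2\Sigma$). Set $\chi:=\widehat T-\tr\Sigma^2$. Gaussian hypercontractivity for polynomials of degree at most $4$ yields $\norm{\chi}_{2q}\leq c\,(2q-1)^2\norm{\chi}_2$ for an absolute constant $c$ and every integer $q\geq 1$ (equivalently, the Wiener chaos decomposition of $\chi$ has components only of degrees $1$ to $4$, each obeying the standard hypercontractive estimate).

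\emph{Step 3: the variance bound (main obstacle).} It remains to show $\mathrm{Var}(\widehat T)=\norm{\chi}_2^2\leq C\,(\tr\Sigma^2)^2/n$ for a numerical constant $C$. One expands $\e{\widehat T^2}-(\tr\Sigma^2)^2$ as a double sum over ordered $4$-tuples of distinct indices, grouped by the number $c\in\{0,1,2,3,4\}$ of indices shared by the two tuples: the $c=0$ terms produce $(\tr\Sigma^2)^2$ up to a $O(n^{-1})$ relative correction, while for $c\geq 1$ the number of such pairs is $O(n^{8-c})$ and, after evaluating the Gaussian moments by Wick's formula and bounding the resulting traces ($\tr\Sigma^3\leq(\tr\Sigma^2)^{3/2}$, $\tr\Sigma^4\leq(\tr\Sigma^2)^2$, etc.), each summand is $O((\tr\Sigma^2)^2)$; against the normalization $\asymp n^{-8}$ this gives a leading $c=1$ contribution of order $(\tr\Sigma^2)^2/n$ and strictly smaller $c\geq 2$ terms. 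This bookkeeping is the only genuinely computational step and the place where one must keep the constant small enough (one needs roughly $C\lesssim (30/(2ec))^2$ to land the final constant $30$). An alternative to this explicit expansion is to use the identity $\sum_{i\neq k}(X_i-X_k)(X_i-X_k)^\top=2n^2\widehat\Sigma$, so that $\widehat T$ is a bias-corrected version of $\tr(\widehat\Sigma^2)$, and then argue by decoupling plus a conditional Hanson--Wright inequality; but the moment route above is the most self-contained.

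\emph{Step 4: conclusion.} Combining Steps 2 and 3, $\norm{\chi}_{2q}\leq c\sqrt C\,(2q-1)^2\,\tr\Sigma^2/\sqrt n$, so Markov's inequality gives $\prob{\abs{\chi}\geq e\,c\sqrt C\,(2q-1)^2\,\tr\Sigma^2/\sqrt n}\leq e^{-2q}$. Choosing $q=\lceil u/2\rceil$ makes $e^{-2q}\leq e^{-u}$ and $2q-1\leq u$ for $u\geq 1$ (for $u<4$ the bound $e^4e^{-u}$ exceeds $1$ and there is nothing to prove), whence $\prob{\abs{\chi}\geq e\,c\sqrt C\,u^2\,\tr\Sigma^2/\sqrt n}\leq e^{-u}$; dividing the deviation by $\sqrt{\tr\Sigma^2}$ as in Step 1 and absorbing $e\,c\sqrt C$ into the constant $30$ concludes, the extra factor $e^4$ providing slack for the integer rounding of $q$.
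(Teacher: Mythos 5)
Your proof is correct in outline but follows a genuinely different route from the paper's. The paper first rewrites $\widehat{T}$ as an average over permutations of sums of $\lfloor n/4\rfloor$ i.i.d.\ blocks (equation \eqref{eq:T_sym}), then proves from scratch, by an explicit multinomial/Wick expansion (Lemma~\ref{lem:moment_T}, including a Fibonacci-number count of the partitions), that each block sum has $2q$-th moments bounded by $(Cq^2\tr\Sigma^2/\sqrt{m})^{2q}$, and transfers this to $\widehat{T}$ by convexity before applying Markov's inequality; the square root is handled at the very end by Lemma~\ref{lem:sqrt_inequality}, exactly as in your Step~1. You instead observe that $\widehat{T}-\tr\Sigma^2$ is a degree-$4$ polynomial in the underlying standard Gaussians and invoke hypercontractivity, $\norm{\chi}_{2q}\leq (2q-1)^{2}\norm{\chi}_2$, which reduces all higher moments to a single variance bound $\var{\widehat T}\lesssim (\tr\Sigma^2)^2/n$. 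This is a legitimate and arguably cleaner shortcut: the paper's Lemma~\ref{lem:moment_T} is in effect a hand-rolled hypercontractive estimate for this particular polynomial, and your route replaces all of that combinatorics by one second-moment computation (which, via Hoeffding's variance bound for order-$4$ U-statistics and $\e{h^2}\lesssim(\tr\Sigma^2)^2$ from Wick's formula, does give a constant small enough for the factor $30$). Two caveats: the variance bound of Step~3 is the entire quantitative content and you only sketch it, so the burden of the paper's Lemma~\ref{lem:moment_T} has not disappeared, merely shrunk to the $q=1$ case; and the claim ``$2q-1\leq u$'' for $q=\lceil u/2\rceil$ is false for non-integer $u$ (e.g.\ $u=4.5$ gives $2q-1=5$), so you only get $(2q-1)^2\leq (u+1)^2\leq \tfrac{25}{16}u^2$ on the relevant range $u>4$ — a harmless loss that must nevertheless be folded into the verification that the final constant does not exceed $30$.
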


\begin{proposition}[Bounded setting] \label{prop:conc_sqrt_t_trsigma2_bounded}
  Assume that $\XX = (X_i)_{1\leq i\leq n}$ are i.i.d. bounded in norm by $L$ and with covariance $\Sigma$ and $n\geq 4$.
Then for all $u \geq 0 $:
\begin{equation}\label{eq:conc_t_trsigma2_sqrt}
      \prob{ \abs{\sqrt{\widehat{T}} - \sqrt{\tr \Sigma^2} } \geq \GB{12}L^2 \sqrt{\frac{u}{n }} } \leq 2 e^{-u}\,.
\end{equation}
where $\widehat{T}$ is defined in \eqref{eq:def_t_trsigma2}.
\end{proposition}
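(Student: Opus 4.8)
The plan is to first prove a Bernstein‑type two‑sided deviation bound for the $U$‑statistic $\widehat{T}$ around its mean $\tr\Sigma^2$, and then transfer it to the square roots. Two preliminary observations fix the setting. First, $\widehat{T}$ is a $U$‑statistic of order $4$ whose kernel $h(x_i,x_j,x_k,x_l)=\tfrac14\inner{x_i-x_k,x_j-x_l}^2$ satisfies $0\le h\le 4L^4$ pointwise (Cauchy--Schwarz together with $\norm{x_i-x_k}\le 2L$), so that $0\le \widehat{T}\le 4L^4$ surely; moreover $0\le \tr\Sigma^2\le (\tr\Sigma)^2\le L^4$, whence $\abs{\sqrt{\widehat{T}}-\sqrt{\tr\Sigma^2}}\le 3L^2$ deterministically, which already settles the statement in the regime where $u\gtrsim n$ (equivalently, where the probability bound $2e^{-u}$ is anyway close to or above $1$). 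Second, $h$ depends on the sample only through the centred variables $X_i-\mu$, which makes it easy both to recover $\mbe[h]=\tr\Sigma^2$ and --- crucially --- to control its variance: a conditioning argument gives $\mbe[h^2]\le \norm{h}_\infty\mbe[h]\lesssim L^4\tr\Sigma^2$, and a short computation of the first Hoeffding projection, $g_1(x)=\tfrac12\big(\inner{x-\mu,\Sigma(x-\mu)}+\tr\Sigma^2\big)$, gives $\var{g_1(X_1)}\le L^2\norm{\Sigma}_{\mathrm{op}}\tr\Sigma^2\le L^4\tr\Sigma^2$. The essential point is that the relevant variance proxy is proportional to $\tr\Sigma^2$, not merely to $L^8$.

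Next I would establish, for $n\ge 4$ and $u\ge 0$,
\[
  \prob{ \abs{ \widehat{T} - \tr\Sigma^2 } \ge C_1 L^2 \sqrt{\frac{\tr\Sigma^2\, u}{n}} + C_2 \frac{L^4 u}{n} } \le 2e^{-u},
\]
for numerical constants $C_1,C_2$. The most economical route is Hoeffding's reduction of $\widehat{T}$ to an average over $\lfloor n/4\rfloor$ independent blocks of i.i.d.\ copies of $h$, followed by the scalar Bernstein inequality applied to that average using the variance and range bounds above, the tail being transferred back to $\widehat{T}$ by the usual Chernoff/convexity argument. This already yields a bound of the stated shape; to sharpen the numerical constant one instead splits off the dominant linear part via the Hoeffding decomposition $\widehat{T}=\tr\Sigma^2+4U^{(1)}+6U^{(2)}+4U^{(3)}+U^{(4)}$, controlling $U^{(1)}=\tfrac1n\sum_i(g_1(X_i)-\tr\Sigma^2)$ by Bernstein for bounded i.i.d.\ summands (variance $\le L^4\tr\Sigma^2$, range $\le 2L^4$) and bounding the degenerate higher‑order pieces $U^{(2)},U^{(3)},U^{(4)}$ crudely, these being of lower order in $n$.

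Finally I would pass to square roots using the elementary inequalities $\abs{\sqrt a-\sqrt b}\le\sqrt{\abs{a-b}}$ and $\abs{\sqrt a-\sqrt b}\le \abs{a-b}/\sqrt b$ (valid for $a\ge0$, $b>0$; the case $\tr\Sigma^2=0$ being trivial, since then $\Sigma=0$ and $\widehat{T}=0$ a.s.), choosing the better of the two according to whether $\tr\Sigma^2\le L^4 u/n$ or not. If $\tr\Sigma^2\le L^4 u/n$ then $\sqrt{\tr\Sigma^2\, u/n}\le L^2 u/n$, so the deviation bound on $\widehat{T}$ is at most $(C_1+C_2)L^4 u/n$ and taking square roots gives $\abs{\sqrt{\widehat{T}}-\sqrt{\tr\Sigma^2}}\le \sqrt{C_1+C_2}\,L^2\sqrt{u/n}$; otherwise, dividing the deviation bound by $\sqrt{\tr\Sigma^2}\ge L^2\sqrt{u/n}$ turns the range term into another multiple of $L^2\sqrt{u/n}$ and yields $\abs{\sqrt{\widehat{T}}-\sqrt{\tr\Sigma^2}}\le (C_1+C_2)L^2\sqrt{u/n}$. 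Either way the bound is of the claimed form $\mathrm{const}\cdot L^2\sqrt{u/n}$, and a careful accounting of the constants brings this down to $12$. I expect the second step to be the main obstacle: one genuinely needs a Bernstein‑type control of $\widehat{T}$ (variance term proportional to $\sqrt{\tr\Sigma^2}$), since a plain bounded‑differences / McDiarmid argument only gives deviations of order $L^4\sqrt{u/n}$ for $\widehat{T}$, which after taking square roots degrades to the suboptimal rate $L^2(u/n)^{1/4}$; the care therefore lies in controlling the variance of the leading Hoeffding projection and in keeping the Bernstein range contribution under control through the square‑root conversion.
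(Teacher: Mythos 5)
Your proposal is correct and follows essentially the same route as the paper: the paper also reduces $\widehat{T}$ to an average over $\lfloor n/4\rfloor$ i.i.d.\ blocks via the permutation representation~\eqref{eq:T_sym}, applies Bernstein's inequality using exactly the variance proxy $\var{Z}\le 4L^4\,\e{Z}=4L^4\tr\Sigma^2$ (Proposition~\ref{prop:conc_t_trsigma2_bounded}), and then transfers to square roots via Lemma~\ref{lem:sqrt_inequality}. Your case split on $\tr\Sigma^2\lessgtr L^4u/n$ versus the paper's term-by-term application of the square-root inequality is a cosmetic difference only.
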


Thanks to these concentration results, we can construct estimators of $q_1(\Sigma ,\alpha)$ and $q_2(\Sigma , \alpha)$ satisfying Assumption~\ref{ass:ass2}.
\GB{In the Gaussian setting, we give the following explicit corollary of Propositions~\ref{prop:conc_sigma_hat_gauss} and~\ref{prop:conc_sqrt_t_trsigma2}; the proof is straightforward and omitted.}

\begin{corollary}[Gaussian setting] \label{cor:applquant}
  Consider the signal detection problem~\eqref{eq:mmodel} and assume the Gaussian \JB{setting} holds. Let $\alpha\in (0,1)$, \GB{$u=u(\alpha) = -\log \alpha + \log 8$},
  and  $\widehat{Q}_1(\alpha)$ and $\widehat{Q}_2(\alpha)$ be the statistics defined by
\begin{equation*}
  \widehat{Q}_1(\alpha) = \sqrt{\frac{2\norm[1]{\widehat{\Sigma}(\XX)}_{\mathrm{op}}}{n} u } \,, \quad \widehat{Q}_2(u) = 32\frac{\sqrt{\widehat{T}(\XX)}}{n}u\,,
\end{equation*}
where $\widehat{\Sigma}$ is defined in \eqref{eq:def_estimator_sigma_op} and $\widehat{T}$ in \eqref{eq:def_t_trsigma2}. Then for any $\Sigma$, provided
\begin{equation}\label{eq:cond_n}
  \GB{n \gtrsim} 
  \max\paren{  d_e(\Sigma), u , u^4}\,,
\end{equation}
(we recall $d_e (\Sigma)= \tr \Sigma/\|\Sigma\|_{\mathrm{op}}$), then it holds, for any distribution $\mbp \in \cP_\Sigma$:
\begin{gather*}
  \prob{ \abs{ \widehat{Q}_1(\alpha) - q_1(\Sigma,\alpha) } \leq q_1(\Sigma,\alpha)/2 } \leq  \alpha, \\
   \prob{ \abs{ \widehat{Q}_2(\alpha) - q_2(\Sigma,\alpha) } \leq q_2(\Sigma,\alpha)/2 } \leq \alpha,
 \end{gather*}
\GB{ where $q_1,q_2$ are as defined in~\eqref{eq:defq1},\eqref{eq:defq2} (with $m=\infty$).}
\end{corollary}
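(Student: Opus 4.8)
The plan is to verify the two probability bounds in Corollary~\ref{cor:applquant} by combining the concentration inequalities of Proposition~\ref{prop:conc_sigma_hat_gauss} (for $\norm[1]{\wh{\Sigma}}_{\mathrm{op}}^{\nicefrac12}$) and Proposition~\ref{prop:conc_sqrt_t_trsigma2} (for $\sqrt{\wh T}$) with the explicit forms~\eqref{eq:defq1},\eqref{eq:defq2} of $q_1,q_2$ in the one-sample case ($m=\infty$, $S=0$). For the first bound, note that $\wh Q_1(\alpha) = \sqrt{2\norm[1]{\wh\Sigma}_{\mathrm{op}} u /n}$ and $q_1(\Sigma,\alpha) = \sqrt{2\norm{\Sigma}_{\mathrm{op}} u/n}$, so the relative error $|\wh Q_1 - q_1|/q_1$ equals $|\norm[1]{\wh\Sigma}_{\mathrm{op}}^{\nicefrac12} - \norm{\Sigma}_{\mathrm{op}}^{\nicefrac12}|/\norm{\Sigma}_{\mathrm{op}}^{\nicefrac12}$. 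By Proposition~\ref{prop:conc_sigma_hat_gauss}, with probability at least $1-3e^{-u}$ this is bounded by $3\sqrt2(\sqrt{d_e/n} + \sqrt{u/n})$. To get $\le 1/2$ it suffices to have $d_e/n$ and $u/n$ each at most a small absolute constant times $1/72$; hence the requirement $n \gtrsim \max(d_e,u)$ in~\eqref{eq:cond_n} absorbs this, and we also need $3e^{-u} \le \alpha$, i.e. $u \ge \log(3/\alpha)$, which holds for $u = -\log\alpha + \log 8$.

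For the second bound, $\wh Q_2(\alpha) = 32\sqrt{\wh T}\,u/n$ and $q_2(\Sigma,\alpha) = 32\sqrt{\tr\Sigma^2}\,u/n$, so again the relative error is $|\sqrt{\wh T} - \sqrt{\tr\Sigma^2}|/\sqrt{\tr\Sigma^2}$. By Proposition~\ref{prop:conc_sqrt_t_trsigma2}, with probability at least $1 - e^4 e^{-u}$ this is at most $30\,u^2/\sqrt{n}$. To make this $\le 1/2$ one needs $n \ge (60)^2 u^4 = 3600\,u^4$, which is exactly the $u^4$ term in~\eqref{eq:cond_n}; and one needs $e^4 e^{-u} \le \alpha$, i.e. $u \ge 4 + \log(1/\alpha) = -\log\alpha + 4$, which again holds since $\log 8 > 4$ is false — so here a small care point arises: $\log 8 \approx 2.08 < 4$, so one must either enlarge the constant $\log 8$ in the definition of $u$ or be slightly more careful. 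In fact the cleanest fix, and presumably what is intended, is that the $u^4$ scaling in~\eqref{eq:cond_n} is stated with a hidden absolute constant ($n \gtrsim u^4$) that also swallows a shift $u \mapsto u + 4$; alternatively one notes $e^4 e^{-u} = e^{4-u}$ and for the asserted deviation level one is free to replace $u$ by $u + 4$ throughout at the cost of constants. I would state this cleanly by invoking Proposition~\ref{prop:conc_sqrt_t_trsigma2} with parameter $u' := u + 4$ so that $e^4 e^{-u'} = e^{-u} \le \alpha/8 \le \alpha$, and note $u' \lesssim u$, so the condition $n \gtrsim (u')^4 \asymp u^4$ is unchanged up to the implicit constant.

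Assembling: under~\eqref{eq:cond_n} both relative-error events hold with probability at least $1 - \alpha$ each (choosing the implicit constants in $\gtrsim$ large enough to beat the absolute constants $72$ and $3600$ from the two propositions, and choosing $u$, or its shift, large enough to beat $3e^{-u}$ and $e^{4-u}$ by $\alpha$), which is precisely the claimed pair of inequalities (the statement in fact writes $\le \alpha$ for the event that the error is \emph{small}, which is the complement phrasing — I would just follow the paper's formulation). The verification is entirely mechanical once the dictionary "relative error of $q_i$ equals relative error of the corresponding root norm estimate" is in place, so there is no real obstacle; the only thing requiring attention is bookkeeping of the absolute constants and of the additive shift needed to turn the $e^4$ prefactor in Proposition~\ref{prop:conc_sqrt_t_trsigma2} into a clean $\alpha$-level bound, which is why the paper (rightly) declares the proof straightforward and omits it. I would include two or three lines making the dictionary explicit and pointing to the two propositions, and leave the constant-chasing implicit in the $\gtrsim$ of~\eqref{eq:cond_n}.
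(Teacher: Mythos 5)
Your proof is correct and is exactly the intended (omitted) argument: the relative errors of $\widehat{Q}_1,\widehat{Q}_2$ reduce to the relative errors of $\norm[1]{\widehat{\Sigma}}_{\mathrm{op}}^{\nicefrac{1}{2}}$ and $\sqrt{\widehat{T}}$, and Propositions~\ref{prop:conc_sigma_hat_gauss} and~\ref{prop:conc_sqrt_t_trsigma2} under condition~\eqref{eq:cond_n} then give the claim. Your treatment of the $e^{4}$ prefactor (shifting the deviation parameter by an additive constant and absorbing it into the implicit constant of $n \gtrsim u^4$) correctly closes the only point that is not purely mechanical.
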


\GB{The condition \eqref{eq:cond_n} for $n$ is needed to grant Assumption~\ref{ass:ass2}:
it ensures that the deviations of the estimators $\norm[0]{\widehat{\Sigma}}_{\mathrm{op}}^{\nicefrac{1}{2}}$ and $\widehat{T}$
coming from Proposition~\ref{prop:conc_sigma_hat_gauss} and \ref{prop:conc_sqrt_t_trsigma2} are smaller than their target quantities $\|\Sigma\|_{\mathrm{op}}^{\nicefrac{1}{2}}/2$ and $(\tr \Sigma^2)^{\nicefrac{1}{2}}/2$, respectively. The requirement  that the size of the sample is larger than the effective dimension $d_e$ appears mild.}

For the bounded \JB{setting} and the signal detection problem~\eqref{eq:mmodel},  estimators $\widehat{Q}_1$ and $\widehat{Q}_2$ satisfying Assumption~\ref{ass:ass2} can also be constructed
in a similar way \GB{from Propositions~\ref{prop:conc_sigma_hat_bounded} and~\ref{prop:conc_sqrt_t_trsigma2_bounded}}
(details omitted). In the bounded \JB{setting}, the quantiles $q_1$ and $q_2$ of $U$ are composed of two terms, the first (and larger) one  gives the dependence in the covariance of the distribution, the second depends
on the bound $L$. This additional term will have to be taken into account,
and the condition on $n$ analogous to~\eqref{eq:cond_n} will involve $L$.
In general this will not be a problem since $L$ or an upper bound on $L$ is supposed to be known,
as is the case for instance in the kernel setting (see the concluding discusssion in the previous section).
Finally, for the two-sample test problem~\eqref{eq:mmodel_2sample}, comparable results can be obtained using the estimators $\widehat{\Sigma}(\YY)$ and $\widehat{T}(\YY)$;
we omit the details.

\subsection{Concluding remarks}

\subsubsection{A technical discussion point: Gaussian, sub-Gaussian, and bounded vectors.}
\label{sse:discsubg}

The utility of our systematic distinction between the Gaussian and bounded case can be disputed in the light
of recent concentration literature (see e.g. \citealp{Hsu12,Kol17} and
further references therein) deriving results holding for sub-Gaussian random vectors,
a seemingly more general setting emcompassing both the Gaussian and bounded \JB{settings} as
particular cases (since bounded variables are sub-Gaussian by Hoeffding's inequality).

This point deserves a specific discussion. The sub-Gaussianity assumption for a vector variable $X$
(assumed centered for simplicity here) often takes the following form:
for any unit vector $u$, denoting $X_u = \inner{X,u}$, it is assumed that $\norm{X_u}_{\psi_2} \leq C \sqrt{\var{X_u}}$ (where
$\norm{.}_{\psi_2}$ is the Orlicz $\psi_2$-norm); or equivalently in terms of Laplace transform,
\begin{equation}
  \label{eq:subg}
  \log(\e{\exp \lambda(X_u)}) \leq (C')^2 \lambda^2 \var{X_u}/2 \text{ for all } \lambda \geq 0.
\end{equation}
A key point is that the factors $C$ or $C'$ in those
definitions should be independent of $u$, and they generally appear as global factors in the
derived deviation inequalities. If the only information we have is that $\norm{X}$ is bounded a.s.
by $L$, we see that the factors $C$ or $C'$ should be taken of the order of $\sup_{\norm{u}=1} (L/\sqrt{\var{X_u}})=
L \norm[0]{\Sigma^{-1}}_{\mathrm{op}}^{\nicefrac{1}{2}}$, which is not acceptable in a high-dimensional setting, and
in particular for the application to KME described in Section~\ref{se:kme}, where one might expect
that $\norm[0]{\Sigma^{-1}}_{\mathrm{op}}$ can get arbitrarily large or even infinite.

Some works (such as \cite{SpoZhi13} and the appendix of \cite{SpoDic15}) consider settings going beyond
sub-Gaussianity, i.e. when~\eqref{eq:subg} is only required to hold for $\lambda \leq M^{-1}$. This allows in principle
for more general variables, e.g. chi-squared type statistics or variables admitting
Berstein- or Bennett-type control of their Laplace transform, while making the constant $C'$
in \eqref{eq:subg} controlled by a fixed numerical constant.
Under this assumption the ``first-order'' terms are of the correct order, i.e. typically only depend on the variance $\Sigma$. Unfortunately, the value of $M$
comes up into additional terms, and since its value has to be independent of $u$, in the bounded
setting the uniformity with respect to $u$ means that $M$ should be again taken of the order of $\norm[0]{\Sigma^{-1}}_{\mathrm{op}}^{\nicefrac{1}{2}}$.

To summarize, despite our best efforts we were not able to derive from existing general results,
working under the (possibly extended) sub-Gaussian assumption,
a concentration in the bounded \JB{setting} that would not involve $\norm[0]{\Sigma^{-1}}_{\mathrm{op}}$, and this is
the reason why we treated it separately with tools specific to bounded variables such as the
Bousquet-Talagrand inequality. It would be of course of notable interest to obtain results
under a general sub-Gaussian assumption $\sup_{\norm{u}=1}\norm{X_u}_{\psi_2} \leq L$, and control
deviations only involving various norms of $\Sigma$ for the main terms, possibly $L$ for smaller-order
terms, but not depending on $\norm[0]{\Sigma^{-1}}_{\mathrm{op}}$.

\subsubsection{Perspectives.}
We finally list a few items for future developments.
\begin{itemize}
\item It would be interesting to obtain a version of Proposition~\eqref{prop:conc_sigma_hat_bounded} where the main term does not involve the bound $L$.
\item A recent trend of research developed ``robust'' exponential concentration bounds for
  estimators of  scalars and vectors with minimal moments assumptions (see e.g. \citealp{LugMen19} for a survey of recent advances). It seems a very interesting question
  to study if such robust procedures can be pushed to the testing setting and enjoy
  similar nonasymptotic controls to the Gaussian and bounded settings
  under much relaxed distributional assumptions. \GB{Preliminary calculations seem to indicate
    that the ``median-of-means'' (MoM) approach can be applied to U-statistics without particular problems
    and that Assumption~\ref{ass:ass1} can be granted for MoM versions of U-statistics under the assumption
    of existing moments of order 4, and presumably Assumption~\ref{ass:ass2} under moments of order 8.}
\item We have analyzed here quantile estimation by direct estimation of unknown quantities coming into the quantile bounds. In practice, quantile estimation by some form of resampling procedure
  would be often sharper and preferred. V.~Spokoiny also made notable recent contributions to this
  topic \citep{SpoZhi15,NauSpoUly19}. \GB{ In the setting of two-sample
    testing where the null hypothesis is strict equality, it is possible to
    obtain tests with exact nonasymptotic level based on permutation tests and variations
    thereof; see \citet{Fro12} for such approaches for testing equality of distributions
    based on the KME methodology, and \citet{Kim20} for recent broad results on minimax
    optimality for the power of permutation-based tests.}
 \GB{Estimating quantiles via bootstrap procedures is also an interesting direction to pursue
   in setting, in the case where the null hypothesis is based on closeness rather than equality of signals, so that exact permutation tests do not apply; \citet{DetKokAue20} recently proposed nonstandard bootstrap procedures to tackle
 this issue.}
  \item Lower bounds establishing the optimality of the separation rates appearing have been established in the Gaussian case in Theorem~\ref{thm:sigdetgauss}. It would be nice find such a lower bound in the bounded case.
\end{itemize}

\section{Proofs}
\JB{The proofs of some of the technical results, first stated without justification along the text, can be found in Section~\ref{se:supp}.
  We first state a standard technical lemma which we will use several times in the following proofs.
\begin{lemma}\label{lem:sqrt_inequality}
Let $a\in \mbr_+$ and $ b \in \mbr$, then
\begin{equation}\label{eq:sqrt_inequality}
  - \min\paren{\sqrt{b}, \frac{|b|}{a}} \leq \sqrt{(a^2+b)_+} - a \leq \min\paren{ \sqrt{|b|}, \frac{|b|}{2a} }\,.
\end{equation}
\end{lemma}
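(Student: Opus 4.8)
The plan is to reduce everything to the elementary factorisation $(\sqrt{x}-a)(\sqrt{x}+a)=x-a^2$ applied with $x=(a^2+b)_+$. Writing $c:=\sqrt{(a^2+b)_+}\ge 0$, we have $c^2-a^2=b$ when $a^2+b>0$ and $c^2-a^2=-a^2$ (i.e.\ $c=0$) otherwise, so that $c-a = b/(c+a)$ in the first regime. I would therefore split the argument into three cases: (i) $a^2+b\le 0$; (ii) $a^2+b>0$ and $b\le 0$; (iii) $b>0$ (which forces $a^2+b>0$). In each case both sides of \eqref{eq:sqrt_inequality} are controlled by bounding the denominator $c+a$ from below by either $2a$, $c$, or $a$.

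For the \emph{upper} bound $\sqrt{(a^2+b)_+}-a\le\min\!\left(\sqrt{|b|},\tfrac{|b|}{2a}\right)$: in cases (i) and (ii) one has $c\le a$, so the left-hand side is $\le 0$ while the right-hand side is $\ge 0$ and there is nothing to prove. In case (iii), $c>a\ge 0$ and $c-a=b/(c+a)$; using $c+a\ge c=\sqrt{a^2+b}\ge\sqrt{b}$ gives $c-a\le\sqrt{b}=\sqrt{|b|}$, and using $c+a\ge 2a$ gives $c-a\le b/(2a)=|b|/(2a)$ (vacuous when $a=0$). Taking the minimum closes this half.

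For the \emph{lower} bound, i.e.\ $a-\sqrt{(a^2+b)_+}\le\min\!\left(\sqrt{b},\tfrac{|b|}{a}\right)$ (understanding the term $\sqrt{b}$ as dropped from the minimum when $b<0$, or equivalently read as $\sqrt{|b|}$, which also works): in case (iii), $c>a$, so $a-c<0$ and the bound is trivial. In case (ii), $a^2+b>0$ and $b\le 0$, so $a-c=(a^2-c^2)/(a+c)=|b|/(a+c)\le|b|/a$; here $a>0$ automatically since $a^2\ge-b$, and one checks $|b|/a\le\sqrt{|b|}$ as well because $a\ge\sqrt{|b|}$. In case (i), $c=0$, so $a-c=a$, and both $a\le|b|/a$ and $a\le\sqrt{|b|}$ are just restatements of the defining inequality $a^2\le-b$ of this regime.

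There is no genuine analytic difficulty here: the only thing to be careful about is the bookkeeping across the three regimes and the degenerate values $a=0$ and $a^2+b\le 0$, for which the fractions $|b|/(2a)$ and $|b|/a$ must be read as $+\infty$ and $c$ must be read as $0$. Apart from monotonicity of $x\mapsto\sqrt{x}$ and the trivial lower bounds $c+a\ge\max(2a,\sqrt{a^2+b},a)$, nothing else enters.
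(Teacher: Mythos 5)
Your proof is correct, and it is complete: every case is covered, the degenerate situations ($a=0$, $a^2+b\le 0$, $b=0$) are handled explicitly, and the observation that $\sqrt{b}$ in the lower bound must be read as $\sqrt{|b|}$ (or dropped) when $b<0$ matches the way the lemma is actually used in the paper.

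The route is genuinely different in mechanism, though the case split is the same. The paper assembles the four bounds from three separate devices: it cites the "well-known" inequality $a-\sqrt{|b|}\le\sqrt{a^2+b}\le a+\sqrt{|b|}$ for the $\sqrt{|b|}$ parts, uses a Taylor/mean-value expansion $\sqrt{a^2+b}-a = b/(2\sqrt{a^2+c})\le b/(2a)$ for the $|b|/(2a)$ part, and proves the $-|b|/a$ part by squaring both sides and unwinding a chain of equivalences down to $b\ge -a^2$. You instead derive everything from the single conjugate identity $c-a=(c^2-a^2)/(c+a)$ with $c=\sqrt{(a^2+b)_+}$, and obtain each of the four bounds by choosing the appropriate lower bound $c+a\ge\max(2a,\,c,\,a)$ on the denominator. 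What your approach buys is uniformity — one identity, one bookkeeping pass over three regimes, no appeal to an external "well-known" fact or to differentiability — at the cost of nothing; it is arguably the cleaner write-up. The paper's version buys slightly shorter individual steps by outsourcing the $\sqrt{|b|}$ bound to a standard inequality.
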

}
\subsection{Proof of Theorem \ref{thm:main}}
\label{se:prmain}
 \GB{Let us denote $ D := \|\mu - \nu\|$.} Under $(H_0)$ we have \GB{$D\leq \eta$} and thus:
\begin{align*}
  \ee{H_0}{T} &= \probb{H_0}{U - \eta^2 >  2\eta\widehat{Q}_1 + 2\widehat{Q}_2} \\
  &\leq \probb{H_0}{U > D^2  + D q_1+ q_2 } \\
   &\qquad + \probb{H_0}{  \Big| q_1 - \widehat{Q}_1 \Big| > q_1/2 } + \probb{H_0}{  \Big| q_2 - \widehat{Q}_2 \Big| > q_2/2 } \\
   &\leq 3\alpha\,,
\end{align*}
where we have used Assumptions~\ref{ass:ass1} and~\ref{ass:ass2}.
\smallbreak
\JB{
We will prove below that under $(H_1)$, we have
\begin{equation}\label{eq:type2error_step1}
\probb{H_1}{D^2 - D q_1(u) - q_2(u) \leq \eta^2 +  \eta 2\widehat{Q}_1 + 2\widehat{Q}_2} \leq 2 \alpha\,,
\end{equation}
which entails:
\begin{align*}
  \probb{H_1}{ T = 0} & = \probb{H_1}{ U- \eta^2 \leq 2\eta\widehat{Q}_1 + 2\widehat{Q}_2} \\
  &\leq  \probb{H_1}{  U   \leq D^2 - D q_1 - q_2 } \\
  & \qquad + \probb{H_1}{D^2 - D q_1 - q_2 \leq \eta^2 +  \eta 2\widehat{Q}_1 + 2\widehat{Q}_2} \\
  &\leq 3\alpha\,,
\end{align*}}
and the proof is complete.
We now prove inequality~\eqref{eq:type2error_step1}. Let us first solve the following quadratic inequality in $Z\geq 0$:
\begin{equation}\label{eq:type2error_step2}
Z^2 - Zq_1 - q_2 \geq \eta^2 +  3\eta q_1 + 3q_2\,.
\end{equation}
The equation is satisfied when
\begin{equation*}
  Z \geq  \frac{q_1 + \sqrt{(2\eta + 3q_1)^2 + 16q_2}}{2}\,;
\end{equation*}
furthermore, by Lemma~\ref{lem:sqrt_inequality} and the assumed inequality~\eqref{eq:hyp_separation_rate}, we have that
 \begin{equation*}
\frac{q_1 + \sqrt{(2\eta + 3q_1)^2 + 16q_2}}{2}\leq   \eta + 2q_1 + \min\left( 2\sqrt{q_2} , \frac{2q_2}{\eta} \right) \leq \eta + \delta \,.
  \end{equation*}
  Under $(H_1)$, $D \geq \eta + \delta $,  so $D$ satisfies equation \eqref{eq:type2error_step2}. We conclude by remarking that,
    using Assumption~\ref{ass:ass2}: 
  \begin{multline*}
    \probb{H_1}{D^2 - D q_1(u) - q_2(u) \leq \eta^2 +  \eta 2\widehat{Q}_1 + 2\widehat{Q}_2}\\
    \begin{aligned}
  &\leq \prob{ \eta^2 +  \eta 2\widehat{Q}_1 + 2\widehat{Q}_2 \geq \eta^2 +  3\eta q_1 + 3q_2 }\\
  &\leq 2\alpha\,.
  \end{aligned}
  \end{multline*}
\qed

\subsection{Proof of Propositions \ref{prop:conc_U_Gaussian_case} and \ref{prop:conc_U_bounded_case}}\label{se:U}

As much for the Gaussian case as for the bounded case, we will give concentration bounds for the statistic $U$ defined in \eqref{eq:def_U}, by decomposing the statistic in four parts. Let us define:
\begin{gather*}
  U_\XX := \frac{1}{n(n-1)} \sum_{\substack{i,j=1 \\ i \neq j}}^n \inner{ X_i - \mu, X_j - \mu }\,, \\
  U_\YY := \frac{1}{m(m-1)} \sum_{\substack{i,j=1 \\ i \neq j}}^m \inner{ Y_i - \nu, Y_j - \nu }\,, \\
  U_{\XX,\YY} :=  \frac{1}{nm} \sum_{i=1}^n \sum_{j=1}^{m} \inner{ X_i - \mu, Y_j - \nu }, \\
  \Um  := \inner{ \frac{1}{n} \sum_{i=1}^n (X_i - \mu) - \frac{1}{m} \sum_{j=1}^m (Y_j - \nu) , \mu - \nu }\,.
\end{gather*}
We have that
\begin{equation}\label{eq:U_decomposition}
  U = \| \mu - \nu \|^2 - 2\Um + U_\XX + U_\YY - 2U_{\XX, \YY}\,.
\end{equation}

\subsubsection{Gaussian \JB{setting}.}
We first need some results on Gaussian variables. The first result
is a decoupling theorem of \citet{Ver19}.
\begin{proposition}[\citealp{Ver19}, Theorem~6.1.1]\label{prop:decoupling}
Let $X_1,\ldots,X_n$ be independent centered \JB{and weakly (i.e. Pettis) integrable vectors in a Hilbert space, $(a_{ij})_{1\leq i,j\leq n}$ a family of real numbers and $F: \mbr \mapsto \mbr$ a convex function.} Then
\begin{equation*}
  \e{F\paren[4]{ \sum_{i\neq j} a_{ij} \inner{ X_i , X_j } }} \leq \e{F\paren[4]{4 \sum_{i, j} a_{ij} \inner{
   X_i , X_j' } }},
\end{equation*}
where $(X'_i)$ is an independent copy of $(X_i)$.
\end{proposition}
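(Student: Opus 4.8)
The plan is to run the classical ``random selection of indices'' decoupling argument (see \citealp{Ver19}), adapted here to Hilbert-valued vectors; note that no assumption on the diagonal entries $a_{ii}$ is needed, since these drop out of the intermediate partial sums and reappear only in the final full sum on the right. Introduce i.i.d.\ $\mathrm{Bernoulli}(1/2)$ selectors $\delta_1,\dots,\delta_n$, independent of $(X_i)$ and $(X'_i)$, and set $I:=\{i:\delta_i=1\}$. For $i\neq j$ one has $\PP(i\in I,\ j\notin I)=\tfrac14$, whereas this probability is $0$ when $i=j$; hence
\[
  \sum_{i\neq j}a_{ij}\inner{X_i,X_j}\;=\;4\,\EE_\delta\Big[\sum_{i\in I,\,j\notin I}a_{ij}\inner{X_i,X_j}\Big],
\]
which is where the constant $4$ originates. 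Applying Jensen's inequality conditionally on $(X_i)$ to the convex $F$, then Fubini, gives
\[
  \EE\,F\Big(\sum_{i\neq j}a_{ij}\inner{X_i,X_j}\Big)\;\le\;\EE_\delta\,\EE_X\,F\Big(4\sum_{i\in I,\,j\notin I}a_{ij}\inner{X_i,X_j}\Big),
\]
so some deterministic index set $I_0\subseteq\{1,\dots,n\}$ attains at least the right-hand average: $\EE\,F\big(\sum_{i\neq j}a_{ij}\inner{X_i,X_j}\big)\le\EE\,F\big(4\sum_{i\in I_0,\,j\notin I_0}a_{ij}\inner{X_i,X_j}\big)$.

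Two further steps then finish the argument. First, since the index families $(X_i)_{i\in I_0}$ and $(X_j)_{j\notin I_0}$ are independent, replacing $(X_j)_{j\notin I_0}$ by the independent copy $(X'_j)_{j\notin I_0}$ leaves the joint law of the summands unchanged, so the last expectation equals $\EE\,F\big(4\sum_{i\in I_0,\,j\notin I_0}a_{ij}\inner{X_i,X'_j}\big)$. Second, I would ``complete'' the partial sum to the full one: let $\mathcal{G}=\sigma\big((X_i)_{i\in I_0},(X'_j)_{j\notin I_0}\big)$; splitting the pairs $(i,j)$ according to whether $i\in I_0$ and whether $j\in I_0$, one sees that for every pair other than those with $i\in I_0,\ j\notin I_0$ at least one of $X_i$, $X'_j$ is independent of $\mathcal{G}$ and centered, so $\EE[\inner{X_i,X'_j}\mid\mathcal{G}]=0$ (pull the $\mathcal{G}$-measurable factor out of the inner product). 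Hence $\EE\big[\sum_{i,j}a_{ij}\inner{X_i,X'_j}\mid\mathcal{G}\big]=\sum_{i\in I_0,\,j\notin I_0}a_{ij}\inner{X_i,X'_j}$, and one last conditional Jensen yields
\[
  \EE\,F\Big(4\sum_{i\in I_0,\,j\notin I_0}a_{ij}\inner{X_i,X'_j}\Big)\;\le\;\EE\,F\Big(4\sum_{i,j}a_{ij}\inner{X_i,X'_j}\Big).
\]
Chaining the displayed inequalities proves the proposition.

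There is no deep obstacle here; the only genuinely delicate point is measure-theoretic and specific to the infinite-dimensional setting. One must justify that the scalars $\inner{X_i,X'_j}$ are integrable and that the identities $\EE[\inner{X_i,X'_j}\mid\mathcal{G}]=\inner{X_i,\EE[X'_j\mid\mathcal{G}]}=\inner{X_i,\EE X'_j}=0$ are legitimate for centered, Pettis-integrable vectors (a Fubini-type argument), and that $\EE[F(\,\cdot\,)]$ is well defined in $(-\infty,+\infty]$ --- which it is, a convex $F$ on $\RR$ being minorised by an affine function. In the two concrete regimes used in this paper ($X_i$ Gaussian, or norm-bounded) these integrability points are immediate, and the scalar-valued version of the statement is exactly \citep[Theorem~6.1.1]{Ver19}, so in practice the proposition may be invoked essentially verbatim; the ``main obstacle'', such as it is, is merely keeping the independence bookkeeping straight across the four index blocks when completing the sum.
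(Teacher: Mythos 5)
Your proposal is correct, and it coincides with the argument the paper implicitly relies on: the paper gives no proof of this proposition but imports it verbatim from \citet{Ver19} (Theorem~6.1.1), whose proof is exactly your Bernoulli-selector/conditional-Jensen/sum-completion argument, here correctly transplanted to Hilbert-valued vectors (including the observation that the diagonal and off-block terms vanish under conditioning on $\mathcal{G}$). The integrability caveats you flag are genuine for merely Pettis-integrable vectors but harmless in the Gaussian and bounded settings where the paper applies the result.
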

The following lemma is standard; see e.g. \citet{Bir01}, Lemma~8.2.
\begin{lemma}\label{lem:birge}
Let $X$ a real random variable such that for all $0 < t < b^{-1}$:
\begin{equation*}
  \log \paren{\e{ e^{tX}} } \leq \frac{(at)^2}{1-bt}\,,
\end{equation*}
where $a$ and $b$ are two positive constants. Then, for all $t \geq 0$:
\begin{equation*}
  \prob{ X \geq 2a \sqrt{t} + bt }  \leq e^{-t}\,.
\end{equation*}
\end{lemma}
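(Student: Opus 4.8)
The statement to prove is Lemma~\ref{lem:birge}, a standard Chernoff-bound consequence of a Bernstein-type control on the log-Laplace transform. Here is how I would approach it.

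\medskip

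\textbf{Plan.} The plan is to apply a standard Chernoff argument: for any $t>0$ and any admissible $\lambda \in (0,b^{-1})$, Markov's inequality gives $\prob{X \geq s} \leq \exp(-\lambda s + \log \e{e^{\lambda X}}) \leq \exp(-\lambda s + (a\lambda)^2/(1-b\lambda))$. The task is then to choose, for the target threshold $s = 2a\sqrt{t}+bt$, a value of $\lambda$ for which the exponent is at most $-t$. First I would guess the near-optimizing choice $\lambda = \sqrt{t}/(a + b\sqrt{t})$, which indeed lies in $(0,b^{-1})$ since $b\lambda = b\sqrt{t}/(a+b\sqrt{t}) < 1$. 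With this choice one computes $1 - b\lambda = a/(a+b\sqrt{t})$, hence $(a\lambda)^2/(1-b\lambda) = a^2 \cdot \frac{t}{(a+b\sqrt t)^2} \cdot \frac{a+b\sqrt t}{a} = \frac{a\sqrt t \cdot \sqrt t}{a+b\sqrt t}\cdot\frac{a}{a+b\sqrt t}$, and more directly $\lambda s = \frac{\sqrt t}{a+b\sqrt t}(2a\sqrt t + bt) = \frac{\sqrt t (2a\sqrt t + bt)}{a+b\sqrt t}$. Subtracting, the exponent becomes $-\lambda s + (a\lambda)^2/(1-b\lambda) = -\frac{\sqrt t}{a+b\sqrt t}\left(2a\sqrt t + bt - a\sqrt t\right) = -\frac{\sqrt t (a\sqrt t + bt)}{a+b\sqrt t} = -\frac{\sqrt t \cdot \sqrt t(a + b\sqrt t)}{a+b\sqrt t} = -t$, as desired.

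\medskip

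\textbf{Key steps in order.} (i) Fix $t>0$; the case $t=0$ is trivial since the right-hand side is $1$. (ii) Write the Chernoff bound $\prob{X \geq 2a\sqrt t + bt} \leq \inf_{0<\lambda<b^{-1}} \exp\!\left(-\lambda(2a\sqrt t+bt) + \tfrac{(a\lambda)^2}{1-b\lambda}\right)$, valid by the hypothesis on the log-Laplace transform. (iii) Substitute $\lambda = \sqrt t/(a+b\sqrt t)$, check it is admissible, and simplify the exponent to $-t$ via the algebra above. (iv) Conclude $\prob{X \geq 2a\sqrt t + bt} \leq e^{-t}$.

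\medskip

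\textbf{Main obstacle.} There is no real obstacle here — this is a classical lemma (the reference to \citet{Bir01}, Lemma~8.2, is given precisely because it is off-the-shelf). The only mild point of care is verifying that the chosen $\lambda$ lies strictly inside the interval $(0,b^{-1})$ on which the Laplace-transform bound is assumed, and correctly handling the degenerate situation $t=0$ (and, if one wants, $b=0$, where the bound reduces to the sub-Gaussian Chernoff inequality with the substitution $\lambda = \sqrt t/a$). Everything else is routine one-variable optimization of the Cramér transform of a Bernstein-type bound.
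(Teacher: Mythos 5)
Your proof is correct: the choice $\lambda=\sqrt{t}/(a+b\sqrt{t})$ is admissible and the algebra indeed yields an exponent of exactly $-t$, with the $t=0$ case trivial. The paper gives no proof of this lemma (it simply cites Lemma~8.2 of Birg\'e, 2001), and your argument is precisely the standard Chernoff computation underlying that reference, so there is nothing to add.
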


\begin{proposition}\label{prop:lapl_trans_gauss_vector_inner}
  Let $X$ and $Y$ be two independent Gaussian vectors following the distributions
  $\cN( 0 , \Sigma)$ and $\cN(0, S)$ respectively. Then for $t <(\|S\|_{\mathrm{op}}\|\Sigma\|_{\mathrm{op}})^{-\nicefrac{1}{2}}$:
\begin{align*}
  \log \e{\exp\paren{ t\inner{X,Y } }}\leq \frac{t^2 \tr(S\Sigma)}{2(1 - t \sqrt{\|S\|_{\mathrm{op}}\|\Sigma\|_{\mathrm{op}}})}\,.
\end{align*}
Using Lemma~\ref{lem:birge}, for all $u \geq 0$:
\begin{equation*}
  \prob{\inner{ X,Y  } \geq \sqrt{2 \tr(S\Sigma)u} +  \sqrt{\|S\|_{\mathrm{op}}\|\Sigma\|_{\mathrm{op}}}u } \leq e^{-u} .
\end{equation*}
\end{proposition}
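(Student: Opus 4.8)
The plan is to compute the Laplace transform by successive conditioning: conditionally on $Y$, the scalar $\inner{X,Y}$ is a centered Gaussian, and its (conditional) variance $Y^\top \Sigma Y$ is, once $Y$ is integrated out, a weighted sum of independent $\chi^2_1$ variables. The announced bound then follows from an elementary logarithmic inequality, and the tail bound is an immediate application of Lemma~\ref{lem:birge}.

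First I would condition on $Y$: since $X\sim\cN(0,\Sigma)$ is independent of $Y$, conditionally on $Y=y$ the real variable $\inner{X,y}$ is centered Gaussian with variance $y^\top\Sigma y$, so $\e{\exp(t\inner{X,Y})\mid Y}=\exp\paren[2]{\tfrac{t^2}{2}\,Y^\top\Sigma Y}$ for every $t\in\mbr$. It then remains to bound $\e{\exp(\tfrac{t^2}{2}Y^\top\Sigma Y)}$ for $Y\sim\cN(0,S)$. Writing (in finite dimension, as assumed in the Gaussian setting) $Y=S^{\nicefrac{1}{2}}Z$ with $Z\sim\cN(0,I_d)$, and letting $A:=S^{\nicefrac{1}{2}}\Sigma S^{\nicefrac{1}{2}}$ with eigenvalues $\lambda_1,\dots,\lambda_d\geq 0$, one has $Y^\top\Sigma Y=Z^\top A Z$, which has the law of $\sum_i\lambda_i g_i^2$ for i.i.d.\ standard normal $g_i$; hence, via the $\chi^2_1$ moment generating function,
\[
  \e{\exp\paren[2]{\tfrac{t^2}{2}\,Y^\top\Sigma Y}}=\prod_{i=1}^d(1-t^2\lambda_i)^{-\nicefrac{1}{2}},
\]
which is finite precisely when $t^2\|A\|_{\mathrm{op}}<1$. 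By submultiplicativity of the operator norm, $\|A\|_{\mathrm{op}}\leq\|S\|_{\mathrm{op}}\|\Sigma\|_{\mathrm{op}}$, so this holds under the hypothesis $t<(\|S\|_{\mathrm{op}}\|\Sigma\|_{\mathrm{op}})^{-\nicefrac{1}{2}}$.

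Next I would take logarithms and use the elementary bound $-\log(1-x)\leq\frac{x}{1-x}$ valid for $0\leq x<1$, together with $t^2\lambda_i\leq t^2\|A\|_{\mathrm{op}}\leq t^2 c^2$ where $c:=\sqrt{\|S\|_{\mathrm{op}}\|\Sigma\|_{\mathrm{op}}}$, and the inequality $1-t^2c^2\geq 1-tc$ (valid since $tc<1$), to obtain
\[
  \log\e{\exp(t\inner{X,Y})}=-\tfrac12\sum_{i=1}^d\log(1-t^2\lambda_i)\leq\frac{t^2\sum_i\lambda_i}{2(1-t^2c^2)}\leq\frac{t^2\sum_i\lambda_i}{2(1-tc)}.
\]
Since $\sum_i\lambda_i=\tr(A)=\tr(S^{\nicefrac{1}{2}}\Sigma S^{\nicefrac{1}{2}})=\tr(S\Sigma)$ by cyclic invariance of the trace, this is exactly the claimed inequality. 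The tail bound then follows from Lemma~\ref{lem:birge} applied with $a=\sqrt{\tr(S\Sigma)/2}$ and $b=c=\sqrt{\|S\|_{\mathrm{op}}\|\Sigma\|_{\mathrm{op}}}$, since then $2a\sqrt{u}+bu=\sqrt{2\tr(S\Sigma)u}+\sqrt{\|S\|_{\mathrm{op}}\|\Sigma\|_{\mathrm{op}}}\,u$.

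There is no genuine obstacle: the argument is routine. The only points requiring a little care are the operator-norm inequality $\|S^{\nicefrac{1}{2}}\Sigma S^{\nicefrac{1}{2}}\|_{\mathrm{op}}\leq\|S\|_{\mathrm{op}}\|\Sigma\|_{\mathrm{op}}$ (which is also what makes the stated condition on $t$ the natural one), choosing the logarithmic bound so that the denominator comes out as $1-tc$ rather than $1-t^2c^2$, and carrying out the diagonalization in finite dimension, consistently with the standing assumption of the Gaussian setting.
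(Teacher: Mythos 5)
Your proof is correct and follows essentially the same route as the paper's: condition on $Y$ to reduce to the Laplace transform of $Y^\top\Sigma Y$, diagonalize $S^{\nicefrac{1}{2}}\Sigma S^{\nicefrac{1}{2}}$ to get a weighted sum of $\chi^2_1$ variables, bound the resulting logarithms, and invoke Lemma~\ref{lem:birge} with the same choice of $a$ and $b$. The only cosmetic difference is at the logarithm step, where the paper uses $-\log(1-x)\leq x/(1-\sqrt{x})$ directly while you use $-\log(1-x)\leq x/(1-x)$ and then adjust the denominator; both yield the identical bound.
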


We can now prove Proposition~\ref{prop:conc_U_Gaussian_case}. The samples $\XX$ and $\YY$ have respective distributions $\cN( \mu , \Sigma)$ and $\cN(\nu , S)$. We will obtain a concentration inequality for $U$ using its decomposition \eqref{eq:U_decomposition}.
\smallbreak
 Let us first find concentration inequalities for $U_\XX$ and $U_\YY$. Using decoupling (see Proposition \ref{prop:decoupling}) we have for all $t <(4 \|\Sigma\|_{\mathrm{op}})^{-1} $:
\begin{equation*}
  \e{\exp\paren{tn(n-1)  U_\XX}} \leq \e{\exp\paren[3]{4t \inner[3]{
   \sum_{i=1}^n X_i-\mu , \sum_{i=1}^n X_i'-\mu } }}\,,
\end{equation*}
where $X_i'$ are independent copies of the $X_i$s. Then using Proposition \ref{prop:lapl_trans_gauss_vector_inner}, it holds with probability at least $1- 2e^{-u}$:
\begin{equation}
  n(n-1)\abs{  U_\XX  } \leq 4n\paren{\sqrt{2 \tr \Sigma^2 u} + \|\Sigma\|_{\mathrm{op}}u }\,.
\end{equation}
The same method works for $U_\YY$. The concentration of $U_{\XX,\YY}$ is directly obtained using Proposition~\ref{prop:lapl_trans_gauss_vector_inner}. Finally $\Um$
is a centered 1-dimensional Gaussian with variance $(\mu-\nu )^T\paren{ \frac{\Sigma}{n} + \frac{S}{m} }(\mu -\nu )$ and we use the classical bound $\prob{ \abs{N} \geq \sigma \sqrt{2t}} \leq 2 e^{-t}$ for $N\sim \cN(0,\sigma^2)$.
Thus we obtain that with probability at least $1 - 8 e^{-u}$:
\begin{align*}
  \abs{U - \|\mu- \nu\|^2} \leq & \frac{4}{n-1}\paren{\sqrt{2 \tr \Sigma^2 u} + \|\Sigma\|_{\mathrm{op}}u } + \frac{4}{m-1}\paren{\sqrt{2 \tr S^2 u} + \|S\|_{\mathrm{op}}u } \\
   &+ \frac{4}{\sqrt{nm}} \paren{ \sqrt{2\tr \Sigma Su} + \paren{ \|\Sigma\|_{\mathrm{op}}\| S \|_{\mathrm{op}}}^{\frac{1}{2}} u} \\
   &+ \sqrt{2(\mu-\nu )^T\paren{ \frac{\Sigma}{n} + \frac{S}{m} }(\mu -\nu )u}\,.
\end{align*}
We conclude by upper bounding the operator norms $\|\Sigma\|_{\mathrm{op}} $ and $\|S\|_{\mathrm{op}}$ by $\sqrt{\tr \Sigma^2}$ and $\sqrt{ \tr S^2}$ and for the third term we use that \[
  (2\tr(\Sigma S))^{\frac{1}{2}} \leq (4\tr \Sigma^2 \tr S^2)^{\frac{1}{4}} \leq (\tr \Sigma^2)^{\frac{1}{2}} + (\tr S^2 )^{\frac{1}{2}}.\]
We finally use $(n-1)^{-1} \leq 2n^{-1}$ for $n\geq 2$ and similarly for $m$.
It is easy to check that the fourth term is upper bounded by $q_1$ defined in~\eqref{eq:defq1}.
It just remains to use that $u \geq 1$ to get $u \geq \sqrt{u}$ and \eqref{eq:conc_U_Gaussian_case}.

\subsubsection{Bounded \JB{setting}.}

The concentration of $U$ is obtained in the bounded setting using a concentration
inequality \JB{for degenerate U-statistics} of \citet{Hou03}. We present here a somewhat simplified version suited for our purpose\footnote{\GB{In the original result the $u$ deviation term involves an additional constant $D$ and we simply use $D\leq C$ here.}}.

\begin{theorem}[\citealp{Hou03}, Theorem~3.4]\label{thm:houdre_reynaud-bouret}
   Let $T_1,\ldots,T_N$  be independent random variables on a probability space $(\Omega, \Ff, \PP)$ with values in a Borel space $(\mathcal{T}, \mathcal{G})$.
  Let
  $$U_N = \sum_{i=2}^{N} \sum_{j=1}^{i-1} g_{i,j}(T_i,T_j)\,,$$
  where $g_{i,j} : \mathcal{T} \times \mathcal{T} \to \RR$ are measurable Borelian functions satisfying
  $$\EE[g_{i,j}(T_i,T_j) |T_i ] = \EE[g_{i,j}(T_i,T_j) | T_j ] = 0\,.$$
  Let us suppose that the following quantities are finite
  \begin{gather*}
    A := \underset{t,t',i,j}{\sup}|g_{i,j}(t,t')|\,, \\
    B^2 := \max \set{ \underset{t,i}{\sup} \paren[4]{ \sum_{j=1}^{i-1} \EE[g_{i,j}(t,T_j)^2] },
    \underset{t,j}{\sup} \paren[4]{ \sum_{i= j+1}^{n}  \EE[g_{i,j}(T_i,t)^2] } }\,,\\
    C^2 := \sum_{i=2}^{N} \sum_{j=1}^{i-1} \EE[g_{i,j}(T_i,T_j)^2 ]\,.
  \end{gather*}
  Then for all $u> 0$:
  \begin{align}\label{eq_houdre_reynaud}
    \PP\Bigg[U_N \geq 4C( \sqrt{2u} + 2 \sqrt{2} u) +202Bu^{3/2} + 196 Au^2 \Bigg] \leq 2.77e^{-u}\,.
  \end{align}
\end{theorem}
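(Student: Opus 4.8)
The statement is a deviation inequality for a completely degenerate (canonical) U-statistic of order two, and the natural route is the one developed by \citet{Hou03}, which refines the constants in classical moment inequalities for such objects: \emph{decouple}, \emph{condition}, and apply Bernstein-/Talagrand-type bounds, one of them recursively. First I would invoke the decoupling tail comparison for canonical U-statistics --- legitimate here precisely because $\mathbb{E}[g_{i,j}(T_i,T_j)\mid T_i]=\mathbb{E}[g_{i,j}(T_i,T_j)\mid T_j]=0$ --- so that, up to a universal multiplicative constant, the upper tail of $U_N$ is dominated by that of the decoupled chaos $\widetilde U_N:=\sum_{i,j}g_{i,j}(T_i,T_j')$, where $(T_j')$ is an independent copy of $(T_j)$. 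Conditionally on $(T_j')$, write $\widetilde U_N=\sum_i h_i(T_i)$ with $h_i(t):=\sum_j g_{i,j}(t,T_j')$; canonicity gives $\mathbb{E}[h_i(T_i)\mid(T_j')]=0$, so $\widetilde U_N$ is a sum of independent centered variables conditionally on $(T_j')$, and Bernstein's inequality yields, with conditional probability at least $1-e^{-u}$,
\[
  \widetilde U_N \le \sqrt{2Vu}+\tfrac{1}{3}Mu,\qquad
  V:=\sum_i \mathbb{E}[h_i(T_i)^2\mid(T_j')],\qquad
  M:=\sup_i\sup_t|h_i(t)|.
\]

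The heart of the proof is then to bound the two random quantities $V$ and $M$, which are functions of $(T_j')$ alone. Expanding the square, $V$ splits as a diagonal part $V_{\mathrm d}=\sum_i\sum_j\mathbb{E}_{T_i}[g_{i,j}(T_i,T_j')^2]$ and an off-diagonal part $V_{\mathrm o}=\sum_i\sum_{j\ne k}\mathbb{E}_{T_i}[g_{i,j}(T_i,T_j')g_{i,k}(T_i,T_k')]$. Now $V_{\mathrm d}=\sum_j\big(\sum_i\mathbb{E}_{T_i}[g_{i,j}(T_i,T_j')^2]\big)$ is a sum of independent nonnegative variables, each bounded by $B^2$ and with total mean $C^2$, so Bernstein gives $V_{\mathrm d}\le C^2+\sqrt{2B^2C^2u}+\tfrac{1}{3}B^2u$ with probability at least $1-e^{-u}$; plugging this into $\sqrt{2Vu}$ and using $\sqrt{a+b}\le\sqrt a+\sqrt b$ together with AM--GM to redistribute fractional powers of $u$ (e.g.\ $(BC)^{1/2}u^{3/4}\le\tfrac{1}{2}C\sqrt u+\tfrac{1}{2}Bu^{3/2}$, harmless since the claim is vacuous for $u\lesssim 1$) yields precisely $C\sqrt u$ and $Bu^{3/2}$ contributions. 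The off-diagonal part $V_{\mathrm o}$ is itself a \emph{canonical} order-two U-statistic, now in the variables $(T_j')$ (its partial expectations vanish, again by canonicity of $g$), so one bounds it by re-applying the whole argument once more; this recursion is lower order and supplies the remaining $u$-, $u^{3/2}$- and $u^2$-type terms --- in the full statement of \citet{Hou03} it is where the bilinear operator-norm parameter $D$ enters, and in the present simplified version one merely bounds $D\le C$ by Cauchy--Schwarz. For $M$: changing one coordinate $T_j'$ changes each $h_i(\cdot)$ uniformly by at most $2A$, so $M$ has the bounded-differences property while the pointwise conditional variances are at most $B^2$; Talagrand's inequality for suprema of empirical processes (Bousquet's version) then gives $M\le\mathbb{E}[M]+c_1B\sqrt u+c_2Au$ with high probability, and since $M$ enters only through $\tfrac{1}{3}Mu$ this already produces $Bu^{3/2}$ and $Au^2$ terms, with $\mathbb{E}[M]u$ staying within the same scale. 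Alternatively --- and more slickly --- one can skip the explicit conditioning: first prove the sharp moment bound $\|U_N\|_p\lesssim C\sqrt p+Dp+Bp^{3/2}+Ap^2$ for canonical order-two U-statistics, then apply Markov's inequality at $p=u$ to reproduce the four-term structure directly.

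The main obstacle is the bookkeeping in $V_{\mathrm o}$: it is a lower-order canonical U-statistic, but its own parameters must be re-expressed in terms of $A,B,C$ of the original kernel and the recursion truncated cleanly so that no extra multiplicative factor is lost at each stage --- this is exactly the delicate part of \citet{Hou03} and is what forces the somewhat opaque numerical constants. Tracking the explicit constants through the decoupling step, the two (nested) Bernstein applications and the Talagrand step is the other fiddly point; one either optimizes each step or accepts the slack encoded in the constants $4,202,196$ and the prefactor $2.77$, as in the reference. Everything else --- the conditional Bernstein inequality, the bounded-differences estimate for $M$, and the elementary $\sqrt{a+b}$ and AM--GM manipulations --- is entirely routine.
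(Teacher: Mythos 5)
First, a point of reference: the paper does not prove this statement. It is Theorem~3.4 of Houdré and Reynaud-Bouret (2003), imported verbatim up to the simplification $D\le C$ noted in the footnote (the original bound has a separate term in $u$ involving a bilinear operator-norm parameter $D$). So there is no internal argument to compare yours against; what can be judged is whether your sketch would reconstruct the cited result.

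As a reconstruction it identifies the right circle of ideas (conditional Bernstein, diagonal versus off-diagonal split of the variance, Talagrand--Bousquet for the supremum term, $D\le C$ by Cauchy--Schwarz), but it has two gaps that matter for this particular statement. First, the theorem is precisely a statement about explicit constants, and your opening move --- tail decoupling --- already forfeits them: the de la Peña--Montgomery-Smith comparison reads $\mathbb{P}(U_N>t)\le c\,\mathbb{P}(c\,\widetilde U_N>t)$ with a universal constant $c$ far from $1$, which cannot be reconciled with arriving at $4C(\sqrt{2u}+2\sqrt{2}u)+202Bu^{3/2}+196Au^2$ and the prefactor $2.77$. Houdré and Reynaud-Bouret avoid decoupling for exactly this reason and work directly with the martingale differences $\xi_i=\sum_{j<i}g_{i,j}(T_i,T_j)$ and the predictable quadratic variation. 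Second, your treatment of the off-diagonal part $V_{\mathrm{o}}$ --- ``re-apply the whole argument once more'' --- is circular as written: you invoke the theorem being proved on an object arising inside its own proof without setting up an induction or explaining why the recursion terminates; in the reference this term is not handled recursively but through the parameter $D$ and a single concentration step for a supremum of an empirical process. The moment-method alternative you mention (Giné--Latała--Zinn plus Markov at $p\asymp u$) is sound in spirit but again only yields the four-term shape with unspecified constants. In short: correct architecture, but not a proof of the quantitative statement; for the purposes of this paper the result is, appropriately, simply cited.
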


Let us prove Proposition \ref{prop:conc_U_bounded_case}. We recall that we suppose here that the samples $\XX$ and $\YY$ are both bounded by $L$. To obtain a deviation inequality for the statistic $U$, we consider separately
the statistics $U_\XX + U_\YY -2U_{\XX,\YY}$ and then $\Um$.
\smallbreak
Using
Theorem \ref{thm:houdre_reynaud-bouret} with $N = n+m $, $T_i := X_i - \mu $ for $1\leq i \leq n$ and $T_i = Y_i - \nu$ for $n+1 \leq i \leq n+m$, $\cT = \{ u : \|u\|\leq 4L^2 \}$  and
  \begin{equation*}
    g_{ij}(\cdot,\cdot) = \begin{cases}
                       \frac{1}{n(n-1)} \inner{ \cdot , \cdot } , & \text{if $1 \leq i,j \leq n$,} \\
                       \frac{1}{m(m-1)} \inner{ \cdot , \cdot } , & \text{if $n+1 \leq i,j \leq n+m$, } \\
                       -\frac{1}{nm} \inner{ \cdot , \cdot }, & \text{otherwise,}
                     \end{cases} 
  \end{equation*}
   we get that with probability greater than $1- 5.54e^{-u}$:
  \begin{equation}\label{eq:conc_U1}
    \abs{ U_\XX + U_\YY - 2U_{\XX,\YY} }/2 \leq 307 \paren[3]{ \frac{\sqrt{\tr \Sigma^2}}{n} + \frac{\sqrt{\tr S^2}}{m}}u  + 1854 L^2u^2\,.
  \end{equation}
    To obtain the above, we have upper bounded $A,B,C$ by:
  \begin{gather*}
  A \leq \frac{8L^2}{(n\wedge m)^2}\,, \quad B^2 \leq \frac{8L^2}{(n\wedge m)^2} \paren{ \frac{\|\Sigma\|_{\mathrm{op}}}{n} + \frac{\|S\|_{\mathrm{op}}}{m}}\,,  \\
  C^2  = \frac{3}{2}\paren{ \frac{\tr\Sigma^2}{n} + \frac{\tr S^2}{m}}\,; 
\end{gather*}
then, using that $2\sqrt{ab} \leq a+ b$ and that $\|\Sigma\|_{\mathrm{op}} \leq \sqrt{\tr \Sigma^2}$,
we get \eqref{eq:conc_U1}.

For $\Um$, we use Bernstein's inequality (i.e. combining Lemmas~\ref{lem:lap_trans_bounded_sum} and~\ref{lem:birge}) to get that with probability at least $1- 2e^{-u}$,
it holds:
\begin{equation}\label{eq:conc_U_bernstein_term}
  \abs{ \Um} \leq \|\mu - \nu \| \paren{ \sqrt{2 \paren{\frac{\|\Sigma\|_{\mathrm{op}}}{n} + \frac{\|S\|_{\mathrm{op}}}{m} } u } + \frac{2Lu}{3n \wedge m} }\,.
\end{equation}
\GB{Combining~\eqref{eq:conc_U1} and~\eqref{eq:conc_U_bernstein_term}, we obtain the claim of Proposition~\ref{prop:conc_U_bounded_case}.}
\qed

\subsection{Proof of Theorem \ref{thm:sigdetgauss}}

The upper bound is directly obtained using Theorem~\ref{thm:main}. Assumption~\ref{ass:ass1} is satisfied as
a consequence of Proposition~\ref{prop:conc_U_Gaussian_case}.
\GB{We do not consider estimation of nuisance parameters
related to the covariance matrix $\Sigma$ which is assumed to be fixed and known for this result; thus
Assumption~\ref{ass:ass2} is trivially satisfied by taking $\wh{Q}_1=q_1(\Sigma,\alpha)$, $\wh{Q}_2=q_2(\Sigma,\alpha)$.}

Let us now prove the lower bound~\eqref{eq:signalsep_low}. The following proof is an adaptation to the non-isotropic Gaussian setting of the proof of Theorem~5.1 in \citet{BlaCarGut18}.
Let $\alpha \in (0,1)$,
ans $\Sigma$ be a \JB{positive semidefinite} matrix.
Without loss of generality, we can assume that $\Sigma$ is diagonal: $\Sigma = \mathrm{diag}(\lambda_1,\ldots,\lambda_d)$ with $\lambda_1 \geq \ldots \geq \lambda_d>0$. Let us denote $\mbp_{\mu,\Sigma}$ the distribution of $\cN(\mu, \Sigma)$ for $\mu\in \mbr^d$ and introduce the Gaussian mixture distribution:
\begin{equation}\label{eq:def_qn}
 \mbq_\Sigma^n := \frac{1}{2^{d-1}} \sum_{m \in \cM} \mbp_{m,\Sigma}^{\otimes n},
\end{equation}
where
\begin{equation*}
   \cM = \set{ \paren{  \lambda_1 v_1 h,\ldots,  \lambda_{d-1}v_{d-1} h, \eta } | \, v \in \{-1,1\}^{d-1} }\,.
\end{equation*}
We take $h^2 := \frac{(\eta+\delta)^2- \eta^2}{\tr \Sigma^2 - \lambda_d^2}$. Then, for all $m \in \cM$,
\begin{equation*}
\|m\|_d = \sqrt{\eta^2 + (\tr \Sigma^2 - \lambda_d^2)h^2} = \eta+\delta\,.
\end{equation*}

Let $\nu = (0,\ldots,\eta)$, it holds
\begin{align}
  \underset{\mbp \in \cH_0}{\sup~} \mbp^{\otimes n}\paren{ \phi =1 } + \underset{\mbp \in \cA_\delta}{\sup } \mbp^{\otimes n} \paren{ \phi =0 } & \geq \mbp_{\nu,\Sigma}^{\otimes n}\paren{\phi =1} + \mbq_\Sigma^n\paren{\phi = 0} \notag \\
  & \geq 1 - \frac{1}{2} \norm{ \mbp_{\nu,\Sigma}^{\otimes n} - \mbq_\Sigma^n }_{\mathrm{TV}} \notag \\
  & \geq 1 - \frac{1}{2} \paren{ \int_{\mbr^{d\times n}} \paren[4]{\frac{d \mbq_\Sigma^n}{d \mbp_{\nu,\Sigma}^{\otimes n}}}^2 d\mbp_{\nu,\Sigma}^{\otimes n} -1}^{\frac{1}{2}}\,. \label{eq:lowchideux}
\end{align}
see for instance \citet{Bar02}. For a tensor product of Gaussian distributions with fixed, equal covariance,
the empirical mean is a sufficient statistic \JB{because the Radon-Nikodym derivative of a tensor product of Gaussian measures w.r.t. the Lebesgue measure can be written for $x_1,\ldots,x_n \in \mbr^d$ as
\begin{equation*}
  \frac{d \mbp^{\otimes n}_{m,\Sigma}}{d \lambda^{\otimes n}}(x_1,\ldots,x_n) = \phi_{m,\Sigma/n}(\bar{x})F_\Sigma(x_1,\ldots,x_n)\,,
\end{equation*}
where $\bar{x}$ is the mean of the $x_i$s,} \GB{$\phi_{m,\Sigma/n}$ is the p.d.f. of a normal $\cN(m,{\Sigma}/{n})$ variable, and $F_\Sigma$ is a function of $(x_1,\ldots,x_n)$ which only depends on $\Sigma$. Therefore
\begin{equation*}
  {\frac{d \mbq_\Sigma^n}{d \mbp_{\nu,\Sigma}^{\otimes n}}}(x_1,\ldots,x_n) =
  \frac{d \mbq^1_{\Sigma/n}}{d \mbp_{\nu,\Sigma/n}}(\bar{x}),
\end{equation*}
and thus
\begin{equation*}
   \int_{\mbr^{d\times n}} \paren[3]{\frac{d \mbq_\Sigma^n}{d \mbp_{\nu,\Sigma}^{\otimes n}}}^2  d\mbp_{\nu,\Sigma}^{\otimes n} = \int_{\mbr^d} \paren[3]{\frac{d \mbq^1_{\Sigma/n}}{d \mbp_{\nu,\Sigma/n}}}^2  d\mbp_{\nu,\Sigma/n}\,.
\end{equation*}}
Thus the problem boils down to studying a single Gaussian vector of  covariance ${\Sigma}/{n}$; for the following we will assume $n=1$ and replace at the end $\Sigma$ by ${\Sigma}/{n}$.
Let us compute the densities $F_\nu$ and $Q$ of these two distributions. For $x\in \mbr^d$:
\begin{equation*}
  F_\nu(x) = \paren{ \det \Sigma (2\pi)^d }^{-\frac{1}{2}} \exp\paren{ - \frac{1}{2\lambda_d}(x_d - \eta)^2} \prod_{i=1}^{d-1}  \exp\paren{ - \frac{x_i^2}{2\lambda_i}}\,,
\end{equation*}
and
\begin{align*}
  Q(x)  &= \paren{ \det \Sigma (2\pi)^d }^{-\frac{1}{2}} \exp\paren{ - \frac{1}{2\lambda_d}(x_d - \eta)^2}\\
  & \qquad \qquad  \times \frac{1}{2^{d-1}} \sum_{\substack{v_i \in \{-1,1\} \\ 1\leq i \leq d-1 }} \prod_{i=1}^{d-1}  \exp\paren{ - \frac{1}{2\lambda_i}(x_i-h\lambda_iv_i)^2} \\
        & = \paren{ \det \Sigma (2\pi)^d }^{-\frac{1}{2}} \exp\paren[3]{ - \frac{1}{2\lambda_d}(x_d - \eta)^2 - \frac{h^2}{2} \sum_{i=1}^{d-1}\lambda_i}\\
  & \qquad \qquad \times \prod_{i=1}^{d-1}  \exp\paren{ - \frac{x_i^2}{2\lambda_i}} \cosh \paren{ h x_i } \,.
\end{align*}
Using that $\e{ \JB{\cosh^2(aZ)} } = \exp(a^2\sigma^2) \cosh(a^2\sigma^2) $ when $Z \sim \cN(0, \sigma^2)$, we have that
\begin{align*}
  \int_{\mbr^d} \frac{Q(x)^2}{F_\nu(x)}dx & = \paren{ \det \Sigma (2\pi)^d }^{-\frac{1}{2}} \exp\paren[3]{- h^2 \sum_{i=1}^{d-1}\lambda_i} \int_{\mbr} \exp\paren{ -\frac{1}{2\lambda_d} (x_d - \eta)^2}dx_d   \\
  & \qquad \times \prod_{i=1}^{d-1} \int_{\mbr} \cosh^2\paren{ hx_i} \exp \paren{ -\frac{x_i^2}{2\lambda_i} }dx_i \\
  &=  \exp\paren[3]{- h^2 \sum_{i=1}^{d-1}\lambda_i} \prod_{i=1}^{d-1} \exp(h^2\lambda_i) \cosh(h^2\lambda_i) \\
  &= \prod_{i=1}^{d-1} \cosh(h^2\lambda_i).
  \end{align*}
By Taylor expansion, we obtain the bound
\begin{equation*}
  h^2\lambda_i \leq 1 \Rightarrow \cosh(h^2\lambda_i) \leq 1 + \frac{e}{2}\lambda_i^2h^4\,.
\end{equation*}
From this and the definition of $h$ we deduce:
\begin{equation*}
 \log \prod_{i=1}^{d-1} \cosh(h^2\lambda_i) \leq \frac{e}{2} ( \tr \Sigma^2-\lambda_d^2) h^4 = \frac{e}{2(\tr \Sigma^2 - \lambda_d^2)} \paren{(\eta +\delta)^2 - \eta^2 }^2\,.
\end{equation*}
The end of the proof follows the same steps as the proof of Theorem 5.1 of \cite{BlaCarGut18}.
That leads us to the final result: if
\begin{equation*}
  \delta \leq \sqrt{\|\Sigma\|_{\mathrm{op}} \sqrt{d_* -1} s + \eta ^2} - \eta \quad \text{where} \quad s := \sqrt{\frac{2}{e} \log(1+ 4(1-\alpha)^2)}\,,
\end{equation*}
and
\begin{equation*}
  d_* \geq 1 + \frac{2}{e} \ln (5), \quad \text{i.e.} \quad d_* \geq 3\,,
\end{equation*}
then using~\eqref{eq:lowchideux}
\begin{equation*}
   \underset{\mbp \in \cH_0}{\sup } \, \prob{ \phi =1 } + \underset{\mbp \in \cA_\delta}{\sup } \prob{ \phi =0 } > \alpha\,.
\end{equation*}
It follows
\begin{align*}
  \delta^* & \geq \sqrt{\|\Sigma\|_{\mathrm{op}} \sqrt{d_* -1} s + \eta ^2} - \eta\\
  &\geq 2^{-\frac{3}{2}} \min \paren{ \sqrt{s\|\Sigma\|_{\mathrm{op}}}(d_*-1)^{\frac{1}{4}}, s\|\Sigma\|_{\mathrm{op}} \frac{(d_*-1)^{\frac{1}{2}}}{\eta} },
\end{align*}
and we obtain \GB{the inequality corresponding to the second part of the maximum in the right-hand side of}
\eqref{eq:signalsep_low} by using that $s \geq (1- \alpha)$ and that $d_*-1 \geq 2d_*/3$ because $d_*\geq 3$.

Let us prove now that $\delta^* \gtrsim \sqrt{\|\Sigma\|_{\mathrm{op}}}$. Let us consider the eigenvector $e_1$ associated to the maximum eigenvalue $\|\Sigma\|_{\mathrm{op}}$. Then $\mbp_{(\eta+\delta)e_1,\Sigma} \in \cH_0$ and $\mbp_{(\eta+\delta)e_1,\Sigma}\in \cA_\delta$. Let us denote $\lambda_1 = {\|\Sigma\|_{\mathrm{op}}}/{n}$, we have:
\begin{align*}
  \int_{\mbr^d}\paren[3]{ \frac{d\mbp^{\otimes n}_{(\eta+\delta)e_1,\Sigma}}{d\mbp^{\otimes n}_{\eta e_1,\Sigma}}}^2 d\mbp^{\otimes n}_{\eta e_1,\Sigma}&=  \int_{\RR^d}\paren{ \frac{d\mbp_{(\eta+\delta)e_1,\Sigma/n}}{d\mbp_{\eta e_1,\Sigma/n}}}^2 d\mbp_{\eta e_1,\Sigma/n} \\
  &= \frac{e^{-\delta^2/\lambda_1}}{\sqrt{\lambda_1 2\pi}}\int_\mbr \exp\paren{ -\frac{(x-\eta)^2}{2\lambda_1}} \exp \paren{ \frac{2\delta(x-\eta)}{\lambda_1}} dx \\
  &= \exp \paren{ \frac{3\delta^2}{\lambda_1} }\,.
\end{align*}
If $\delta \leq \sqrt{\frac{\lambda_1}{3} \log\paren{ 1 + 4(1-\alpha)^2}}$, then using~\eqref{eq:lowchideux}
\begin{equation*}
   \underset{\mbp \in \cH_0}{\sup } \, \prob{ \phi =1 } + \underset{\mbp \in \cA_\delta}{\sup } \prob{ \phi =0 } > \alpha\,.
\end{equation*}
It follows that:
\begin{equation*}
  \delta^* \geq \sqrt{\|\Sigma/n\|_{\mathrm{op}}(1-\alpha)}.
\end{equation*}

\subsection{Proof of Theorem \ref{thm:sigdetgauss_2sample}} \label{se:proofth3}

\JB{This proof is similar to the proof of Theorem \ref{thm:sigdetgauss}, so some details will be skipped. As in the one-sample case the upper bound is directly obtained using Theorem~\ref{thm:main} and Proposition~\ref{prop:conc_U_Gaussian_case}. We just additionally use the following upper bounds:
\begin{align*}
  \frac{\sqrt{\tr \Sigma^2}}{n}+ \frac{\sqrt{\tr S^2}}{m}
  &\leq \sqrt{2}\sqrt{\frac{\tr \Sigma^2}{n^2} + \frac{\tr S^2}{m^2}} \leq \sqrt{2}\sqrt{\tr\paren{ \frac{\Sigma}{n}+ \frac{S}{m}}^2}\,;\\
\frac{\|\Sigma\|_{\mathrm{op}}}{n} + \frac{\|S\|_{\mathrm{op}}}{m}
  & \leq 2 \max\paren{\frac{\|\Sigma\|_{\mathrm{op}}}{n} , \frac{\|S\|_{\mathrm{op}}}{m}}
  \leq 2 \norm{\frac{\Sigma}{n} + \frac{S}{m}}_{\mathrm{op}},
\end{align*}
where the last inequality holds because $\Sigma, S$ are both positive semidefinite.

\GB{The lower bound in the two-sample case is a direct consequence of the one-sample case,
by reduction to the case where one of the two sample means is known, say equal to zero.}
More specifically, let $\Sigma$ and $S$ be two symmetric positive semidefinite matrices, we consider again the distribution $\mbq^n_\Sigma$ defined in \eqref{eq:def_qn}. Then
\begin{align*}
 \int_{\mbr^{d\times (n+m)}} \paren[3]{\frac{d \mbq^n_\Sigma \otimes \mbp_{0,S}^{\otimes m}}{d \mbp_{\nu,\Sigma}^{\otimes n}\otimes \mbp_{0,S}^{\otimes m} }}^2 d\mbp_{\nu,\Sigma}^{\otimes n}\otimes \mbp_{0,S}^{\otimes m}= \int_{\mbr^{d\times n}} \paren[3]{\frac{d \mbq_\Sigma^{n}}{d \mbp_{\nu,\Sigma}^{\otimes n} }}^2 d\mbp_{\nu,\Sigma}^{\otimes n}\,.
\end{align*}
Then using the previous results of the proof of Theorem~\ref{thm:sigdetgauss} we obtain that
\begin{equation}\label{eq:delta_sig}
  \delta^*(\alpha) \geq \paren{ n^{-1}\sqrt{\tr \Sigma^2 -\lambda^2_d} s + \eta ^2}^{\frac{1}{2}} - \eta,
\end{equation}
with $s = \sqrt{\frac{2}{e} \log(1+ 4(1-\alpha)^2)}$. By the same token we obtain that
\begin{equation}\label{eq:delta_S}
  \delta^*(\alpha) \geq \paren{ m^{-1}\sqrt{\tr S^2 -\ell^2_d} s + \eta ^2}^{\frac{1}{2}} - \eta,
\end{equation}
where $\ell_d$ is the smallest eigenvalue of the matrix $S$. 
Because $d_* \geq 3$, it holds
\begin{align*}
  \max\paren{n^{-2}\paren{\tr \Sigma^2 -\lambda^2_d},m^{-2}\paren{\tr S^2 -\ell^2_d}}
  & \geq \frac{2}{3} \max\paren{n^{-2}\tr \Sigma^2 ,m^{-2}\tr S^2}^{\frac{1}{2}}\\
  &\geq \frac{1}{6} \tr \paren{ \frac{\Sigma}{n} + \frac{S}{m} }^2,
\end{align*}
and by combining \eqref{eq:delta_sig} and \eqref{eq:delta_S}, we obtain that
\begin{equation*}
  \delta^*(\alpha) \geq (2 \sqrt{12})^{-1} \sigma \min \paren{ \sqrt{s}d_*^{\frac{1}{4}}, s \frac{\sigma d_*^{\frac{1}{2}}}{\eta} }\,,
\end{equation*}
where $\sigma = \| {\Sigma}/{n} + {S}/{m} \|_\mathrm{op}$. We obtain \eqref{eq:signalsep_low_2sample} using again that $s \geq 1 -\alpha$.

The last part of the lower bound is obtained as in the one-sample case using first the distributions $\mbp_{(\eta+\delta)e_1,\Sigma}^{\otimes n} \otimes \mbp_{0,S}^{\otimes m}$ and $\mbp_{\eta e_1,\Sigma}^{\otimes n} \otimes \mbp_{0,S}^{\otimes m}$ where $e_1$ is still the eigenvector associated to the biggest eigenvalue of $\Sigma$. We obtain that $\delta^*(\alpha) \gtrsim \|{\Sigma}/{n}\|_{\mathrm{op}}^{\nicefrac{1}{2}}$. By the same token, we obtain that $\delta^*(\alpha) \gtrsim \|{S}/{m}\|_{\mathrm{op}}^{\nicefrac{1}{2}}$ and conclude the proof using that $2\max(\|{\Sigma}/{n}\|_{\mathrm{op}},\|{S}/{m}\|_{\mathrm{op}}) \geq \|{\Sigma}/{n} +{S}/{m}\|_\mathrm{op}$.
}

\subsection{Proof of Propositions \ref{prop:conc_sigma_hat_gauss} and \ref{prop:conc_sigma_hat_bounded}}\label{se:sig_hat}

We want to obtain a concentration inequality for the estimator $\sqrt{\|\widehat{\Sigma}\|_{\mathrm{op}}}$.
To this end, we will first study the following:
\begin{equation}\label{eq:def_sigma_tilde}
  \wt{\Sigma} := \frac{1}{n} \sum_{i=1}^{n} (X_i - \mu ) (X_i - \mu )^T\,,
\end{equation}
where $\mu$ is the true mean of the sample $\XX$.
\GB{Then we have:}
\begin{equation}
  \norm[1]{\widehat{\Sigma}- \wt{\Sigma} }_{\mathrm{op}}
  = \norm{-  (\mu - \widehat{\mu})(\mu - \widehat{\mu})^T}_{\mathrm{op}}
  = \|\mu - \widehat{\mu} \|^2\,.
    \label{eq:shatstilde}
\end{equation}

\subsubsection{Gaussian \JB{setting}.}

The concentration of ${\|\wt{\Sigma}\|^{\nicefrac{1}{2}}_{\mathrm{op}}}$ is a consequence of the classical
Lipschitz Gaussian concentration property (see e.g. Theorem 3.4 in~\citealp{Mas03}).
\begin{theorem}[Gaussian Lipschitz concentration] \label{thm:Gaussian_lipschitz_conc}
  Let $X = (x_1,\ldots,x_d)$ be a vector of i.i.d. standard Gaussian variables, and $f: \mbr^d \mapsto \mbr$ be a $L$-Lipschitz function with respect to the Euclidean norm. Then for all $t \geq 0$:
  \begin{equation}
    \prob{ f(X) - \e{f(X)} \geq t } \leq e^{- \frac{t^2}{2L^2}}.
  \end{equation}
\end{theorem}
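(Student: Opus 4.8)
\subsubsection*{Proof proposal for Theorem~\ref{thm:Gaussian_lipschitz_conc}.}

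The plan is to derive the tail bound from a sub-Gaussian estimate on the Laplace transform of $f(X)-\e{f(X)}$, followed by a Chernoff--Cram\'er step. By homogeneity we may take $L=1$ (replace $f$ by $f/L$ and rescale $t$ at the end), and by a routine mollification argument --- convolving $f$ with a narrow Gaussian kernel keeps it $1$-Lipschitz and the smooth approximants converge to $f$ while remaining $\gamma_d$-dominated, so any bound uniform in the mollification parameter passes to the limit --- we may assume $f$ smooth with $\norm[0]{\nabla f}_\infty \leq 1$. The target is then
\[
  \e{\exp\paren{\lambda\paren{f(X)-\e{f(X)}}}} \leq \exp\paren{\tfrac{\lambda^2}{2}}, \qquad \lambda \in \mbr,
\]
from which $\prob{f(X)-\e{f(X)}\geq t}\leq \exp(-\lambda t+\lambda^2/2)$, optimized at $\lambda=t$, gives the claimed $e^{-t^2/2}$ (and $e^{-t^2/(2L^2)}$ in the general case).

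The main tool is the Gaussian logarithmic Sobolev inequality: for the standard Gaussian measure $\gamma_d$ on $\mbr^d$ and any smooth $g$,
\[
  \mathrm{Ent}_{\gamma_d}(g^2) := \int g^2\log g^2\,d\gamma_d - \paren{\int g^2\,d\gamma_d}\log\paren{\int g^2\,d\gamma_d} \leq 2\int \abs{\nabla g}^2\,d\gamma_d.
\]
Granting it, I would run the Herbst argument: set $H(\lambda):=\e{e^{\lambda f}}$ and apply the inequality to $g=e^{\lambda f/2}$. Since $\abs{\nabla g}^2=\tfrac{\lambda^2}{4}\abs{\nabla f}^2 e^{\lambda f}\leq \tfrac{\lambda^2}{4}e^{\lambda f}$ and $\mathrm{Ent}_{\gamma_d}(e^{\lambda f})=\lambda H'(\lambda)-H(\lambda)\log H(\lambda)$, one obtains $\lambda H'(\lambda)-H(\lambda)\log H(\lambda)\leq \tfrac{\lambda^2}{2}H(\lambda)$, i.e. $\tfrac{d}{d\lambda}\paren{\lambda^{-1}\log H(\lambda)}\leq\tfrac12$ for $\lambda>0$. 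Integrating from $0^+$, where $\lambda^{-1}\log H(\lambda)\to\e{f}$ because $H(0)=1$ and $H'(0)=\e{f}$, yields $\log H(\lambda)\leq\lambda\e{f}+\lambda^2/2$, which is the displayed Laplace bound for $\lambda>0$; the case $\lambda<0$ follows by applying the same to $-f$.

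The step I expect to be the real obstacle is establishing the logarithmic Sobolev inequality itself, since there is no fully elementary shortcut. I would obtain it by \emph{tensorization} --- the entropy functional is subadditive over product measures, reducing matters to $d=1$ --- followed by the \emph{one-dimensional case}, via either Gross's route (prove the two-point log-Sobolev inequality on $\{-1,1\}$ by hand, tensorize to $\{-1,1\}^N$, apply it with the normalized sum $N^{-1/2}\sum_{i\le N}\varepsilon_i$, and pass to the limit by the central limit theorem, Gaussian-type tails supplying the integrability needed to take the relevant integrals to the limit) or via the Ornstein--Uhlenbeck semigroup $(P_t)_{t\ge0}$ with generator $\mathcal L$ (differentiate $t\mapsto\int P_t(g^2)\log P_t(g^2)\,d\gamma$, integrate by parts, and use the commutation $\nabla P_t=e^{-t}P_t\nabla$ together with Cauchy--Schwarz to control the derivative, then integrate over $t\in(0,\infty)$ using $P_\infty(g^2)=\int g^2\,d\gamma$). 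A competing route avoids log-Sobolev entirely via the Gaussian isoperimetric inequality (Borell; Sudakov--Tsirelson), $\gamma_d(A_\varepsilon)\geq\Phi(\Phi^{-1}(\gamma_d(A))+\varepsilon)$: applied to $A=\{f\le m\}$ with $m$ a median of $f$ and using $A_\varepsilon\subseteq\{f\le m+\varepsilon\}$ (by $1$-Lipschitzness) it gives $\prob{f(X)>m+t}\leq 1-\Phi(t)\leq e^{-t^2/2}$ and symmetrically the lower tail, i.e. concentration around the \emph{median}, which transfers to the mean after a short argument bounding $\abs{\e{f}-m}$. I would nonetheless keep the log-Sobolev/Herbst proof as primary, as it delivers the mean-centred statement of Theorem~\ref{thm:Gaussian_lipschitz_conc} with the sharp constant directly; the fully elementary alternative --- the smart-path interpolation $X_\theta=X\sin\theta+Y\cos\theta$ with differentiation in $\theta$ and Jensen --- only yields the constant $\pi^2/8$ in place of $\tfrac12$, which is precisely why a sharp dimension-free input cannot be dispensed with.
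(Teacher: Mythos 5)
The paper offers no proof of this statement: it is imported verbatim as a classical result (the Tsirelson--Ibragimov--Sudakov inequality, cited as Theorem~3.4 of Massart's Saint-Flour notes), so there is no internal argument to compare yours against. Your derivation is correct and is the standard textbook route --- indeed essentially the route taken in the cited reference itself: reduce by homogeneity and mollification to a smooth $1$-Lipschitz $f$, establish the Gaussian log-Sobolev inequality (by tensorization plus the one-dimensional case), run Herbst's argument to get $\log \e{e^{\lambda f}} \leq \lambda\e{f} + \lambda^2/2$, and finish with Chernoff at $\lambda = t$. The entropy computation, the differential inequality $\frac{d}{d\lambda}(\lambda^{-1}\log H(\lambda)) \leq \tfrac12$, and the boundary value at $0^+$ are all right, and your accounting of the alternatives is accurate: the isoperimetric route gives median concentration with the same constant (and is the historical proof), while the smart-path interpolation loses a factor, giving variance proxy $\pi^2 L^2/4$ and hence only $e^{-2t^2/(\pi^2L^2)}$. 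Two minor points worth making explicit if you write this out in full: Herbst's argument presupposes $H(\lambda) = \e{e^{\lambda f}} < \infty$, which you should note follows from the linear growth $\abs{f(x)} \leq \abs{f(0)} + \norm{x}$ of a Lipschitz function together with Gaussian integrability; and for the one-sided tail stated in the theorem only $\lambda > 0$ is needed, so the remark about $\lambda < 0$ is dispensable. For the purposes of this paper, a citation is the appropriate "proof," but your sketch would serve as a correct self-contained substitute.
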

The following corollary is a direct consequence of that theorem (we provide a proof in Section~\ref{se:supp}),
which will be used to control the term in~\eqref{eq:shatstilde}.
\begin{corollary}\label{cor:conc_gauss_classic}
Let $X $ a random Gaussian vector of distribution $\cN(\mu, \Sigma)$. Then for all $u\geq 0$:
\begin{equation}\label{eq:conc_gauss_norm2}
  \prob{\|X\| \geq \sqrt{\|\mu\|^2 + \tr \Sigma} + \sqrt{2 \|\Sigma\|_{\mathrm{op}}u}} \leq e^{-u}\,.
\end{equation}
\end{corollary}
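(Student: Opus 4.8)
\textbf{Proof proposal for Corollary~\ref{cor:conc_gauss_classic}.} The plan is to reduce the statement to a direct application of the Gaussian Lipschitz concentration inequality (Theorem~\ref{thm:Gaussian_lipschitz_conc}). First I would write $X = \mu + \Sigma^{\nicefrac{1}{2}} Z$ with $Z \sim \cN(0,I_d)$, and consider the map $f : \mbr^d \to \mbr$ defined by $f(z) := \norm{\mu + \Sigma^{\nicefrac{1}{2}} z}$, so that $f(Z)$ has the same law as $\norm{X}$. The key elementary observation is that $f$ is Lipschitz with constant $\norm{\Sigma^{\nicefrac{1}{2}}}_{\mathrm{op}} = \norm{\Sigma}_{\mathrm{op}}^{\nicefrac{1}{2}}$: indeed, by the reverse triangle inequality, $\abs{f(z)-f(z')} \leq \norm{\Sigma^{\nicefrac{1}{2}}(z-z')} \leq \norm{\Sigma}_{\mathrm{op}}^{\nicefrac{1}{2}}\norm{z-z'}$.

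Next I would control the expectation $\e{f(Z)} = \e{\norm{X}}$. By Jensen's inequality (concavity of the square root), $\e{\norm{X}} \leq \paren{\e{\norm{X}^2}}^{\nicefrac{1}{2}}$, and a direct computation gives $\e{\norm{X}^2} = \norm{\mu}^2 + \e{\norm{\Sigma^{\nicefrac{1}{2}}Z}^2} = \norm{\mu}^2 + \tr \Sigma$; hence $\e{f(Z)} \leq \sqrt{\norm{\mu}^2 + \tr\Sigma}$.

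Finally I would apply Theorem~\ref{thm:Gaussian_lipschitz_conc} to $f$ with $t = \sqrt{2\norm{\Sigma}_{\mathrm{op}}u}$, which yields
\begin{equation*}
  \prob{ f(Z) - \e{f(Z)} \geq \sqrt{2\norm{\Sigma}_{\mathrm{op}}u} } \leq \exp\paren{-\frac{2\norm{\Sigma}_{\mathrm{op}}u}{2\norm{\Sigma}_{\mathrm{op}}}} = e^{-u}.
\end{equation*}
Since $\e{f(Z)} \leq \sqrt{\norm{\mu}^2+\tr\Sigma}$, the event $\set{ f(Z) \geq \sqrt{\norm{\mu}^2+\tr\Sigma} + \sqrt{2\norm{\Sigma}_{\mathrm{op}}u}}$ is contained in the event above, and translating back to $\norm{X}$ gives exactly~\eqref{eq:conc_gauss_norm2}.

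There is essentially no serious obstacle here: the result is a textbook-style consequence of Gauss–Lipschitz concentration. The only mild point of care is the Lipschitz-constant bound (using $\norm{\Sigma^{\nicefrac{1}{2}}}_{\mathrm{op}} = \norm{\Sigma}_{\mathrm{op}}^{\nicefrac{1}{2}}$ rather than, say, a Frobenius norm) and the correct orientation of the inequality after replacing $\e{f(Z)}$ by its upper bound $\sqrt{\norm{\mu}^2+\tr\Sigma}$.
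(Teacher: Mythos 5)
Your proof is correct and follows exactly the same route as the paper's: represent $X$ as $\mu + \Sigma^{\nicefrac{1}{2}}Z$, note that $z \mapsto \norm[1]{\mu+\Sigma^{\nicefrac{1}{2}}z}$ is $\norm{\Sigma}_{\mathrm{op}}^{\nicefrac{1}{2}}$-Lipschitz, bound the expectation by Jensen, and conclude via Theorem~\ref{thm:Gaussian_lipschitz_conc}. No issues.
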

We will use the results of \citet{Kol17} giving an upper bound of the expectation of the operator norm of the deviations
of $\wt{\Sigma}$ from its expectation. The constants come from the improved version given by \citet{Han17}.
\begin{theorem}[\citealp{Han17}]\label{thm:van_handel}
Let $\XX = (X_i)_{1\leq i \leq n}$ a sample of independent Gaussian vectors of distribution $\cN(0,\Sigma)$, then
\begin{equation}\label{eq:estimator_sigma_op_centered}
  \e{\norm[1]{ \wt{\Sigma} - \Sigma}_{\mathrm{op}}  } \leq \|\Sigma\|_{\mathrm{op}}\paren[3]{  (2+ \sqrt{2} ) \sqrt{\frac{d_e}{n}}+ 2 \frac{d_e}{n}}\,,
\end{equation}
where $d_e = \tr \Sigma/ \|\Sigma\|_{\mathrm{op}}$ and $\wt{\Sigma}$ is defined in equation \eqref{eq:def_sigma_tilde}.
\end{theorem}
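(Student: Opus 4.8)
\noindent\emph{Proof.} Inequality~\eqref{eq:estimator_sigma_op_centered} is the Koltchinskii--Lounici bound on the expected operator norm of the recentered sample covariance, with the explicit constants due to \citet{Han17}; since this is a known estimate we only recall the shape of the argument. By homogeneity one may assume $\norm{\Sigma}_{\mathrm{op}} = 1$, so that $d_e = \tr\Sigma$ and it suffices to prove $\e{\norm[1]{\wt\Sigma - \Sigma}_{\mathrm{op}}} \leq (2+\sqrt2)\sqrt{d_e/n} + 2 d_e/n$. Writing $X_i = \Sigma^{\nicefrac{1}{2}} g_i$ with $(g_i)_{1\leq i\leq n}$ i.i.d.\ standard Gaussian on $\mbr^d$, and using that $\wt\Sigma - \Sigma$ is symmetric, its operator norm is a supremum of quadratic forms: setting $\mathcal{E} := \set{\Sigma^{\nicefrac{1}{2}}v : \norm{v}\leq 1}$, an ellipsoid contained in the unit ball since $\norm{\Sigma}_{\mathrm{op}}=1$,
\begin{equation*}
  \norm[1]{\wt\Sigma - \Sigma}_{\mathrm{op}} = \sup_{u\in\mathcal{E}}\abs{\frac{1}{n}\sum_{i=1}^{n}\paren{\inner{g_i,u}^2 - \norm{u}^2}}\,,
\end{equation*}
i.e.\ the supremum over $\mathcal{E}$ of a centered quadratic empirical process in i.i.d.\ Gaussians.

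The plan is to bound the expectation of this supremum by generic chaining. A standard symmetrization step (see e.g.\ \citealp{Ver19}) reduces matters, up to a factor $2$, to the conditional Rademacher process $u\mapsto n^{-1}\sum_i\varepsilon_i\inner{g_i,u}^2$. Its increments satisfy a mixed sub-Gaussian/sub-exponential tail bound with respect to the Euclidean metric on $\mathcal{E}$: factoring $\inner{g_i,u}^2 - \inner{g_i,u'}^2 = \inner{g_i,u-u'}\inner{g_i,u+u'}$, each summand has a $\psi_1$-norm of order $\norm{u-u'}\,\norm{u+u'}\leq 2\norm{u-u'}$ on $\mathcal{E}$, so by Bernstein the sub-Gaussian scale of the increment is $\asymp n^{-\nicefrac{1}{2}}\norm{u-u'}$ and the sub-exponential scale $\asymp n^{-1}\norm{u-u'}$. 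The chaining bound for such mixed-tail processes then gives
\begin{equation*}
  \EE\,\sup_{u\in\mathcal{E}}\abs{\frac{1}{n}\sum_{i=1}^{n}\paren{\inner{g_i,u}^2 - \norm{u}^2}}\ \lesssim\ \frac{w(\mathcal{E})}{\sqrt n} + \frac{\gamma_1(\mathcal{E})}{n}\,,
\end{equation*}
where $w(\mathcal{E}) = \EE\sup_{u\in\mathcal{E}}\inner{g,u} = \EE\norm{\Sigma^{\nicefrac{1}{2}}g}\leq (\tr\Sigma)^{\nicefrac{1}{2}} = \sqrt{d_e}$ is the Gaussian width of the ellipsoid, and the $\gamma_1$-functional of $\mathcal{E}$ accounts for the lower-order term (of order $d_e$ after normalisation). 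The conceptual heart of the argument lies here: one must see that it is the Gaussian width --- hence the \emph{effective} dimension $d_e$ --- and not the ambient dimension $d$ that governs the leading term. This already yields~\eqref{eq:estimator_sigma_op_centered} up to an absolute constant.

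The genuinely delicate step --- and the only reason we do not reproduce the proof in full --- is turning the order-optimal bound into one with the sharp constants $2+\sqrt2$ and $2$. This refinement of \citet{Kol17} is carried out in \citet{Han17} by an optimised version of the chaining above, combined with the Gaussian Lipschitz concentration of Theorem~\ref{thm:Gaussian_lipschitz_conc} --- which controls the fluctuation of $\norm[1]{\wt\Sigma-\Sigma}_{\mathrm{op}}$ about its mean --- and the resolution of a self-bounding quadratic inequality for $\e{\norm[1]{\wt\Sigma - \Sigma}_{\mathrm{op}}}$, which produces precisely the additive shape $c_1\sqrt{d_e/n} + c_2 d_e/n$ with the stated coefficients. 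We refer to \citet{Han17} for the complete argument. \qed
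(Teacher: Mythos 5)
The paper does not prove this statement at all --- it is imported verbatim from \citet{Han17} (with constants improving those of \citealp{Kol17}) --- so your decision to cite that reference and only sketch the argument is exactly the paper's own treatment, and your reduction to the quadratic empirical process over the ellipsoid $\mathcal{E}=\Sigma^{\nicefrac{1}{2}}B_2$ with Gaussian width $w(\mathcal{E})\leq\sqrt{d_e}$ is an accurate account of how that proof is structured. One imprecision in your sketch is worth correcting: the lower-order term in the sharp bound is $w(\mathcal{E})^2/n=d_e/n$, not $\gamma_1(\mathcal{E})/n$, and these are not the same --- for an ellipsoid $\gamma_1$ can greatly exceed $w^2$ (e.g.\ $\Sigma$ with one unit eigenvalue and $d-1$ eigenvalues equal to $1/d$ has $d_e=O(1)$ but $\gamma_1(\mathcal{E})\asymp\sqrt{d}$), and replacing the generic mixed-tail chaining term $\gamma_1(\mathcal{E})$ by the square of the Gaussian width is precisely the refinement specific to quadratic/product processes that \citet{Kol17} and \citet{Han17} supply. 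Since you explicitly defer the actual argument to that reference, this slip in the expository part does not affect the validity of your proposal.
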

We can now prove a concentration inequality for ${\|\wt{\Sigma}\|^{\nicefrac{1}{2}}_{\mathrm{op}}}$.

\begin{proposition}\label{prop:conc_sigma_tilde_gauss}
Let $\XX = (X_i)_{1\leq i \leq n}$ a sample of independent  $\cN(\mu, \Sigma)$ Gaussian vectors, then for $ u \geq 0$, with probability at least $1-2e^{-u}$:
\begin{equation}\label{eq:conc_sigma_tilde}
  \abs{ {\norm[1]{\wt{\Sigma}}^{\frac{1}{2}}_{\mathrm{op}}} - {\norm{\Sigma}^{\frac{1}{2}}_{\mathrm{op}}} } \leq 2\sqrt{\frac{2 \tr \Sigma}{n}}+ \sqrt{ \frac{2u\|\Sigma\|_{\mathrm{op}}}{n} }  \,,
\end{equation}
where $\wt{\Sigma}$ is defined in \eqref{eq:def_sigma_tilde}.
\end{proposition}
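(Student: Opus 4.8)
The plan is to split $\|\wt{\Sigma}\|_{\mathrm{op}}^{1/2}$ into its expectation and a fluctuation, to control the fluctuation by Gaussian Lipschitz concentration (Theorem~\ref{thm:Gaussian_lipschitz_conc}), and to control the distance between $\mbe\bigl[\|\wt{\Sigma}\|_{\mathrm{op}}^{1/2}\bigr]$ and $\|\Sigma\|_{\mathrm{op}}^{1/2}$ using the expectation bound of Theorem~\ref{thm:van_handel} together with the elementary Lemma~\ref{lem:sqrt_inequality}.

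First, writing $X_i-\mu=\Sigma^{1/2}z_i$ with $(z_i)_{1\le i\le n}$ i.i.d.\ $\cN(0,I_d)$, stacking the $z_i$ into a standard Gaussian vector $\bz\in\mbr^{nd}$ and letting $Z=[z_1|\cdots|z_n]$ be the associated $d\times n$ matrix, one has $\wt{\Sigma}=\tfrac1n(\Sigma^{1/2}Z)(\Sigma^{1/2}Z)^{\top}$, hence $\|\wt{\Sigma}\|_{\mathrm{op}}^{1/2}=\tfrac1{\sqrt n}\|\Sigma^{1/2}Z\|_{\mathrm{op}}$. The first key point is that $\bz\mapsto\|\wt{\Sigma}\|_{\mathrm{op}}^{1/2}$ is $\sqrt{\|\Sigma\|_{\mathrm{op}}/n}$-Lipschitz for the Euclidean norm: the linear map $\bz\mapsto\Sigma^{1/2}Z$ has norm $\|\Sigma^{1/2}\|_{\mathrm{op}}=\|\Sigma\|_{\mathrm{op}}^{1/2}$ from $\mbr^{nd}$ with the Euclidean norm to $\mbr^{d\times n}$ with the Frobenius norm, since $\|\Sigma^{1/2}Z\|_F^2=\sum_i\|\Sigma^{1/2}z_i\|^2\le\|\Sigma\|_{\mathrm{op}}\|\bz\|^2$, while $M\mapsto\|M\|_{\mathrm{op}}$ is $1$-Lipschitz for $\|\cdot\|_F$; dividing by $\sqrt n$ gives the claim. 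Applying Theorem~\ref{thm:Gaussian_lipschitz_conc} to this function and to its negative then gives, for $u\ge0$, that with probability at least $1-2e^{-u}$,
\[
  \Bigl|\,\|\wt{\Sigma}\|_{\mathrm{op}}^{1/2}-\mbe\bigl[\|\wt{\Sigma}\|_{\mathrm{op}}^{1/2}\bigr]\,\Bigr|\ \le\ \sqrt{\tfrac{2u\|\Sigma\|_{\mathrm{op}}}{n}}\,,
\]
which is exactly the second term of~\eqref{eq:conc_sigma_tilde}.

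It then remains to establish the deterministic bias bound $\bigl|\mbe[\|\wt{\Sigma}\|_{\mathrm{op}}^{1/2}]-\|\Sigma\|_{\mathrm{op}}^{1/2}\bigr|\le2\sqrt{2\tr\Sigma/n}$. For the upper side I would apply Lemma~\ref{lem:sqrt_inequality} with $a=\|\Sigma\|_{\mathrm{op}}^{1/2}$ and $b=\|\wt{\Sigma}\|_{\mathrm{op}}-\|\Sigma\|_{\mathrm{op}}$ (so $|b|\le\|\wt{\Sigma}-\Sigma\|_{\mathrm{op}}$ by the reverse triangle inequality), which yields $\|\wt{\Sigma}\|_{\mathrm{op}}^{1/2}-\|\Sigma\|_{\mathrm{op}}^{1/2}\le\min\bigl(\|\wt{\Sigma}-\Sigma\|_{\mathrm{op}}^{1/2},\ \|\wt{\Sigma}-\Sigma\|_{\mathrm{op}}/(2\|\Sigma\|_{\mathrm{op}}^{1/2})\bigr)$; taking expectations, pushing $\mbe$ inside the minimum and applying Jensen's inequality to the square root reduces the question to $V:=\mbe[\|\wt{\Sigma}-\Sigma\|_{\mathrm{op}}]$, which by Theorem~\ref{thm:van_handel} applied to the centered sample $(X_i-\mu)_i\sim\cN(0,\Sigma)$ satisfies $V\le(4+\sqrt2)\|\Sigma\|_{\mathrm{op}}\max(\sqrt{d_e/n},\,d_e/n)$. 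When $d_e\le n$ the second argument of the minimum is at most $\tfrac{4+\sqrt2}{2}\sqrt{\tr\Sigma/n}$, and when $d_e>n$ the first argument is at most $\sqrt{4+\sqrt2}\,\sqrt{\tr\Sigma/n}$; both are $<2\sqrt{2\tr\Sigma/n}$. For the lower side I would not go through van Handel but argue directly: for a top unit eigenvector $e_1$ of $\Sigma$ one has $\|\wt{\Sigma}\|_{\mathrm{op}}\ge e_1^{\top}\wt{\Sigma}e_1=\tfrac{\|\Sigma\|_{\mathrm{op}}}{n}\|g\|^2$ with $g\sim\cN(0,I_n)$, hence $\mbe[\|\wt{\Sigma}\|_{\mathrm{op}}^{1/2}]\ge\sqrt{\|\Sigma\|_{\mathrm{op}}/n}\;\mbe\|g\|\ge\|\Sigma\|_{\mathrm{op}}^{1/2}-\sqrt{\|\Sigma\|_{\mathrm{op}}/n}$ (using $\mbe\|g\|\ge\sqrt{n-1}\ge\sqrt n-1$), and $\sqrt{\|\Sigma\|_{\mathrm{op}}/n}\le\sqrt{\tr\Sigma/n}$. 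Adding the bias bound to the concentration inequality via the triangle inequality then yields~\eqref{eq:conc_sigma_tilde}.

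The conceptual core of the argument is the Lipschitz-constant computation; the point requiring care is pinning down the numerical constant in the bias term. Bounding $\bigl|\mbe[\|\wt{\Sigma}\|_{\mathrm{op}}^{1/2}]-\|\Sigma\|_{\mathrm{op}}^{1/2}\bigr|$ simply by $V^{1/2}$ and then crudely replacing $\|\Sigma\|_{\mathrm{op}}$ by $\tr\Sigma$ is too lossy (it produces a term of order $(\tr\Sigma)^{1/2}n^{-1/4}$ rather than $(\tr\Sigma/n)^{1/2}$): one has to keep $\|\Sigma\|_{\mathrm{op}}$ explicit, select the appropriate branch of Lemma~\ref{lem:sqrt_inequality} according to whether $d_e\le n$ or $d_e>n$, and treat the lower tail of $\|\wt{\Sigma}\|_{\mathrm{op}}^{1/2}$ via the $\chi^2$ estimate above, since routing it through van Handel and Lemma~\ref{lem:sqrt_inequality} would lose a spurious factor $2$. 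One should also check that Theorem~\ref{thm:van_handel} is indeed applied to the centered data $(X_i-\mu)_i$, consistently with the definition~\eqref{eq:def_sigma_tilde} of $\wt{\Sigma}$.
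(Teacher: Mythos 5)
Your proposal is correct, and its two main steps coincide with the paper's: the Gauss--Lipschitz concentration of $\|\widetilde{\Sigma}\|_{\mathrm{op}}^{1/2}$ with Lipschitz constant $\sqrt{\|\Sigma\|_{\mathrm{op}}/n}$ (the paper obtains the same constant by writing the quantity as a supremum of linear functionals over unit $u\in\mathbb{R}^d$, $v\in\mathbb{R}^n$, whereas you factor through the operator norm of $Z\mapsto\Sigma^{1/2}Z$; these are the same computation), and the upper bias bound via Lemma~\ref{lem:sqrt_inequality}, Jensen, and Theorem~\ref{thm:van_handel} with the same case split on $d_e/n\lessgtr 1$. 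The only genuine divergence is the lower bias bound: the paper rewrites the one-sided Gauss--Lipschitz inequality as $\|\widetilde{\Sigma}\|_{\mathrm{op}}^{1/2}\leq\mathbb{E}[\|\widetilde{\Sigma}\|_{\mathrm{op}}^{1/2}]+\sqrt{2g\|\Sigma\|_{\mathrm{op}}/n}$ with $g\sim\mathrm{Exp}(1)$, squares, and applies Jensen to a concave function to get $\mathbb{E}[\|\widetilde{\Sigma}\|_{\mathrm{op}}^{1/2}]\geq\|\Sigma\|_{\mathrm{op}}^{1/2}-\sqrt{2\|\Sigma\|_{\mathrm{op}}/n}$, while you restrict $\widetilde{\Sigma}$ to a top eigenvector of $\Sigma$ and use $\mathbb{E}\|g\|\geq\sqrt{n-1}$ for $g\sim\mathcal{N}(0,I_n)$. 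Both arguments are valid and yield bounds of the same order (yours is marginally sharper); your version has the small advantage of being self-contained at that step, whereas the paper's trick reuses the concentration inequality already established and thus generalizes beyond the Gaussian top-eigenvector computation. You are also right that a naive Jensen in the wrong direction would block the lower bias bound, which is precisely why both routes avoid it.
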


\begin{remark}

   In \eqref{eq:conc_sigma_tilde}, the lower and upper bounds have been brought together, but the lower bound is in fact slightly better than the upper bound. This is due to the lower bound of the expectation where $\tr \Sigma$ can be replaced by $\|\Sigma\|_{\mathrm{op}}$, see \eqref{eq:low_bound_expect_sqrt} below.

\end{remark}

\begin{proof}
   We remark that
\begin{align*}
  {\norm[1]{ \wt{\Sigma}}^{\frac{1}{2}}_{\mathrm{op}}}
  &= \sup_{\|u\|_d=1} \sqrt{u^t \wt{\Sigma}u} \\
  &= \sup_{\|u\|_d=1} \frac{1}{\sqrt{n}} \paren{\sum_{i=1}^{n} \inner{u,X_i-\mu}^2}^{\frac{1}{2}}\\
  &= \sup_{\|u\|_d=1} \sup_{\|v\|_n=1} \frac{1}{\sqrt{n}} \sum_{i=1}^{n} \inner{
   u , X_i - \mu } v_i \\
  & \overset{\mathrm{dist}}{\sim}   \sup_{\|u\|_d=1} \sup_{\|v\|_n=1} \frac{1}{\sqrt{n}} \sum_{i=1}^{n} \inner[1]{ u ,\Sigma^{\frac{1}{2}} g_i } v_i\,,
\end{align*}
where $(g_i)_{i=1\ldots n}$ are i.i.d. standard Gaussian vectors and $\|\cdot \|_p$ for $p\in \mbn$ is defined as the Euclidean norm in $\mbr^p$. Let $u$ and $v$ be unit vectors in $\mbr^d$ and $\mbr^n$ respectively and $f_{u,v}: \mbr^{d \times n } \to \mbr$:
\begin{equation*}
  f_{u,v}(y) := \frac{1}{\sqrt{n}} \sum_{i=1}^{n} \inner[1]{ u ,\Sigma^{\frac{1}{2}} y_i } v_i \,, \quad y\in \mbr^{d\times n}\,.
\end{equation*}
 These functions are Lipschitz: indeed for all $z,y \in \mbr^{d \times n }$ we have:
\begin{align}
f_{u,v}(y) - f_{u,v}(z) & = \frac{1}{\sqrt{n}} \sum_{i=1}^{n} \inner[1]{ u ,\Sigma^{\frac{1}{2}} (y_i-z_i) } v_i    \leq  \frac{1}{\sqrt{n}} \sum_{i=1}^{n} \|\Sigma\|_{\mathrm{op}}^{\frac{1}{2}} \|y_i-z_i\|_d  \abs{v_i} \notag \\
  & \leq \frac{\|\Sigma\|^{\frac{1}{2}}_{\mathrm{op}}}{\sqrt{n}} \sqrt{ \sum_{i=1}^{n} \|y_i - z_i\|^2_d }= \frac{\|\Sigma\|^{\frac{1}{2}}_{\mathrm{op}}}{\sqrt{n}} \|y-z\|_{d \times n}\,. \label{eq:lipfuv}
\end{align}
A supremum of Lipschitz functions is Lipschitz, thus we can use the Gaussian Lipschitz concentration
(Theorem \ref{thm:Gaussian_lipschitz_conc}), and get for all $x \geq 0$:
\begin{equation}\label{eq:conc_sigma_tilde_expectation}
  \prob{ { {\norm[1]{\wt{\Sigma}}^{\frac{1}{2}}_{\mathrm{op}}} - \e{{\norm[1]{\wt{\Sigma}}_{\mathrm{op}}^{\frac{1}{2}}}}} \geq \sqrt{ \frac{2x\norm{\Sigma}_{\mathrm{op}}}{n} } } \leq e^{-x}\,,
\end{equation}
with the same control for lower deviations.

It remains to upper bound $\abs{ \e{{\|\wt{\Sigma}\|_{\mathrm{op}}^{\nicefrac{1}{2}}} } - {\|\Sigma\|_{\mathrm{op}}^{\nicefrac{1}{2}}}}$. \JB{For one direction, using Jensen's and triangle inequalities  and inequality~\eqref{eq:sqrt_inequality}, we get:}
\begin{align*}
  \e{{\norm[1]{\wt{\Sigma}}^{\frac{1}{2}}_{\mathrm{op}}} } - {\|\Sigma\|^{\frac{1}{2}}_{\mathrm{op}}} & \leq \sqrt{\|\Sigma\|_{\mathrm{op}} + \e{\norm[1]{  \wt{ \Sigma} - \Sigma }_{\mathrm{op}}}}- \sqrt{\|\Sigma\|_{\mathrm{op}}} \\
                                                                              &\leq \min\paren{ \sqrt{\e{\norm[1]{  \wt{ \Sigma} - \Sigma }_{\mathrm{op}}}}, \frac{\e{\norm[1]{  \wt{ \Sigma} - \Sigma}_{\mathrm{op}}}}{2\sqrt{\|\Sigma\|_{\mathrm{op}}}}} \\
                                                                              &\leq 2\sqrt{\frac{2 \tr \Sigma}{n}}\,.
\end{align*}
\JB{For the last inequality, we have used Theorem~\ref{thm:van_handel} for the expectation, then the fact that $\min\paren{\paren{a\sqrt{x}+ bx}^{\nicefrac{1}{2}}, (a\sqrt{x}+bx)/2}\leq \max(\sqrt{a+b},(a+b)/2)\sqrt{x}$ where $a = 2+ \sqrt{2}$, $b =2$ and $x=d_e/n$. This is achieved by treating cases $x\leq 1$ and $x \geq 1$ separately.}

\GB{For the other direction, a reformulation of~\eqref{eq:conc_sigma_tilde_expectation} is that there exists a random variable
  $g\sim \mathrm{Exp}(1)$ such that:}
\begin{equation*}
{\norm[1]{\wt{\Sigma}}^{\frac{1}{2}}_{\mathrm{op}}} \leq \e{{\norm[1]{\wt{\Sigma}}^{\frac{1}{2}}_{\mathrm{op}}}} + \sqrt{ \frac{2g\|\Sigma\|_{\mathrm{op}}}{n} }\,.
\end{equation*}
Taking the square then the expectation and \JB{then applying Jensen's inequality to the concave function $x \mapsto (a+b\sqrt{x})^2$ ($a,b \geq 0$)}, we obtain:
\begin{align*}
  \|\Sigma\|_{\mathrm{op}} \leq \e{\norm[1]{\wt{\Sigma}}_{\mathrm{op}}} &\leq \ee{g}{ \paren{ \e{{\norm[1]{\wt{\Sigma}}^{\frac{1}{2}}_{\mathrm{op}}}} + \sqrt{ \frac{2g\|\Sigma\|_{\mathrm{op}}}{n} } }^2} \\
  &\leq  \paren{ \e{{\norm[1]{\wt{\Sigma}}^{\frac{1}{2}}_{\mathrm{op}}}} + \sqrt{ \frac{2\|\Sigma\|_{\mathrm{op}}}{n} } }^2 \,,
\end{align*}
and thus
\begin{equation} \label{eq:low_bound_expect_sqrt}
  \e{{\norm[1]{\wt{\Sigma}}^{\frac{1}{2}}_{\mathrm{op}}}} - {\|\Sigma\|^{\frac{1}{2}}_{\mathrm{op}} } \geq - \sqrt{ \frac{2\|\Sigma\|_{\mathrm{op}}}{n} } \geq - 2\sqrt{\frac{2 \tr \Sigma}{n}}\,.
\end{equation}
\end{proof}

\textbf{Proof of  Proposition~\ref{prop:conc_sigma_hat_gauss}.}
It holds
  \begin{align*}
    \abs{  {\norm[1]{\widehat{\Sigma}}^{\frac{1}{2}}_{\mathrm{op}}} - {\norm{\Sigma}^{\frac{1}{2}}_{\mathrm{op}}} } & \leq  \abs{ {\norm[1]{\wt{\Sigma}}^{\frac{1}{2}}_{\mathrm{op}}} - {\norm{\Sigma}^{\frac{1}{2}}_{\mathrm{op}}} } + \norm[1]{ \widehat{\Sigma} - \wt{\Sigma}}_\mathrm{op}^{\frac{1}{2}}\,.
  \end{align*}
  Then, from~\eqref{eq:shatstilde}:
  \begin{equation*}
    {\| \widehat{\Sigma} - \wt{\Sigma}\|_{\mathrm{op}}^{\frac{1}{2}}} \leq  \|\mu - \widehat{\mu} \|\,.
  \end{equation*}
  According to Proposition~\ref{prop:conc_sigma_tilde_gauss} and Corollary~\ref{cor:conc_gauss_classic}, we obtain that for $ u \geq 0$, with probability at least $1- 3 e^{-u}$:
  \begin{align*}
    \abs{  {\norm[1]{\widehat{\Sigma}}^{\frac{1}{2}}_{\mathrm{op}}} - {\norm{\Sigma}^{\frac{1}{2}}_{\mathrm{op}}} } \leq 2\sqrt{\frac{2 \tr \Sigma}{n}}+ \sqrt{ \frac{2u\|\Sigma\|_{\mathrm{op}}}{n} }  + \sqrt{\frac{ \tr \Sigma}{n}} + \sqrt{\frac{2\|\Sigma\|_{\mathrm{op}}u}{n}} \,.
    \end{align*}
    So, for $u\geq 0$, with probability at least $1- 3 e^{-u}$:
    \begin{equation*}
     \abs{  {\norm[1]{\widehat{\Sigma}}^{\frac{1}{2}}_{\mathrm{op}}} - {\norm{\Sigma}^{\frac{1}{2}}_{\mathrm{op}}} } \leq 3\sqrt{\frac{2 \tr \Sigma}{n}} + 2 \sqrt{ \frac{2u\|\Sigma\|_{\mathrm{op}}}{n} }\,.
     \end{equation*}

\subsubsection{Bounded setting.}
We first recall the following concentration result for bounded random vectors in the formulation of
\citet{Bou02}.
\begin{theorem}[Talagrand-Bousquet inequality] \label{thm:bousquet}
Assume $(X_i)_{1\leq i \leq n}$ are i.i.d. with marginal distribution $\mbp$. Let $\mathcal{F}$ be a countable set of functions from $\mathcal{X}$ to $\mbr$ and assume that all functions $f$ in $\mathcal{F}$ are $\mbp$-measurable, square-integrable, bounded by $M$ and satisfy $\e{f}=0$. Then we denote
\begin{equation*}
  Z = \underset{f \in \mathcal{F}}{\sup } \sum_{i=1}^{n} f(X_i).
\end{equation*}
Let $\sigma$ be a positive real number such that $\sigma^2 \geq \sup_{f \in \mathcal{F}} \var{f(X_1)}$. Then for all $u \geq 0$, $\varepsilon >0$ we have:
\begin{equation*}
  \prob{Z \geq \e{Z}(1+ \varepsilon) + \sqrt{2 u n \sigma^2} +\frac{Mu}{3}(1+ \varepsilon^{-1})} \leq e^{-u}.
\end{equation*}
\end{theorem}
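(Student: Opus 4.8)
Since the statement is the classical Talagrand--Bousquet concentration inequality for suprema of empirical processes, the plan is not to reinvent it but to carry out the entropy (Herbst) method in the form due to \citet{Bou02} (for the general method see also \citet{Mas03}). Fix $\lambda>0$, write $F(\lambda):=\mathbb{E}\brac{e^{\lambda Z}}$ and $\psi(\lambda):=\log\mathbb{E}\brac{e^{\lambda(Z-\mathbb{E} Z)}}$. The first step is to reduce the claim to a Bennett-type control of $\psi$: by Markov's inequality it suffices to prove the bound $\psi(\lambda)\le \frac{v}{M^{2}}\paren{e^{M\lambda}-M\lambda-1}$ for all $\lambda>0$, with the variance proxy $v:=n\sigma^{2}+2M\,\mathbb{E} Z$, and then optimize in $\lambda$.

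The core step is a differential inequality for $F$ obtained by tensorization of entropy. Writing $\mathrm{Ent}\paren{e^{\lambda Z}}\le\sum_{i=1}^{n}\mathbb{E}\brac{\mathrm{Ent}_{i}\paren{e^{\lambda Z}}}$, where $\mathrm{Ent}_{i}$ is the entropy with respect to $X_{i}$ conditionally on the other coordinates, I would introduce the reduced statistic $Z_{i}:=\sup_{f\in\mathcal{F}}\sum_{j\ne i}f(X_{j})$, which does not depend on $X_{i}$ and satisfies $0\le Z-Z_{i}\le M$ (dropping the $i$-th term of a near-maximizer of $Z$ can decrease it by at most $\abs{f(X_{i})}\le M$). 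Bousquet's key lemma bounds each conditional entropy by $\mathbb{E}_{i}\brac{e^{\lambda Z}\phi\paren{-\lambda(Z-Z_{i})}}$ with $\phi(x)=e^{x}-x-1$; using the monotonicity of $t\mapsto\phi(-t)/t$ on $[0,M\lambda]$ together with boundedness and the centering $\mathbb{E}\brac{f}=0$, one controls the resulting sum by combining the conditional variances (bounded by $n\sigma^{2}$) with a self-bounding term $2M\,\mathbb{E} Z$ (a near-maximizer $f^{*}$ of $Z_{i}$ gives $Z-Z_{i}\le f^{*}(X_{i})$, and $\sum_{i}f^{*}(X_{i})$ is comparable to $Z$). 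Feeding this into $\mathrm{Ent}\paren{e^{\lambda Z}}=\lambda F'(\lambda)-F(\lambda)\log F(\lambda)$, dividing by $\lambda^{2}F(\lambda)$, recognizing the left side as $\frac{d}{d\lambda}\brac{\log F(\lambda)/\lambda}$, and integrating from $0$ (where $\log F(\lambda)/\lambda\to\mathbb{E} Z$, so $\psi(\lambda)/\lambda\to 0$) yields exactly the Bennett bound on $\psi$ stated above.

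The last step is bookkeeping: the Bennett bound gives $\prob{Z-\mathbb{E} Z\ge t}\le\exp\paren{-\frac{v}{M^{2}}h\paren{tM/v}}$ with $h(x)=(1+x)\log(1+x)-x$, and using $h(x)\ge x^{2}/\paren{2(1+x/3)}$ and inverting produces $\prob{Z-\mathbb{E} Z\ge\sqrt{2uv}+Mu/3}\le e^{-u}$. Unpacking $v=n\sigma^{2}+2M\,\mathbb{E} Z$ via $\sqrt{2uv}\le\sqrt{2un\sigma^{2}}+\sqrt{4uM\,\mathbb{E} Z}$ and the arithmetic--geometric inequality $\sqrt{4uM\,\mathbb{E} Z}\le\varepsilon\,\mathbb{E} Z+\varepsilon^{-1}Mu$ then gives the stated form $\mathbb{E} Z(1+\varepsilon)+\sqrt{2un\sigma^{2}}+\frac{Mu}{3}(1+\varepsilon^{-1})$ after the choice of splitting parameter made in \citet{Bou02}; countability of $\mathcal{F}$ is used throughout so that $Z$ and the $Z_{i}$ are bona fide measurable random variables and the tensorization step is legitimate.

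I expect the main obstacle to be the variance estimate inside the second step: getting the proxy $v$ in the sharp form $n\sigma^{2}+2M\,\mathbb{E} Z$ --- rather than a cruder quantity like $\mathbb{E}\brac{\sup_{f}\sum_{i}f(X_{i})^{2}}$ or a version with a worse constant in front of $n\sigma^{2}$ --- is precisely the point where Bousquet's inequality improves on Talagrand's original (non-explicit) constant and on the intermediate refinements of Massart and Rio. It requires exploiting simultaneously the boundedness of the class, the centering condition, and the fact that only the upper tail is sought, so that the one-sided bound $Z-Z_{i}\ge 0$ can be used without a matching lower control.
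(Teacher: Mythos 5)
The paper does not prove this statement: Theorem~\ref{thm:bousquet} is imported verbatim from \citet{Bou02} and used as a black box, so there is no internal proof to compare against. Your outline correctly reconstructs the standard entropy-method argument behind Bousquet's inequality --- tensorization of entropy, the differential inequality for $\log F(\lambda)/\lambda$, the Bennett-type bound on the log-Laplace transform with variance proxy $v=n\sigma^2+2M\,\mathbb{E}[Z]$, and the final inversion plus $\sqrt{2uv}\le\sqrt{2un\sigma^2}+\varepsilon\,\mathbb{E}[Z]+\varepsilon^{-1}Mu$ --- and this bookkeeping does yield exactly the stated form with the $(1+\varepsilon)\mathbb{E}[Z]$ and $\frac{Mu}{3}(1+\varepsilon^{-1})$ terms. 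One small inaccuracy in your sketch: with $Z_i=\sup_{f}\sum_{j\ne i}f(X_j)$ and centered $f$, one only has $-M\le Z-Z_i\le M$, not $Z-Z_i\ge 0$ (the functions take negative values, so deleting a coordinate can increase the supremum); the usable one-sided information is $Z-Z_i\le f^*(X_i)$ for a near-maximizer $f^*$ of $Z$, which summed over $i$ gives the self-bounding relation $\sum_i(Z-Z_i)\le Z$ responsible for the $2M\,\mathbb{E}[Z]$ term in $v$. This does not affect the validity of the overall strategy.
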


The following corollary is a direct consequence of Theorem~\ref{thm:bousquet}. Some refinement of this result in the same vein (including two-sided deviation
control in the uncentered case) can be found in \citet{MarBlaFer20} (Proposition~6.2 and Corollary~6.3).
\begin{corollary}\label{cor:conc_norm_bounded}
Let $X_i $ for $i=1,\ldots,n$ i.i.d. random vectors bounded by $L$ with expectation $\mu$, covariance $\Sigma$ in a separable Hilbert space $\cH$. Then for $u \geq 0$, with probability at least $1-e^{-u}$:
\begin{equation*}
  \norm{\frac{1}{n} \sum_{i=1}^{n} X_i - \mu } \leq 2\sqrt{\frac{ \tr \Sigma}{n}} + \sqrt{\frac{2 \|\Sigma\|_{\mathrm{op}} u}{n} } + \frac{4L u}{3n}\,.
\end{equation*}
\end{corollary}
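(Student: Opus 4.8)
The plan is to realise $\norm{\frac1n\sum_{i=1}^n X_i - \mu}$ as a supremum of sums of i.i.d.\ centered bounded functions and apply the Talagrand--Bousquet inequality (Theorem~\ref{thm:bousquet}). First I would invoke the separability of $\cH$ to fix a countable dense subset $D$ of its closed unit ball; setting $f_v(x) := \inner{v, x - \mu}$ for $v \in D$ and $\cF := \set{f_v : v \in D}$, the Cauchy--Schwarz inequality together with density of $D$ gives
\begin{equation*}
  Z := \sup_{f\in\cF} \sum_{i=1}^n f(X_i) = \sup_{\norm{v}\leq 1} \inner[3]{v, \sum_{i=1}^n (X_i-\mu)} = \norm[3]{\sum_{i=1}^n (X_i-\mu)} = n\,\norm[3]{\frac1n\sum_{i=1}^n X_i - \mu}\,,
\end{equation*}
so it suffices to control $Z$. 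The role of the countable family is precisely to make $Z$ a countable supremum of measurable functions, so that Theorem~\ref{thm:bousquet} applies verbatim.

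Next I would check the hypotheses of Theorem~\ref{thm:bousquet} for $\cF$. Each $f_v$ is measurable and square-integrable; it is centered, since $\e{f_v(X_1)} = \inner{v, \e{X_1} - \mu} = 0$; it is bounded by $M := 2L$, because $\norm{\mu} = \norm{\e{X_1}} \leq \e{\norm{X_1}} \leq L$ and hence $\abs{f_v(x)} \leq \norm{x-\mu} \leq 2L$ for every $x$ with $\norm{x}\leq L$; and it has variance $\var{f_v(X_1)} = \inner{v, \Sigma v} \leq \norm{\Sigma}_{\mathrm{op}} =: \sigma^2$. For the expectation term, Jensen's inequality and independence give $\e{Z} \leq \big(\e{\norm{\sum_{i=1}^n (X_i-\mu)}^2}\big)^{1/2} = \big(\sum_{i=1}^n \e{\norm{X_i-\mu}^2}\big)^{1/2} = \sqrt{n\tr\Sigma}$, using $\e{\norm{X_1-\mu}^2} = \tr\Sigma$.

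Finally I would apply Theorem~\ref{thm:bousquet} with the choice $\varepsilon = 1$, which makes both factors $1+\varepsilon$ and $1+\varepsilon^{-1}$ equal to $2$: with probability at least $1-e^{-u}$,
\begin{equation*}
  Z \leq 2\,\e{Z} + \sqrt{2un\sigma^2} + \tfrac{2}{3}Mu \leq 2\sqrt{n\tr\Sigma} + \sqrt{2un\norm{\Sigma}_{\mathrm{op}}} + \tfrac{4}{3}Lu\,,
\end{equation*}
and dividing by $n$ yields the stated bound. There is no genuine obstacle here; the only points requiring a bit of care are the reduction to a countable family (so that the supremum is measurable and Theorem~\ref{thm:bousquet} applies as stated) and the observation that $M = 2L$ is the best uniform bound available for the $f_v$ — it is exactly the choice $\varepsilon = 1$ that absorbs the resulting factor $2$ into the final constant $\tfrac{4}{3}$, while keeping the leading term at $2\sqrt{\tr\Sigma/n}$.
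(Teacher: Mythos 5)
Your proof is correct and follows essentially the same route as the paper's: an application of the Talagrand--Bousquet inequality with $\varepsilon=1$ to the countable family of linear functionals $x\mapsto \inner{v,x-\mu}$, with the variance bounded by $\norm{\Sigma}_{\mathrm{op}}$, the sup-norm by $2L$, and $\e{Z}$ bounded by $\sqrt{n\tr\Sigma}$ via Jensen's inequality. The constants all check out.
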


\begin{lemma}\label{lem:upper_bound_expectation_sigma_tilde}
Let $X_i $ for $i=1,\ldots,n$ i.i.d. random vectors bounded by $L$ with expectation $\mu$, covariance $\Sigma$ in a separable Hilbert space $\cH$. Then
\begin{equation}\label{eq:upper_bound_expectation_sigma_tilde}
  \e{\norm[1]{\wt{\Sigma} - \Sigma }_{\mathrm{op}} } \leq \sqrt{\frac{\var{\|X_1 - \mu\|^2}}{n}}\,,
\end{equation}
where $\wt{\Sigma}$ is defined in \eqref{eq:def_sigma_tilde}.
\end{lemma}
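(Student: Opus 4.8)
The plan is to regard $\wt{\Sigma}-\Sigma$ as an empirical average of i.i.d.\ centered self-adjoint operators and to trade the operator norm for the Hilbert--Schmidt (Frobenius) norm, for which a second-moment computation is immediate. Set $V_i:=X_i-\mu$; since $\norm{\mu}=\norm{\e{X_1}}\le\e{\norm{X_1}}\le L$, each $V_i$ is bounded by $2L$, so $V_iV_i^\top$ is trace-class (hence Hilbert--Schmidt), and from \eqref{eq:def_sigma_tilde} we have the decomposition
\[
 \wt{\Sigma}-\Sigma=\frac{1}{n}\sum_{i=1}^{n}\paren{V_iV_i^\top-\Sigma},\qquad \e{V_iV_i^\top-\Sigma}=0 .
\]
Using $\norm[1]{A}_{\mathrm{op}}\le\norm[1]{A}_{\mathrm{HS}}$ for any Hilbert--Schmidt operator $A$, followed by Jensen's inequality,
\[
 \e{\norm[1]{\wt{\Sigma}-\Sigma}_{\mathrm{op}}}\le\e{\norm[1]{\wt{\Sigma}-\Sigma}_{\mathrm{HS}}}\le\paren{\e{\norm[1]{\wt{\Sigma}-\Sigma}_{\mathrm{HS}}^{2}}}^{\nicefrac{1}{2}} .
\]

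Next I would expand the squared Hilbert--Schmidt norm. By independence and centering, every cross term $\e{\inner{V_iV_i^\top-\Sigma,\,V_jV_j^\top-\Sigma}_{\mathrm{HS}}}$ with $i\ne j$ vanishes, so $\e{\norm[1]{\wt{\Sigma}-\Sigma}_{\mathrm{HS}}^{2}}=\tfrac{1}{n}\,\e{\norm[1]{V_1V_1^\top-\Sigma}_{\mathrm{HS}}^{2}}$. Since $\norm[1]{vv^\top}_{\mathrm{HS}}^{2}=\norm{v}^{4}$, $\inner{vv^\top,\Sigma}_{\mathrm{HS}}=v^\top\Sigma v$, and $\e{V_1^\top\Sigma V_1}=\tr(\Sigma^{2})=\norm[1]{\Sigma}_{\mathrm{HS}}^{2}$, the elementary identity $\e{\norm{Z-\e{Z}}^{2}}=\e{\norm{Z}^{2}}-\norm{\e{Z}}^{2}$ applied to $Z=V_1V_1^\top$ (viewed in the Hilbert--Schmidt space) gives $\e{\norm[1]{V_1V_1^\top-\Sigma}_{\mathrm{HS}}^{2}}=\e{\norm{X_1-\mu}^{4}}-\tr(\Sigma^{2})$. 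Collecting everything,
\[
 \e{\norm[1]{\wt{\Sigma}-\Sigma}_{\mathrm{op}}}\le\sqrt{\frac{\e{\norm{X_1-\mu}^{4}}-\tr(\Sigma^{2})}{n}}\le\sqrt{\frac{\e{\norm{X_1-\mu}^{4}}}{n}} .
\]

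The step I expect to be most delicate is the last one: reconciling the quantity $\e{\norm{X_1-\mu}^{4}}-\tr(\Sigma^{2})$ produced by the Hilbert--Schmidt computation with the variance $\var{\norm{X_1-\mu}^{2}}=\e{\norm{X_1-\mu}^{4}}-(\tr\Sigma)^{2}$ appearing in the statement, bearing in mind that $(\tr\Sigma)^{2}\ge\tr(\Sigma^{2})$. One either has to argue more carefully that the operator-norm expectation is genuinely controlled by the smaller variance term (exploiting that the operator norm is strictly smaller than the Hilbert--Schmidt norm, not merely bounded by it), or — and this is all that is needed for the downstream use in Proposition~\ref{prop:conc_sigma_hat_bounded} — one observes that $\norm{X_1-\mu}^{2}\le 4L^{2}$ forces both $\e{\norm{X_1-\mu}^{4}}$ and $\var{\norm{X_1-\mu}^{2}}$ to be at most $4L^{2}\tr(\Sigma)$, so the precise shape of the bound is inessential there. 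Every other step in the argument is routine.
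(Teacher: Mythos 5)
Your argument is the same as the paper's: bound the operator norm by the Hilbert--Schmidt norm, apply Jensen, and compute the second moment, arriving at $\e{\norm[1]{\wt{\Sigma}-\Sigma}_{\mathrm{op}}}\le\sqrt{(\e{\norm{X_1-\mu}^4}-\tr\Sigma^2)/n}$. The ``delicate step'' you flag is not a gap in your proof but a genuine slip in the paper: its displayed chain ends with the identification
\[
\paren{\frac{\e{\norm{Z}^4}}{n}-\frac{\tr\Sigma^2}{n}}^{\frac{1}{2}}=\sqrt{\frac{\var{\norm{Z}^2}}{n}},
\]
which is false, since $\var{\norm{Z}^2}=\e{\norm{Z}^4}-(\tr\Sigma)^2$ and $(\tr\Sigma)^2\ge\tr\Sigma^2$. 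Indeed the lemma as stated fails in general: for $Z$ uniform on $\set{\pm e_1,\pm e_2}$ one has $\var{\norm{Z}^2}=0$ while $\e{\norm[1]{\wt{\Sigma}-\Sigma}_{\mathrm{op}}}>0$. Your conclusion $\sqrt{(\e{\norm{X_1-\mu}^4}-\tr\Sigma^2)/n}$ is the correct output of this computation, and, as you observe, it still satisfies the bound $2L\sqrt{\tr\Sigma/n}$ that is the only thing used downstream (in the remark following the lemma and in Proposition~\ref{prop:conc_sigma_tilde_bounded}), so the error is harmless for the rest of the paper. In short: same approach, correctly executed, and you have caught a (benign) error in the stated inequality.
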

\begin{remark}
Using the boundedness of the variables we can upper bound this variance: $\var{\|X_1 -\mu\|^2} \leq 4L^2 \tr \Sigma$.
\end{remark}

\begin{proposition}\label{prop:conc_sigma_tilde_bounded}
\GB{  Let $(X_i)_{1 \leq i \leq n} $ be i.i.d. random vectors in a separable Hilbert space $\cH$,
  with norm bounded by $L$ and covariance $\Sigma$, then for any for $ u \geq 1$}, with probability at least $1-e^{-u}$:
\begin{equation}\label{eq:conc_sigma_tilde_bounded}
\norm[1]{ \wt{\Sigma} - \Sigma }_{\mathrm{op}} \leq 2\sqrt{\frac{\var{\|X_1 - \mu \|^2}}{n}} + L\sqrt{\frac{2\|\Sigma\|_{\mathrm{op}}u}{n} } + \frac{8L^2u}{3n}\,,
\end{equation}
where $\wt{\Sigma}$ is defined in \eqref{eq:def_sigma_tilde}.
\end{proposition}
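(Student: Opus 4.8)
The strategy is to express $\norm[1]{\wt{\Sigma}-\Sigma}_{\mathrm{op}}$ as a normalized supremum of a sum of i.i.d.\ centered bounded functions, control its expectation via Lemma~\ref{lem:upper_bound_expectation_sigma_tilde}, and add a deviation term around that expectation coming from the Talagrand--Bousquet inequality (Theorem~\ref{thm:bousquet}). Since $\wt{\Sigma}-\Sigma$ is a bounded self-adjoint (indeed compact, being the difference of two trace-class operators) operator on $\cH$, we have $\norm[1]{\wt{\Sigma}-\Sigma}_{\mathrm{op}}=\sup_{\norm{v}=1}\abs{\inner{v,(\wt{\Sigma}-\Sigma)v}}$, and by continuity of $v\mapsto\inner{v,(\wt{\Sigma}-\Sigma)v}$ the supremum may be restricted to a fixed countable dense subset $D$ of the unit sphere; introducing a sign $\epsilon\in\{-1,1\}$ to replace the absolute value by a supremum, this yields
\begin{equation*}
  n\norm[1]{\wt{\Sigma}-\Sigma}_{\mathrm{op}}=\sup_{v\in D,\ \epsilon\in\{-1,1\}}\ \sum_{i=1}^{n}f_{v,\epsilon}(X_i)=:Z, \qquad f_{v,\epsilon}(x):=\epsilon\paren{\inner{v,x-\mu}^2-\inner{v,\Sigma v}},
\end{equation*}
where each $f_{v,\epsilon}$ is centered under $\mbp$ because $\e{\inner{v,X_1-\mu}^2}=\inner{v,\Sigma v}$.

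Next I would identify the three quantities entering Theorem~\ref{thm:bousquet} for the countable class $\mathcal{F}=\set{f_{v,\epsilon}}$. Using $\norm{X_i-\mu}\le 2L$ and $\inner{v,\Sigma v}\le\norm{\Sigma}_{\mathrm{op}}\le\tr\Sigma\le 4L^2$, one checks that every $f_{v,\epsilon}$ is bounded by $M:=4L^2$. For the variance proxy, $\var{f_{v,\epsilon}(X_1)}=\var{\inner{v,X_1-\mu}^2}\le\e{\inner{v,X_1-\mu}^4}\le 4L^2\,\e{\inner{v,X_1-\mu}^2}=4L^2\inner{v,\Sigma v}\le 4L^2\norm{\Sigma}_{\mathrm{op}}=:\sigma^2$, again via $\norm{X_1-\mu}\le 2L$. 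Finally, Lemma~\ref{lem:upper_bound_expectation_sigma_tilde} controls the expectation: $\e{Z}=n\,\e{\norm[1]{\wt{\Sigma}-\Sigma}_{\mathrm{op}}}\le n\sqrt{\var{\norm{X_1-\mu}^2}/n}=\sqrt{n\,\var{\norm{X_1-\mu}^2}}$.

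It then remains to apply Theorem~\ref{thm:bousquet} with $\varepsilon=1$: with probability at least $1-e^{-u}$,
\begin{equation*}
  Z\le 2\,\e{Z}+\sqrt{2un\sigma^2}+\tfrac{2}{3}Mu\le 2\sqrt{n\,\var{\norm{X_1-\mu}^2}}+2L\sqrt{2un\norm{\Sigma}_{\mathrm{op}}}+\tfrac{8}{3}L^2u,
\end{equation*}
and dividing through by $n$ gives a bound of the announced form~\eqref{eq:conc_sigma_tilde_bounded} (up to the numerical constant in the second term; the mild restriction $u\ge 1$ only serves to tidy up the final expression). I do not expect any genuinely hard step here: the two boundedness estimates for $M$ and $\sigma^2$ are immediate consequences of $\norm{X_i}\le L$, and the expectation is outsourced entirely to the already-established Lemma~\ref{lem:upper_bound_expectation_sigma_tilde}; the only point demanding a little care is the reduction to a countable index set $D$ — needed so that Theorem~\ref{thm:bousquet}, stated for countable $\mathcal{F}$, applies verbatim — together with the attendant measurability of $Z$, and the decision to work with the two-sided family (both signs $\epsilon$) at once so as to obtain the claimed $1-e^{-u}$ probability rather than $1-2e^{-u}$.
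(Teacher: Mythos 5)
Your proposal is correct and follows essentially the same route as the paper: represent $n\norm[1]{\wt{\Sigma}-\Sigma}_{\mathrm{op}}$ as a supremum of a centered bounded empirical process over a countable dense index set, bound the expectation via Lemma~\ref{lem:upper_bound_expectation_sigma_tilde}, and apply Theorem~\ref{thm:bousquet} with $\varepsilon=1$; the only (cosmetic) difference is that you index by the quadratic form $\inner{v,(\cdot)v}$ with a sign, whereas the paper uses the bilinear form over pairs $(u,v)$, which changes $M$ and $\sigma^2$ by harmless factors of~$2$ consistent with the constant-level slack already present in the paper's own derivation.
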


\begin{proof}
\JB{  We denote in this proof $Z_i := X_i - \mu$ for $1\leq i \leq n$. Let us first remark that if $B_1$ is the unit ball of $\cH$, then:
\begin{equation*}
  \norm[1]{\wt{\Sigma} - \Sigma}_{\mathrm{op}} = \sup_{u,v \in B_1 } \frac{1}{n}\sum_{i=1}^{n} \inner{ v , \paren{ Z_i Z_i^T - \Sigma} u }
  =: \sup_{u,v \in B_1 } \frac{1}{n}\sum_{i=1}^{n} f_{u,v}(X_i)\,.
\end{equation*}
\GB{ Since the variables $X_i$ have norm bounded by $L$,
it can be assumed equivalently that they take their values in $B_L=LB_1$,
and it holds $\sup_{x \in B_L}\sup_{u,v \in B_1} f_{u,v}(x)\leq 8L^2$.
Furthermore, since $(u,v) \mapsto f_{u,v}(x)$ is continuous, and the Hilbert space $\cH$ is separable, the uncountable set $B_1$ can be replaced by a countable dense subset. }
Thus we can apply Theorem \ref{thm:bousquet},} and obtain that with probability at least $1- e^{-x}$:
\begin{align*}
\norm[1]{\wt{\Sigma} - \Sigma}_{\mathrm{op}} \leq 2\e{\|\wt{\Sigma} - \Sigma\|_{\mathrm{op}}} + L\sqrt{\frac{2\|\Sigma\|_{\mathrm{op}}x}{n} } + \frac{16L^2x}{3n}\,,
\end{align*}
where we have used for the variance term:
\begin{align*}
  \underset{u,v \in B_1 }{\sup} \e{\inner{ v , \paren{ Z_iZ_i^T - \Sigma} u }^2} &\leq  \underset{u,v \in B_1 }{\sup} \e{\inner{ v ,  Z_i}^2\inner{Z_i, u }^2} \\
  &\leq 4n L^2 \|\Sigma\|_{\mathrm{op}}\,.
\end{align*}
We conclude using the upper bound of the expectation from Lemma~\ref{lem:upper_bound_expectation_sigma_tilde}.
\end{proof}

  {\em Proof of Proposition \ref{prop:conc_sigma_hat_bounded}.}
  As in the Gaussian case, we have:
  \begin{align*}
    \abs{  {\norm[1]{\widehat{\Sigma}}^{\frac{1}{2}}_{\mathrm{op}}} - {\norm{\Sigma}^{\frac{1}{2}}_{\mathrm{op}}} } & \leq  \abs{ {\norm[1]{\wt{\Sigma}}_{\mathrm{op}}^{\frac{1}{2}}} - {\norm{\Sigma}^{\frac{1}{2}}_{\mathrm{op}}} } + {\norm[1]{ \widehat{\Sigma} - \wt{\Sigma}}^{\frac{1}{2}}_\mathrm{op}}\,.
  \end{align*}
  From Lemma~\ref{lem:sqrt_inequality} and Proposition~\ref{prop:conc_sigma_tilde_bounded}, we have with probability at least $1-e^{-u}$:
  \begin{align*}
     \abs{ {\norm[1]{\wt{\Sigma}}^{\frac{1}{2}}_{\mathrm{op}}} - {\norm{\Sigma}^{\frac{1}{2}}_{\mathrm{op}}} } \leq   4 L \sqrt{\frac{ \tr \Sigma }{n\|\Sigma\|_{\mathrm{op}}}} + \sqrt{\frac{16}{3} \frac{L^2u}{n}}\,,
  \end{align*}
  where we have used that:
  \begin{equation*}
    \sqrt{\frac{\var{\|Z_1\|^2}}{n}} \leq \frac{2L \sqrt{\tr \Sigma}}{\sqrt{n}}\,.
  \end{equation*}
  Using
  \begin{equation*}
    {\norm[1]{ \widehat{\Sigma} - \wt{\Sigma}}^{\frac{1}{2}}_\mathrm{op}} \leq  \|\mu - \widehat{\mu} \|\,,
  \end{equation*}
   and according to Corollary~\ref{cor:conc_norm_bounded}, we obtain that for $ u \geq 0$, with probability at least $1- 2e^{-u}$:
  \begin{align*}
    \abs{  {\norm[1]{\widehat{\Sigma}}^{\frac{1}{2}}_{\mathrm{op}}} - {\norm{\Sigma}^{\frac{1}{2}}_{\mathrm{op}}} }
    & \leq \paren{4 L \sqrt{\frac{ \tr \Sigma }{n\|\Sigma\|_{\mathrm{op}}}}
     + \sqrt{\frac{16}{3} \frac{L^2u}{n}} } \\
    & \qquad \qquad + \paren{2\sqrt{\frac{ \tr \Sigma}{n}} + \sqrt{\frac{2 \|\Sigma\|_{\mathrm{op}} u}{n} } + \frac{4L u}{3n}} \\
    & \leq 8 L \sqrt{\frac{ \tr \Sigma }{n\|\Sigma\|_{\mathrm{op}}}} + 4L\paren{ \sqrt{\frac{2u}{n}} + \frac{u}{3n} }  \,,
  \end{align*}
  where we have used for the last inequality that $\|\Sigma\|_{\mathrm{op}} \leq 4L^2$. \qed

\JB{
\subsection{Proof of Propositions \ref{prop:conc_sqrt_t_trsigma2} and  \ref{prop:conc_sqrt_t_trsigma2_bounded} }\label{se:tr_sigma2}
From a sample $\XX = (X_i)_{1\leq i \leq n}$ of i.i.d. random vectors, we want to estimate $\tr \Sigma^2$ where $\Sigma$ is their common covariance matrix. The statistic $\widehat{T}$ defined in~\eqref{eq:def_t_trsigma2}
is an unbiased estimator of $\tr \Sigma^2$.
This statistic is also invariant by translation.

constant ($\nabla_\mu \tau = 0$).

If we denote $\mathfrak{S}_n$ the set of permutations of $\{1,\ldots, n\}$, $\widehat{T}$ can be rewritten as:
\begin{equation}\label{eq:T_sym}
  \widehat{T } = \frac{1}{n!} \sum_{\sigma \in \mathfrak{S}_n} \frac{1}{\lfloor {n}/{4} \rfloor} \sum_{i=1}^{\lfloor {n}/{4} \rfloor}
    \frac{1}{4} \inner{ X_{\sigma(4i)} - X_{\sigma(4i-2)} , X_{\sigma(4i-1)}- X_{\sigma(4i-3)}}^2;
\end{equation}
namely by symmetry, all the 4-tuples appear the same number of times in the right-hand side,
so we just need to divide by the number of terms to obtain the identity \eqref{eq:T_sym}. We will use this decomposition to obtain a concentration of the statistic $\widehat{T}$ for the Gaussian case and the bounded case,
since the inner sum for each fixed permutation is a sum of $\lfloor {n}/{4} \rfloor$ i.i.d. terms.

\subsubsection{Gaussian \JB{setting}.}

Because the statistic is invariant by translation we can assume without loss of generality that $\mu=0$. To
obtain a deviation inequality for $\widehat{T}^{\nicefrac{1}{2}}$, we will first find a concentration inequality for $\widehat{T}$ and then use Lemma~\ref{lem:sqrt_inequality}.
We obtain concentration via control of moments of $\wh{T}$, so we first need some upper bounds on Gaussian moments. The following lemma is proved in Section~\ref{se:supp}.
\begin{lemma}\label{lem:moment_T}
  Let $Z_i := \inner{X_i^1-X_i^3,X_i^2-X_i^4}^2/4$, 
  where $X_i^{j} $ for $i=1,\ldots,m$ and $1\leq j \leq 4$ are i.i.d. Gaussian random vectors $\cN(0, \Sigma)$. Then for all $q \in \mbn$:
\begin{equation}\label{eq:moment_T}
  \e{ \paren{ \frac{1}{m} \sum_{i=1}^{m} Z_i - \tr \Sigma^2 }^{2q} } \leq
  \GB{\paren{4\sqrt{2}\phi q^2 \frac{\tr \Sigma^2}{\sqrt{m}}}^{2q},}
\end{equation}
where $\phi =(1+ \sqrt{5})/2$ is the golden ratio.
\end{lemma}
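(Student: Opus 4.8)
The plan is to reduce the statement to explicit Gaussian moment computations for a single summand, and then to propagate them through the i.i.d.\ average.

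\emph{Whitening the kernel.} Set $a_i := X_i^1 - X_i^3$ and $b_i := X_i^2 - X_i^4$; the pairs $(a_i,b_i)$ are i.i.d., with $a_i,b_i$ independent and each distributed as $\mathcal{N}(0,2\Sigma)$. Writing $a_i = \sqrt{2}\,\Sigma^{1/2}g_i$, $b_i = \sqrt{2}\,\Sigma^{1/2}h_i$ with $g_i,h_i$ i.i.d.\ standard Gaussian, one gets $Z_i = \langle a_i,b_i\rangle^2/4 = W_i^2$ with $W_i := g_i^{\top}\Sigma h_i = \sum_k \lambda_k g_{ik}h_{ik}$, where $(\lambda_k)_k$ are the eigenvalues of $\Sigma$. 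Conditioning on $h_i$ gives $W_i \mid h_i \sim \mathcal{N}(0,\tfrac12 b_i^{\top}\Sigma b_i)$, hence the Laplace transform $\mathbb{E}[e^{tW_i}] = \prod_k (1-\lambda_k^2 t^2)^{-1/2}$ for $|t| < \|\Sigma\|_{\mathrm{op}}^{-1}$, so $\log \mathbb{E}[e^{tW_i}] = \sum_{j\ge 1} \tfrac{t^{2j}}{2j}\,\mathrm{tr}\,\Sigma^{2j}$ and $\kappa_{2j}(W_i) = (2j-1)!\,\mathrm{tr}\,\Sigma^{2j}$, all odd cumulants vanishing.

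\emph{A sharp dimension-free moment bound for one kernel.} Differentiating the identity $M'(t) = (\log M)'(t)\,M(t)$, $M := \mathbb{E}[e^{tW_i}]$, at $0$ yields the recursion $\mathbb{E}[W_i^{2p}] = \sum_{j=1}^p \frac{(2p-1)!}{(2p-2j)!}\,\mathrm{tr}\,\Sigma^{2j}\,\mathbb{E}[W_i^{2p-2j}]$. Bounding $\mathrm{tr}\,\Sigma^{2j} \le (\mathrm{tr}\,\Sigma^2)^j$ — the device that makes the final bound depend only on $\mathrm{tr}\,\Sigma^2$ and never on $d$ or $\|\Sigma\|_{\mathrm{op}}$ — an easy induction gives $\mathbb{E}[Z_i^p] = \mathbb{E}[W_i^{2p}] \le \bigl((2p-1)!!\bigr)^2 (\mathrm{tr}\,\Sigma^2)^p$, which is sharp (equality when $\Sigma$ has rank one). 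In particular $\mathbb{E}[Z_i] = \mathrm{tr}\,\Sigma^2$, $\mathrm{Var}(Z_i) \le 8(\mathrm{tr}\,\Sigma^2)^2$, and $\|Z_i - \mathrm{tr}\,\Sigma^2\|_{2p} \lesssim p^2\,\mathrm{tr}\,\Sigma^2$.

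\emph{Propagation to the average.} Let $Y_i := Z_i - \mathrm{tr}\,\Sigma^2$ and $T_{\ell} := \sum_{i=1}^{\ell} Y_i$. Peeling off one summand and using independence together with $\mathbb{E}[Y_i] = 0$ (which kills the $k=1$ term),
\[
  \mathbb{E}[T_{\ell}^{2q}] = \mathbb{E}[T_{\ell-1}^{2q}] + \sum_{k=2}^{2q}\binom{2q}{k}\,\mathbb{E}[T_{\ell-1}^{2q-k}]\,\mathbb{E}[Y_1^{k}];
\]
summing over $\ell$ and inducting on $q$ with the moment bounds of the previous step produces $\mathbb{E}[T_m^{2q}] \le m^q c_q (\mathrm{tr}\,\Sigma^2)^{2q}$, where the coefficients $c_q$ obey a recursion whose dominant part couples $c_q$ to $c_{q-1}$ and $c_{q-2}$ — this Fibonacci-type recursion is where the golden ratio $\phi$ enters — giving $\mathbb{E}[(\tfrac1m\sum_i Y_i)^{2q}] = m^{-2q}\mathbb{E}[T_m^{2q}] \le (4\sqrt{2}\,\phi\,q^2\,\mathrm{tr}\,\Sigma^2/\sqrt{m})^{2q}$, which is the claim. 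A shortcut yields the same shape with a worse constant: $\tfrac1m\sum_i Y_i$ is a centered polynomial of degree $4$ in i.i.d.\ standard Gaussians, so Gaussian hypercontractivity gives $\|\tfrac1m\sum_i Y_i\|_{2q} \le (2q-1)^2\sqrt{\mathrm{Var}(Z_1)/m} \le 8\sqrt{2}\,q^2\,\mathrm{tr}\,\Sigma^2/\sqrt{m}$, already enough for a qualitative version of Proposition~\ref{prop:conc_sqrt_t_trsigma2}.

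The main obstacle is precisely the constant. A bound of the form $(\mathrm{const}\cdot q^2\,\mathrm{tr}\,\Sigma^2/\sqrt{m})^{2q}$ falls out painlessly from hypercontractivity plus the variance estimate; the delicate work is to carry the peeling induction while keeping \emph{every} term (not only the dominant pairing $c_q \asymp (2q-1)!!\,c_1^q$), to lose nothing in the rank-one regime where $Z_i$ is a product of two $\chi^2_1$ variables and has no finite exponential moment, and to check that the induced recursion has growth rate exactly $\phi$ so that $4\sqrt{2}\,\phi$ comes out. Everything else — the whitening, the Laplace transform, the ODE-driven recursion and its solution $((2p-1)!!)^2$, and the reductions via $\mathrm{tr}\,\Sigma^{2j} \le (\mathrm{tr}\,\Sigma^2)^j$ — is routine.
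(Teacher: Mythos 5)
Your reduction to the single-summand moments is correct and in fact slightly sharper than what the paper uses: the paper bounds $\e{Z_i^q}\le (2q)!\,(\tr\Sigma^2)^q$ by a direct multinomial expansion of $\e{\inner{X,X'}^{2q}}$ in the eigenbasis, whereas your $((2q-1)!!)^2(\tr\Sigma^2)^q$, obtained from the Laplace-transform recursion together with $\tr\Sigma^{2j}\le(\tr\Sigma^2)^j$, is tight in the rank-one case and implies the paper's bound. The genuine gap is in the propagation step, which is exactly where the advertised constant $4\sqrt2\,\phi\,q^2$ must come from. You assert that the coefficients $c_q$ in $\e{T_m^{2q}}\le m^q c_q(\tr\Sigma^2)^{2q}$ obey a two-term Fibonacci-type recursion with growth rate $\phi$, but you neither derive the recursion nor solve it, and as set up it is not clearly of that form: the peeling identity couples $\e{T_\ell^{2q}}$ to \emph{all} lower moments $\e{T_{\ell-1}^{2q-k}}$ for $2\le k\le 2q$, including the odd ones, which your induction hypothesis does not control (and which require separate absolute-value bounds, both for $Y_1$ and for the partial sums). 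One would have to track all of these and then verify that the full recursion, not just its ``dominant part'', has growth $\phi^{2q}$ with the right prefactor. You yourself defer this as the remaining ``delicate work'', so the main line of the proposal stops short of the statement.

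For the record, the paper produces $\phi$ by a purely combinatorial count rather than a recursion in $q$: it expands $\e{(\sum_i Y_i)^{2q}}$ multinomially, observes that centering kills every term containing a factor with exponent exactly $1$, bounds each surviving product of moments uniformly, and counts the survivors. Compositions of $2q$ into $k$ nonzero parts all $\ge 2$ number $\binom{2q-k-1}{k-1}$, and $\sum_k\binom{2q-k-1}{k-1}=F(2q-1)\le\phi^{2q}$ is the classical Fibonacci-diagonal identity; combined with $\binom{m}{k}\le m^q$ (possible since $k\le q$) this yields the $m^q\phi^{2q}$ count and hence the constant. Your hypercontractivity shortcut, by contrast, is essentially a complete argument: $m^{-1}\sum_i Y_i$ lies in Wiener chaos of order at most $4$, so $\norm[1]{m^{-1}\sum_i Y_i}_{2q}\le(2q-1)^2\norm[1]{m^{-1}\sum_i Y_i}_2\le 8\sqrt2\,q^2\tr\Sigma^2/\sqrt m$, which proves the lemma with $8\sqrt2\approx 11.3$ in place of $4\sqrt2\phi\approx 9.2$ --- a slightly worse constant, but entirely sufficient for Proposition~\ref{prop:conc_sqrt_t_trsigma2} after adjusting the numerical factor there. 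If that is the proof you intend, state the lemma with that constant; as written, the route meant to deliver the stated constant is incomplete.
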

We deduce from this lemma a concentration inequality for $\widehat{T}$.
\begin{proposition}\label{prop:conc_t_trsigma2}
Let \GB{$(X_i)_{1 \leq i \leq n}$, $n\geq 4$ be i.i.d. random vectors with distribution $\cN(\mu ,\Sigma)$.} Then for all $u \geq 0 $:
\begin{equation}\label{eq:conc_t_trsigma2}
  \prob{ \abs{\widehat{T} - \tr \Sigma^2 } \geq \GB{30}\frac{u^2\tr \Sigma^2}{\sqrt{n}} }   \leq e^4e^{-u}\,,
\end{equation}
where $\widehat{T}$ is defined in \eqref{eq:def_t_trsigma2}.
\end{proposition}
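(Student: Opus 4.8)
The plan is to convert the moment bound of Lemma~\ref{lem:moment_T} into an exponential tail bound via Markov's inequality, using the symmetrized representation~\eqref{eq:T_sym} of $\widehat{T}$ to reduce to averages of i.i.d.\ terms. Since $\widehat{T}$ is invariant under translation I would first assume $\mu=0$, so the $X_i$ are i.i.d.\ $\cN(0,\Sigma)$; put $m:=\lfloor n/4\rfloor\geq1$. For a fixed permutation $\sigma\in\mathfrak{S}_n$, set
\[
  W_\sigma:=\frac{1}{m}\sum_{i=1}^{m}\frac14\inner{X_{\sigma(4i)}-X_{\sigma(4i-2)},\,X_{\sigma(4i-1)}-X_{\sigma(4i-3)}}^2 ,
\]
so that $\widehat{T}=\frac1{n!}\sum_{\sigma\in\mathfrak{S}_n}W_\sigma$ by~\eqref{eq:T_sym}. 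For each $\sigma$ the $m$ summands of $W_\sigma$ are i.i.d.\ with mean $\tr\Sigma^2$ and are built from $4m$ pairwise distinct, hence independent, $\cN(0,\Sigma)$ vectors, so Lemma~\ref{lem:moment_T} applies and gives, for every integer $q\geq1$,
\[
  \e{\paren{W_\sigma-\tr\Sigma^2}^{2q}}\leq\paren{4\sqrt2\,\phi\,q^2\,\frac{\tr\Sigma^2}{\sqrt m}}^{2q}.
\]

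Since $\widehat{T}-\tr\Sigma^2=\frac1{n!}\sum_\sigma\paren{W_\sigma-\tr\Sigma^2}$ is a convex combination, Jensen's inequality for $t\mapsto t^{2q}$ transfers this bound directly to $\widehat{T}$, i.e.\
\[
  \e{\paren{\widehat{T}-\tr\Sigma^2}^{2q}}\leq\frac1{n!}\sum_{\sigma}\e{\paren{W_\sigma-\tr\Sigma^2}^{2q}}\leq\paren{4\sqrt2\,\phi\,q^2\,\frac{\tr\Sigma^2}{\sqrt m}}^{2q}.
\]
Then, by Markov's inequality, for all $t>0$ and integer $q\geq1$,
\[
  \prob{\abs{\widehat{T}-\tr\Sigma^2}\geq t}\leq t^{-2q}\,\e{\paren{\widehat{T}-\tr\Sigma^2}^{2q}}\leq\paren{\frac{4\sqrt2\,\phi\,q^2\,\tr\Sigma^2}{t\sqrt m}}^{2q}.
\]

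It remains to optimize over $q$. For $u\geq2$ I would take $q:=\lfloor u/2\rfloor\geq1$ and $t:=e\cdot4\sqrt2\,\phi\,q^2\,\tr\Sigma^2/\sqrt m$, which makes the right-hand side above equal to $e^{-2q}\leq e^{-(u-2)}=e^{2}e^{-u}\leq e^{4}e^{-u}$. Using $q^2\leq u^2/4$ together with the elementary inequality $m=\lfloor n/4\rfloor\geq n/8$ (valid for $n\geq4$), so that $1/\sqrt m\leq 2\sqrt2/\sqrt n$, one checks $t\leq 4e\phi\,u^2\,\tr\Sigma^2/\sqrt n\leq 30\,u^2\,\tr\Sigma^2/\sqrt n$ since $4e\phi<18$; by monotonicity in the threshold this gives the claimed bound for $u\geq2$. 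For $0\leq u<2$ the asserted inequality is vacuous, because $e^{4}e^{-u}>e^{2}>1$.

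Every step in this argument is elementary arithmetic; the real content sits in Lemma~\ref{lem:moment_T}, which is where I expect the main effort to go (estimating high-order Gaussian moments of the degenerate quadratic $\inner{X^1-X^3,\,X^2-X^4}^2$). Within the present proof the only delicate point is the moment-to-tail conversion: choosing the moment order $q\asymp u/2$ is precisely what turns the $q^2$ growth of the central moments into a tail of sub-Weibull type of order $1/2$ --- the $u^2$ scaling in the statement --- while the slack factor $e^{4}$ is exactly what absorbs the integer rounding of $q$ and the trivial small-$u$ regime without having to track constants finely.
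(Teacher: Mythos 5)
Your proof is correct and follows essentially the same route as the paper's: the symmetrized representation \eqref{eq:T_sym} plus convexity to transfer the moment bound of Lemma~\ref{lem:moment_T} to $\widehat{T}$, then Markov's inequality with a moment order $q$ chosen proportional to $u$. The only differences are cosmetic (you fix $q=\lfloor u/2\rfloor$ first and then the threshold $t$, while the paper fixes $t$ first and ends up with $q=\lfloor u/4\rfloor$; you use $\lfloor n/4\rfloor\geq n/8$ instead of $n/7$), and both yield the stated constants.
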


\begin{proof}
Using Lemma~\ref{lem:moment_T}, \eqref{eq:T_sym} and the convexity of the function $x \mapsto x^{2q}$, we can upper bound the moments of $\widehat{T}$:
\begin{equation} \label{eq:momthat}
  \e{\paren[1]{\widehat{T} - \tr \Sigma^2}^{2q}} \leq
  \GB{\paren{4\sqrt{2}\phi q^2 \frac{\tr \Sigma^2}{\sqrt{\lfloor n/4 \rfloor}}}^{2q}}.
\end{equation}
Let $t \geq 0$ and $q \in \mbn$, then \GB{by Markov's inequality}
  \begin{equation} \label{eq:devthat}
    \prob{ \abs[1]{ \widehat{T} - \tr \Sigma^2 } \geq t } \leq  t^{-2q} \e{\paren[1]{\widehat{T} - \tr \Sigma^2}^{2q}} \,.
  \end{equation}
  Let us choose $q$ as:
\[
  q = \left\lfloor \GB{\frac{e^{-1}}{2\sqrt{\phi}2^{\frac{1}{4}}}
    t^{\frac{1}{2}} \paren{\frac{ \tr \Sigma^2}{\sqrt{\lfloor n/4 \rfloor}}}^{-\frac{1}{2}}}
    \right\rfloor\,,
  \]
  so that~\eqref{eq:momthat}, \eqref{eq:devthat} entail
  \begin{equation*}
    \prob{ \abs[1]{ \widehat{T} - \tr \Sigma^2 } \geq t } \leq e^{-4q} \,.
  \end{equation*}
  Let us now take
\[
  t = \frac{e^2\GB{\sqrt{2}\phi}}{\GB{4}} u^2 \frac{ \tr \Sigma^2}{\sqrt{\lfloor n/4 \rfloor}}
  \leq \GB{30}\frac{u^2\tr \Sigma^2}{\sqrt{n}}\,,
  \]
\GB{  where we have used $\lfloor {n}/{4} \rfloor \geq {n}/{7}$ for $n\geq 4$};
  we obtain  that for all $u\geq 0$:
  \[
    \prob{ \abs[1]{\widehat{T} - \tr \Sigma^2 } \geq \GB{30}\frac{u^2\tr \Sigma^2}{\sqrt{n}} } \leq e^4e^{-u}\,.
  \]\qed
\end{proof}
\GB{Proposition~\ref{prop:conc_sqrt_t_trsigma2} directly follows from
Proposition~\ref{prop:conc_t_trsigma2} and Lemma~\ref{lem:sqrt_inequality}.}

\subsubsection{Bounded \JB{setting}.}

As in the Gaussian case, we first obtain a concentration inequality for $\widehat{T}$ and then using Lemma \ref{lem:sqrt_inequality}, we obtain one for ${\widehat{T}^{\nicefrac{1}{2}}}$. We will need the following classical Bernstein's inequality  (see for instance \cite{Ver19}, Exercise~2.8.5 for the version below) which gives an upper bound on the Laplace transform of the
sum of bounded random variables.

\begin{lemma}[Bernstein's inequality]\label{lem:lap_trans_bounded_sum}
Let $(X_i)_{1\leq i \leq m}$ be i.i.d. real centered random variables bounded by $B$ such that
\begin{equation*}
  \e{X_1^2} \leq \sigma^2\,.
\end{equation*}
Then for all $t <3/B$:
\begin{equation*}
  \log \paren{\e[1]{e^{t \sum X_i } } }\leq \frac{1}{2}\frac{m\sigma^2 t^2}{1-Bt/3}\,.
\end{equation*}
\end{lemma}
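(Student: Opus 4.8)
The plan is to reduce to a single summand and then control its Laplace transform by a power-series expansion. By independence and the i.i.d.\ assumption, $\log\e{e^{t\sum_{i=1}^m X_i}} = m\log\e{e^{tX_1}}$, so it suffices to prove $\log\e{e^{tX_1}} \leq \tfrac12\sigma^2 t^2/(1 - Bt/3)$. I would carry this out for $0 \leq t < 3/B$, which is the range actually invoked when the lemma is combined with Lemma~\ref{lem:birge} (and the case $t=0$ is trivial).

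First I would expand the exponential and take expectations term by term, using $\e{X_1}=0$ to cancel the linear term:
\[
\e{e^{tX_1}} = 1 + \sum_{k\geq 2} \frac{t^k}{k!}\,\e{X_1^k}\,.
\]
The one substantive estimate is the moment bound: since $|X_1|\leq B$ almost surely, for every $k\geq 2$ one has $|\e{X_1^k}| \leq B^{k-2}\,\e{X_1^2} \leq \sigma^2 B^{k-2}$. Inserting this and using the elementary inequality $k!\geq 2\cdot 3^{k-2}$ (valid for $k\geq 2$, immediate by induction since the ratio of consecutive factorials is $\geq 3$ from $k=3$ on), I obtain
\[
\e{e^{tX_1}} \leq 1 + \frac{\sigma^2}{B^2}\sum_{k\geq 2}\frac{(tB)^k}{k!} \leq 1 + \frac{\sigma^2 t^2}{2}\sum_{k\geq 2}\left(\frac{tB}{3}\right)^{k-2} = 1 + \frac{\sigma^2 t^2}{2\left(1 - tB/3\right)}\,,
\]
the geometric series being summable precisely because $tB<3$. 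I would then finish with $\log(1+x)\leq x$ for $x\geq 0$, giving $\log\e{e^{tX_1}} \leq \sigma^2 t^2/(2(1-tB/3))$, and multiply through by $m$.

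There is no real obstacle here: this is a textbook computation, which is why the statement merely cites \citet{Ver19}, Exercise~2.8.5. The only points that require (minimal) care are the factorial lower bound $k!\geq 2\cdot 3^{k-2}$, the moment bound $|\e{X_1^k}|\leq \sigma^2 B^{k-2}$ coming from boundedness, and checking that the series manipulation stays within the stated range $t<3/B$; everything else is routine.
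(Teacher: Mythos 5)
Your proof is correct and is the standard power-series/MGF argument that the cited reference (\citet{Ver19}, Exercise~2.8.5) has in mind; the paper itself supplies no proof for this lemma, so there is nothing to diverge from. Your restriction to $0\leq t<3/B$ is the right reading of the statement (and is all that is used downstream via Lemma~\ref{lem:birge}), since for $t<0$ the bound as literally written can fail; the moment bound, the estimate $k!\geq 2\cdot 3^{k-2}$, and the geometric summation are all handled correctly.
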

\GB{Via Bernstein's inequality we obtain the following result.}
\begin{proposition}\label{prop:conc_t_trsigma2_bounded}
\GB{Let $(X_i)_{1\ \leq i \leq n}$, $n\geq 4$ be i.i.d. Hilbert-valued random variables with norm bounded by $L$ and covariance $\Sigma$, and $\wh{T}$ defined by~\eqref{eq:def_t_trsigma2}}. Then for all $t \geq 0 $:
\begin{equation}\label{eq:conc_t_trsigma2_bounded}
      \prob{ \abs{\widehat{T} - \tr \Sigma^2 } \geq \GB{8}L^2 \sqrt{\frac{\tr \Sigma^2 t}{n }} + \frac{\GB{10}L^4t}{ n } } \leq \GB{2} e^{-t}\,.
\end{equation}
where $\widehat{T}$ is defined in \eqref{eq:def_t_trsigma2}.
\end{proposition}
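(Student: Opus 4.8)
The plan is to imitate the Gaussian case (Proposition~\ref{prop:conc_t_trsigma2}), but to replace the moment/Markov argument by a direct Bernstein bound, which is legitimate here because the summands are bounded. First I would invoke the symmetrised representation~\eqref{eq:T_sym}, which writes $\wh{T} = \frac{1}{n!}\sum_{\sigma\in\mathfrak{S}_n} S_\sigma$, where $S_\sigma := \frac1m\sum_{i=1}^{m} Z_i^{(\sigma)}$ with $m:=\lfloor n/4\rfloor$ and $Z_i^{(\sigma)} := \frac14\inner{X_{\sigma(4i)}-X_{\sigma(4i-2)},\,X_{\sigma(4i-1)}-X_{\sigma(4i-3)}}^2$. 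Since the $X_j$ are i.i.d., for each fixed $\sigma$ the $m$ terms making up $S_\sigma$ are i.i.d. (they use disjoint blocks of four indices), and moreover $S_\sigma$ has the same law as $S_{\mathrm{id}}$ for every $\sigma$, because $(X_{\sigma(1)},\ldots,X_{\sigma(n)})$ has the same law as $(X_1,\ldots,X_n)$. Combining this with Jensen's inequality applied to the average over $\sigma$ (convexity of $\lambda\mapsto e^{\lambda x}$) would give, for every real $\lambda$, $\e{e^{\lambda(\wh{T}-\tr\Sigma^2)}}\le \frac{1}{n!}\sum_\sigma \e{e^{\lambda(S_\sigma-\tr\Sigma^2)}} = \e{e^{\lambda(S_{\mathrm{id}}-\tr\Sigma^2)}}$, and symmetrically with $-\lambda$. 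Thus the whole problem reduces to a one-dimensional concentration bound for an average of $m$ i.i.d. bounded real variables. (As $\wh{T}$ depends on the $X_i$ only through their differences, I may assume $\mu=0$ throughout.)

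Next I would record three elementary facts about the generic summand $Z := \frac14\inner{A,B}^2$, where $A := X^1-X^3$, $B := X^2-X^4$ and $X^1,\ldots,X^4$ are independent copies of $X_1$. The vectors $A,B$ are independent, centred, with covariance $2\Sigma$, so conditioning on $B$ gives $\e{\inner{A,B}^2}=\e{B^\top(2\Sigma)B}=4\tr\Sigma^2$; in particular $\e{Z}=\tr\Sigma^2$ (the already-noted unbiasedness). Boundedness: $\norm{A},\norm{B}\le 2L$ gives $\abs{\inner{A,B}}\le 4L^2$, hence $0\le Z\le 4L^4$, and since $\tr\Sigma^2 \le (\tr\Sigma)^2 \le L^4$, the centred summand satisfies $\abs{Z-\tr\Sigma^2}\le 4L^4=:B_*$. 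Variance: $\var{Z}\le\e{Z^2}=\tfrac1{16}\e{\inner{A,B}^4}\le\tfrac1{16}(4L^2)^2\,\e{\inner{A,B}^2}=4L^4\tr\Sigma^2=:\sigma_*^2$.

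With these in hand I would apply Bernstein's inequality (Lemma~\ref{lem:lap_trans_bounded_sum}) to the i.i.d. centred variables $W_i := Z_i^{(\mathrm{id})}-\tr\Sigma^2$, bounded by $B_*$ with $\e{W_1^2}\le\sigma_*^2$; rescaling (taking the parameter of the lemma equal to $\lambda/m$) yields $\log\e{e^{\lambda(S_{\mathrm{id}}-\tr\Sigma^2)}}\le\frac{(\sigma_*^2/2m)\lambda^2}{1-(B_*/3m)\lambda}$ for $0\le\lambda<3m/B_*$, hence the same bound for $\log\e{e^{\lambda(\wh{T}-\tr\Sigma^2)}}$ by the Jensen step, and the identical bound for $-(\wh{T}-\tr\Sigma^2)$ (run Bernstein with $-W_i$). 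Feeding this into Lemma~\ref{lem:birge} with $a=\sqrt{\sigma_*^2/2m}$ and $b=B_*/3m$, once for each sign, and taking a union bound, gives $\prob{\abs{\wh{T}-\tr\Sigma^2}\ge 2\sqrt{\sigma_*^2 t/2m}+B_* t/3m}\le 2e^{-t}$ for all $t\ge 0$. Finally, using $\lfloor n/4\rfloor\ge n/7$ for $n\ge4$, I would simplify $2\sqrt{\sigma_*^2/2m}=2\sqrt2\,L^2\sqrt{\tr\Sigma^2/m}\le 2\sqrt{14}\,L^2\sqrt{\tr\Sigma^2/n}\le 8L^2\sqrt{\tr\Sigma^2/n}$ and $B_*/3m=4L^4/3m\le 28L^4/3n\le 10L^4/n$, which is exactly~\eqref{eq:conc_t_trsigma2_bounded}.

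There is no genuinely hard step here. The only points requiring care are the symmetrisation/Jensen reduction — checking that the permutation average may be pulled inside the exponential and that all $S_\sigma$ are identically distributed, both immediate from the i.i.d. assumption — and the bookkeeping of the constants $B_*=4L^4$ and $\sigma_*^2=4L^4\tr\Sigma^2$ together with the bound $\lfloor n/4\rfloor\ge n/7$, so as to land precisely on the stated numerical constants $8$ and $10$.
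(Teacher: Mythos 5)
Your proposal is correct and follows essentially the same route as the paper: the symmetrised representation~\eqref{eq:T_sym} combined with convexity of the exponential to reduce to a single block average of $\lfloor n/4\rfloor$ i.i.d.\ bounded summands, the bounds $\abs{Z-\tr\Sigma^2}\le 4L^4$ and $\var{Z}\le 4L^4\tr\Sigma^2$, then Lemma~\ref{lem:lap_trans_bounded_sum} followed by Lemma~\ref{lem:birge} and $\lfloor n/4\rfloor\ge n/7$, landing on the same constants $8$ and $10$. Your explicit two-sided treatment (running Bernstein with $-W_i$ and taking a union bound to get the factor $2$) is if anything slightly more careful than the paper's write-up.
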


\begin{proof}
\GB{  Let $X$, $X'$, $Y$, $Y'$ be i.i.d. Hilbert-valued random vectors of expectation $\mu$, covariance $\Sigma$ and with norm
  bounded by $L$,
  and $Z:=\inner{X-Y,X'-Y'}^2/4$.
  Then it holds $0 \leq Z \leq 4L^4$, $\e{Z} = \tr\Sigma^2$ and
\begin{align*}
  \abs{
  Z - \e{Z} } & \leq 4 L^4;\\
  \var{Z} &
  \leq 4L^4 \e{Z} = 4L^4\tr\Sigma^2.
\end{align*}}
Now using the convexity of the exponential function, \eqref{eq:T_sym} and then Lemma~\ref{lem:lap_trans_bounded_sum}, we can upper bound the Laplace transform of $\widehat{T}$ as follows:
\begin{align*}
  \log\paren{\e[1]{e^{t\widehat{T}}}} &\leq \frac{1}{2\lfloor n/4 \rfloor}\frac{ 4L^4\tr \Sigma^2  t^2}{1-
                                     \GB{4}L^4t/(3\lfloor n/4 \rfloor)}\,,
\end{align*}
for all $t$ such that the right-hand sise is well defined, i.e. the denominator is strictly positive.
Now using Lemma~\ref{lem:birge}, \GB{and $\lfloor n/4 \rfloor \geq n/7$ for $n\geq 4$,} for all $t \geq 0$ it holds
  \begin{equation}\label{eq:probdevThat}
    \prob{ \abs[2]{\widehat{T} - \tr \Sigma^2 } \geq \GB{8}L^2 \sqrt{\frac{\tr \Sigma^2 t}{n }} + \frac{\GB{10}L^4t}{ n } } \leq 2 e^{-t}\,.
  \end{equation}

\end{proof}

\paragraph{\bf Proof of Proposition \ref{prop:conc_sqrt_t_trsigma2_bounded}.}
\GB{Assuming the event entering into~\eqref{eq:probdevThat} holds,}
we will use the inequalities of Lemma \ref{lem:sqrt_inequality}:
  \begin{align*}
    \sqrt{\widehat{T}} - \sqrt{\tr \Sigma^2} &\leq  \sqrt{\tr \Sigma^2 + \GB{8}L^2 \sqrt{\frac{\tr \Sigma^2 t}{n }}} - \sqrt{\tr \Sigma^2} + \sqrt{ \frac{\GB{10}L^4t}{ n }} \\
    & \leq \GB{4}L^2 \sqrt{\frac{ t}{n }} + L^2 \sqrt{ \frac{\GB{10}t}{n}} \leq \GB{8}L^2 \sqrt{\frac{t}{n }} \,.
  \end{align*}
  For the other side, we proceed analogously:
  \begin{align*}
    \sqrt{\widehat{T}} - \sqrt{\tr \Sigma^2}&  \geq \sqrt{\paren{\tr \Sigma^2 - \GB{8}L^2 \sqrt{\frac{\tr \Sigma^2 t}{n }}}_+} - \sqrt{\tr \Sigma^2}
                                              - \sqrt{ \frac{\GB{10}L^4t}{ n }} &\\
    & \geq - \GB{8}L^2 \sqrt{\frac{ t}{n }} - L^2 \sqrt{ \frac{\GB{10}t}{n}} \geq -\GB{12}L^2 \sqrt{\frac{t}{n }}\,.
  \end{align*}\qed

}

\subsection{Additional proofs}
\label{se:supp}

 \paragraph{\bf Proof of Lemma \ref{lem:sqrt_inequality}.}
 This Lemma completes the Lemma 6.1.3 of \citet{BlaCarGut18}. This is its complete proof.

 Let $a$ in $\mbr_+$, it is well known that for  $b \geq -a^2$:
 \begin{equation*}
  a - \sqrt{|b|}  \leq \sqrt{a^2 + b} \leq a+ \sqrt{|b|}\,.
 \end{equation*}
On the other hand, suppose that $b \geq 0$, the Taylor expansion of the function $b \mapsto \sqrt{a^2+ b} - a$ gives that there exists $c \in (0, b)$ such that:
\begin{equation*}
    \sqrt{a^2 + b} - a = \frac{b}{2 \sqrt{a^2+ c}} \leq \frac{b}{2a}\,.
\end{equation*}
Suppose now that $0 \geq b \geq -a^2$, then
\begin{equation*}
  \sqrt{a^2+b } \geq a + \frac{b}{a} \Leftrightarrow b \geq 2b + \frac{b^2}{a^2} \Leftrightarrow b \geq -a^2\,.
\end{equation*}
The equation \eqref{eq:sqrt_inequality} is still true when $b < -a^2$ because then:
\begin{equation*}
  -a \geq - \sqrt{|b|} \geq - \frac{|b|}{a}\,.
\end{equation*}\qed

\paragraph{\bf Proof of Proposition~\ref{prop:lapl_trans_gauss_vector_inner}.}
Let $g$ be a standard Gaussian random vector in $\mbr^d$, and $U^TDU$ be the singular value decomposition of the matrix $S^{\nicefrac{1}{2}}\Sigma S^{\nicefrac{1}{2}}$ where $D = \mathrm{diag}(\lambda_,\ldots,\lambda_d)$. Then we have the following equalities in distribution
\begin{equation*}
    Y^T\Sigma Y \overset{\mathrm{dist}}{\sim} g^T S^{\frac{1}{2}} \Sigma S^{\frac{1}{2}} g \overset{\mathrm{dist}}{\sim} g^T U^T D U g \overset{\mathrm{dist}}{\sim} g^T D g\,.
\end{equation*}
The last equality is a consequence of the invariance by rotation of Gaussian vectors. Then for $ t < 1/\sqrt{\|\Sigma\|_{\mathrm{op}}\|S\|_{\mathrm{op}}}$:
\begin{align*}
  \e{e^{t\inner{X,Y}}} &= \e{e^{\frac{t^2 \|\Sigma^{\frac{1}{2}}Y \|^2}{2}}} = \e{ e^{\frac{t^2g^T D g}{2}}} = \e{ \exp\paren{ \frac{t^2}{2} \sum_{i=1}^{d} \lambda_i g_i^2 }}\,.
\end{align*}
Using the independence of the coordinates and that $- \log ( 1- x) \leq \frac{x}{1-x} \leq \frac{x}{1-\sqrt{x}}$ for $x <1$
(the first inequality can easily be checked by termwise power series comparison), we obtain:
\begin{multline*}
  \log \paren{ \e{e^{t\inner{X,Y}}} } = \sum_{i=1}^{n} - \frac{1}{2} \log\paren{ 1  - t \sqrt{\lambda_i} } \\
  \leq \sum_{i=1}^{n} \frac{1}{2} \frac{t^2\lambda_i}{1 - t^2  \lambda_i} \leq \frac{1}{2} \frac{t^2 \tr (S^{\frac{1}{2}} \Sigma S^{\frac{1}{2}})}{1 - t \|S^{\frac{1}{2}} \Sigma S^{\frac{1}{2}}\|_{\mathrm{op}}^{\frac{1}{2}}}\,.
\end{multline*}
We conclude using that $\tr (S^{\frac{1}{2}} \Sigma S^{\frac{1}{2}}) = \tr (\Sigma S)$ and that $ \|S^{\frac{1}{2}} \Sigma S^{\frac{1}{2}}\|_{\mathrm{op}} \leq \|S\|_{\mathrm{op}} \|\Sigma\|_{\mathrm{op}}$. \qed

\paragraph{\bf Proof of Corollary~\ref{cor:conc_gauss_classic}.}
We use the representation $X \stackrel{\mathrm{dist}}{\sim} (\Sigma^{\frac{1}{2}}g+ \mu )$, where $g$ is a standard Gaussian random variable.
We then have
\begin{equation*}
 \|X \|_d \stackrel{\mathrm{dist}}{\sim} \| \Sigma^{\frac{1}{2}}g + \mu \|_d = f(g)\,,
\end{equation*}
where for $y \in \mbr^d$:
  \begin{equation*}
    f(y) =  \norm[1]{ \Sigma^{\frac{1}{2}}y + \mu }_d\,.
  \end{equation*}
  This function $f$ is Lipschitz with constant $\|\Sigma^{\frac{1}{2}}\|_{\mathrm{op}}$. We conclude using Theorem~\ref{thm:Gaussian_lipschitz_conc} and Jensen's inequality:
  \begin{equation*}
    \e[1]{\|X\|_d} \leq \sqrt{\|\mu\|_d^2+ \tr \Sigma}\,.
  \end{equation*}\qed

\paragraph{\bf Proof of Corollary~\ref{cor:conc_norm_bounded}.}
We apply Theorem~\ref{thm:bousquet}, with $\varepsilon = 1$ and the set of functions $ \cF = \{ f_u \}_{\|u\|_\cH =1}$ where $f_u: x \in \cH \mapsto \inner{x,u }_\cH$ for $u\in \cH$. We can find a countable subset of the unit sphere because $\cH$ is separable. Then
\begin{equation*}
  Z = \sup_{ \|u\|_\cH =1 } \sum_{i=1}^{n} \inner{X_i- \mu, u}_\cH = \norm[3]{\sum_{i=1}^{n} X_i - \mu }_\cH\,.
\end{equation*}
We conclude using that for all $u$ in the unit sphere of $\cH$,  $\var{\inner{X_i - \mu, u}_\cH} \leq \|\Sigma\|_{\mathrm{op}}$ and $\abs{ \inner{X_i- \mu, u}_\cH} \leq 2L$ a.s. We use Jensen's inequality to upper bound the expectation: $\e{Z} \leq (n \tr \Sigma)^{\frac{1}{2}}$.
\qed

\paragraph{\bf Proof of  Lemma \ref{lem:upper_bound_expectation_sigma_tilde}.}
  We upper bound the operator norm with the Frobenius norm. We denote in this proof $Z_i := X_i - \mu$. It holds:
\begin{multline*}
  \e{ \norm[1]{\Sigma - \wt{\Sigma} }_{\mathrm{op}}}\\
  \begin{aligned}
  & \leq \e{ \sqrt{\tr \paren[1]{\Sigma - \wt{\Sigma}}^2}} \\
  & \leq \paren{ \e[3]{ \tr \paren[3]{ \frac{1}{n^2} \paren[2]{\sum_{i} (Z_iZ_i^T)^2 
    +\sum_{i\neq j} Z_iZ_i^TZ_jZ_j^T}- \wt{\Sigma} \Sigma - \Sigma \wt{\Sigma}+ \Sigma^2 }}}^{\frac{1}{2}} \\
  & = \paren{ \frac{\e{\|Z\|^4}}{n} - \frac{\tr \Sigma^2}{n} }^{\frac{1}{2}} \\
  & = \sqrt{\frac{\var{\|Z\|^2}}{n}}\leq \frac{2L \sqrt{\tr \Sigma}}{\sqrt{n}}\,.
\end{aligned}
\end{multline*}
\qed

\JB{
 \paragraph{\bf Proof Lemma \ref{lem:moment_T}.} First let us remark that if $X$ and $X'$ are independent $\cN(0,\Sigma)$ Gaussian vectors, then
\begin{equation*}
  \inner{ X , X' } \overset{\mathrm{dist}}{\sim} \sum_{i=1}^{d} \lambda_i g_i g_i'\,,
\end{equation*}
where $g_i$ and $g_i'$ are independent standard Gaussian random variables and the $\lambda_i$s are the eigenvalues of $\Sigma$. Then for $q \in \mbn$, \GB{recalling $\e[1]{g_i^{2q}} = (2q!)/(2^q q!)$,}
\begin{align*}
  \e{\inner{
   X, X' }^{2q} } &= \sum_{p_1+\ldots+p_d = q } \binom{2q}{2p_1,\ldots,2p_d} \prod_{i=1}^{d} (\lambda_i)^{2p_i}
 \paren{ \frac{(2p_i)!}{2^{p_i} p_i!}}^2 \\
 & \leq (2q)! \sum_{p_1+\ldots+p_d = q}   \prod_{i=1}^{d} (\lambda_i^2)^{p_i}  \\
 & \leq (2q)! ( \tr \Sigma^2)^q,
 \end{align*}
where we have used $(2p)! \leq 2^{2p} p!^2$.
Using this bound,
we upper bound the moments of the $Z_i$s:
\begin{equation*}
  \abs[1]{\e{Z^q_i}} =  2^{-2q} \e{\inner{X_i^1-X_i^3,X_i^2-X_i^4 }^{2q}}
  \leq (2q)! (\tr \Sigma^2)^q\,.
\end{equation*}
We now upper bound the moments of $Z_i -\tr \Sigma$. Let $Z'_i$ be an independent copy of $Z_i$, then
since $\e{Z'_i} = \tr \Sigma^2$, by Jensen's inequality
\begin{align*}
  \e{ \paren{ Z_i - \tr \Sigma^2 }^{2q}} & \leq  \e{ \paren{ Z_i - Z'_i}^{2q}} \leq 2^{2q} \e{Z_i^{2q}} \leq (4q)!(
                                           \GB{2} \tr \Sigma^2)^{2q}\,.
  \end{align*}
  For the odd moments we use that the function $(\cdot)^{2q+1}$ is increasing:
   \begin{align*}
     -(\tr \Sigma^2)^{2q+1} \leq  \e{ \paren{ Z_i - \tr \Sigma^2 }^{2q+1}}\leq  \e{  Z_i^{2q+1} }\leq (4q+2)! (\tr \Sigma^2)^{2q+1}\,,
   \end{align*}
   so for all $q \geq 0 $:
   \begin{equation}
     \left| \e{ \paren{Z_i - \tr \Sigma^2 }^{q}} \right| \leq  (2q)! (\GB{2}\tr \Sigma^2)^q\,.
   \end{equation}
   It remains to upper bound the moments of the sum:
    \begin{multline*}
      \e{ \paren{\frac{1}{m} \sum_{i=1}^{m} Z_i^2 - \tr \Sigma^2 }^{2q} }\\
      \begin{aligned}
        & = \frac{1}{m^{2q}}\sum_{\substack{ p_1+\ldots+p_m = 2q \\ p_i \neq 1 }} \binom{2q}{p_1,\ldots,p_m} \prod_{i=1}^{m} \e{ \paren{Z_i - \tr \Sigma^2 }^{p_i} } \\
       & \leq \frac{1}{m^{2q}}\sum_{\substack{ p_1+\ldots+p_m = 2q \\ p_i \neq 1 }} \frac{(2q)!}{p_1!\ldots p_m!}  \prod_{i=1}^{m}  (2p_i)! (\GB{2}\tr \Sigma^2)^{p_i} \\
       & \leq (2q)! \paren{\frac{\GB{2}\tr \Sigma^2}{m}}^{2q} (2q)^{2q} \sum_{\substack{ p_1+\ldots+p_m = 2q \\ p_i \neq 1 }} 1.
     \end{aligned}
     \end{multline*}
    \JB{Let us count the number of terms in this last sum. Consider first that we have $k$ non-null terms $(p_{i_1},\ldots , p_{i_k})$. Their sum is equal to $2q$ but because these terms are strictly greater than $1$, we also have that $(p_{i_1}-2)+\ldots + (p_{i_k}-2) = 2q-2k$, where all terms of this sum are nonnegative. The number of $k$-partitions of $2q-2k$ is $\binom{(2q-2k)+(k-1)}{k-1}= \binom{2q-k-1}{k-1} $  and then the number of terms in the sum is equal to:}
    \begin{align*}
      \sum_{k=0}^{m} \binom{m}{k} \binom{2q-k-1}{k-1} &= \sum_{k=0}^{m \wedge q} \binom{m}{k} \binom{2q-k-1}{k-1} \\
       &\leq m^q  \sum_{k=0}^{ q} \binom{2q-k-1}{k-1} = m^q F(2q-1) \leq m^q \phi^{2q}\,,
    \end{align*}
where $F(\cdot)$ is the Fibonacci sequence and $\phi = (1+ \sqrt{5})/2$ is the golden ratio. So using that $(2q)! \leq (2q)^qq^q$ we obtain that
    \begin{equation}
        \e{ \paren{ \GB{\frac{1}{m}} \sum_{i=1}^{m} Z_i - \tr \Sigma^2 }^{2q} } \leq  (\GB{2} \phi^2)^q \paren{\frac{ \tr \Sigma^2}{\sqrt{m}}}^{2q} (2q)^{4q}.
    \end{equation}\qed
    }

  {\bf Acknowledgements.}
  GB acknowledges support from: Deutsche Forschungsgemeinschaft (DFG) - SFB1294/1 - 318763901;
  Agence Nationale de la Recherche (ANR), ANR-19-CHIA-0021-01 ``BiSCottE''; the
  Franco-German University (UFA) through the binational Doktorandenkolleg CDFA 01-18.

\printbibliography

\end{document}